\newcommand{\HGCN}{\name}
\setlist[enumerate]{noitemsep, topsep=0.5\topsep}
\setlist[description]{noitemsep, topsep=0.5\topsep}
\setlist[itemize]{noitemsep, topsep=0.5\topsep}
\definecolor{c1}{RGB}{255,127,0}
\definecolor{c2}{RGB}{127,0,127}
\newcommand{\xhdr}[1]{{\noindent\bfseries #1}.}
\newcommand{\name}{\textsc{HGCN}\xspace}
\newcommand{\hide}[1]{}
\newcommand{\cut}[1]{}
\begin{document}

\title{Hyperbolic Graph Convolutional Neural Networks}

 \author[$\ddagger$]{Ines~Chami$^*$}
 \author[$\dagger$]{Rex~Ying\thanks{Equal contribution}\ \ }
 \author[$\dagger$]{Christopher~R{\'e}}
 \author[$\dagger$]{Jure~Leskovec}
 \affil[$\dagger$]{Department of Computer Science, Stanford University}
 \affil[$\ddagger$]{Institute for Computational and Mathematical Engineering, Stanford University}
 \affil[ ]{\footnotesize{\texttt{\{chami, rexying, chrismre, jure\}@cs.stanford.edu}}}

\maketitle

\begin{abstract}

Graph convolutional neural networks (GCNs) embed nodes in a graph into Euclidean space, which has been shown to incur a large distortion when embedding real-world graphs with scale-free or hierarchical structure. Hyperbolic geometry offers an exciting alternative, as it enables embeddings with much smaller distortion. 
However, extending GCNs to hyperbolic geometry presents several unique challenges because it is not clear how to define neural network operations, such as feature transformation and aggregation, in hyperbolic space. 
Furthermore, since input features are often Euclidean, it is unclear how to transform the features into hyperbolic embeddings with the right amount of curvature.
Here we propose Hyperbolic Graph Convolutional Neural Network (\HGCN), the first inductive hyperbolic GCN that leverages both the expressiveness of GCNs and hyperbolic geometry to learn inductive node representations for hierarchical and scale-free graphs.
We derive GCNs operations in the hyperboloid model of hyperbolic space and map Euclidean input features to embeddings in hyperbolic spaces with different trainable curvature at each layer.
Experiments demonstrate that \HGCN learns embeddings that preserve hierarchical structure, and leads to improved performance when compared to Euclidean analogs, even with very low dimensional embeddings: compared to state-of-the-art GCNs, \HGCN achieves an error reduction of up to 63.1\% in ROC AUC for link prediction and of up to 47.5\% in F1 score for node classification, also improving state-of-the art on the Pubmed dataset.

\hide{
Graph convolutional networks (GCNs) for representation learning on graphs provide expressive, inductive embeddings, while also incorporating node features and graph structure information.
However, existing GNNs map graph nodes to Euclidean embeddings, which have been shown to incur a large distortion when embedding realistic graphs with scale free or hierarchical structure.
Recent progress in hyperbolic embeddings shows that hyperbolic geometry offers an exciting alternative to Euclidean geometry that enables embeddings with much smaller distortion for tree-like graph structures.
%
%
Here, we propose Hyperbolic Graph Convolutional Neural Networks (\HGCN), the first inductive hyperbolic GNN that leverages both the expressiveness of GNNs and hyperbolic geometry to learn better representations for tree-like and scale-free graphs in inductive settings.
Combining the best of GCN and hyperbolic geometry presents several challenges: neural network operations including feature transformations and aggregation need to be redefined in hyperbolic space. 
Moreover, since input features are often Euclidean, we need mappings to transform the features into hyperbolic embeddings with the right amount of curvature.
%
%
We derive components of graph convolution operations in the hyperboloid model of hyperbolic space, and map Euclidean input features to embeddings in a hyperbolic space with different trainable curvature at each layer.
%
Our experiments demonstrate that \HGCN learns embeddings that preserve hierarchical structure, and achieves significantly better label-class separation compared to its Euclidean analog, even with very low dimensional embeddings.
On a variety of hyperbolic-like public networks, \HGCN achieves an average of $14\%$ improvement in ROC AUC for link prediction and $12\%$ improvement in F1 score for node classification, compared to state-of-the-art GNNs, improving state-of-the art on the Pubmed dataset.
}
\end{abstract}

\section{Introduction}
\hide{
Machine learning on graphs is a problem of high interest with wide ranging real-world applications from biology, social network analysis and recommender systems \cite{zitnik2018modeling,ying2018graph}. 
\cut{
Given a graph representing relationships between nodes, the goal in graph representation learning is to learn embeddings that preserve the graph structure and can be used in downstream classification tasks. In other words, we aim at learning dense and low-dimensional representations such that two nodes that are similar in the original graph have similar representations in the embedding space. Two aspects, model architecture and embedding geometry both are crucial to learning high-quality representations for graphs.
} 
Graph neural networks (GNNs), a class of deep models based on neighborhood aggregation, have achieved state-of-the-art performance on multiple benchmarks for LP \cite{schlichtkrull2018modeling,grover2018graphite,kipf2016variational,ying2018graph,zhang2018link}, and NC  \cite{kipf2016semi,hamilton2017inductive,velickovic2017graph,wu2019simplifying}, due to their power to capture neighborhood feature distribution and structural properties of nodes.
} 


Graph Convolutional Neural Networks (GCNs) are state-of-the-art models for representation learning in graphs, where nodes of the graph are embedded into points in Euclidean space~\cite{hamilton2017inductive,kipf2016semi,velickovic2017graph,xu2018powerful}. 
However, many real-world graphs, such as protein interaction networks and social networks, often exhibit scale-free or hierarchical structure~\cite{clauset2008hierarchical,Zitnik2019} and Euclidean embeddings, used by existing GCNs, have a high distortion when embedding such graphs~\cite{chen2013hyperbolicity,ravasz2003hierarchical}.
In particular, scale-free graphs have tree-like structure and in such graphs the graph volume, defined as the number of nodes within some radius to a center node, grows exponentially as a function of radius. 
However, the volume of balls in Euclidean space only grows polynomially with respect to the radius, which leads to high distortion embeddings~\cite{de2018representation,sarkar2011low}, while in hyperbolic space, this volume grows exponentially.  


Hyperbolic geometry offers an exciting alternative as it enables embeddings with much smaller distortion when embedding scale-free and hierarchical graphs. 
However, current hyperbolic embedding techniques only account for the graph structure and do not leverage rich node features. 
For instance, Poincar\'e embeddings \cite{nickel2017poincare} capture the hyperbolic properties of real graphs by learning shallow embeddings with hyperbolic distance metric and Riemannian optimization. 
Compared to deep alternatives such as GCNs, shallow embeddings do not take into account features of nodes, lack scalability, and lack inductive capability. Furthermore, in practice, optimization in hyperbolic space is challenging. 


While extending GCNs to hyperbolic geometry has the potential to lead to more faithful embeddings and accurate models, it also poses 
many hard challenges: 
(1) Input node features are usually Euclidean, and it is not clear how to optimally use them as inputs to hyperbolic neural networks; 
(2) It is not clear how to perform set aggregation, a key step in message passing, in hyperbolic space; And 
(3) one needs to choose hyperbolic spaces with the right curvature at every layer of the GCN.

Here we solve the above challenges and propose {\em Hyperbolic Graph Convolutional Networks (\name)}\footnote{Project website with code and data: \href{http://snap.stanford.edu/hgcn}{http://snap.stanford.edu/hgcn}}, a class of graph representation learning models that combines the expressiveness of GCNs and hyperbolic geometry to learn improved representations for real-world hierarchical and scale-free graphs in inductive settings:   
%
(1) We derive the core operations of GCNs in the hyperboloid model of hyperbolic space to transform input features which lie in Euclidean space into hyperbolic embeddings;
(2) We introduce a hyperbolic attention-based aggregation scheme that captures hierarchical structure of networks; 
(3) At different layers of \name we apply feature transformations in hyperbolic spaces of different trainable curvatures to learn low-distortion hyperbolic embeddings. 

The transformation between different hyperbolic spaces at different layers allows \name to find the best geometry of hidden layers to achieve low distortion and high separation of class labels. Our approach jointly trains the weights for hyperbolic graph convolution operators, layer-wise curvatures and hyperbolic attention to learn inductive embeddings that reflect hierarchies in graphs. 

\begin{figure}[t!]
    \centering
    \includegraphics[width=0.9\linewidth]{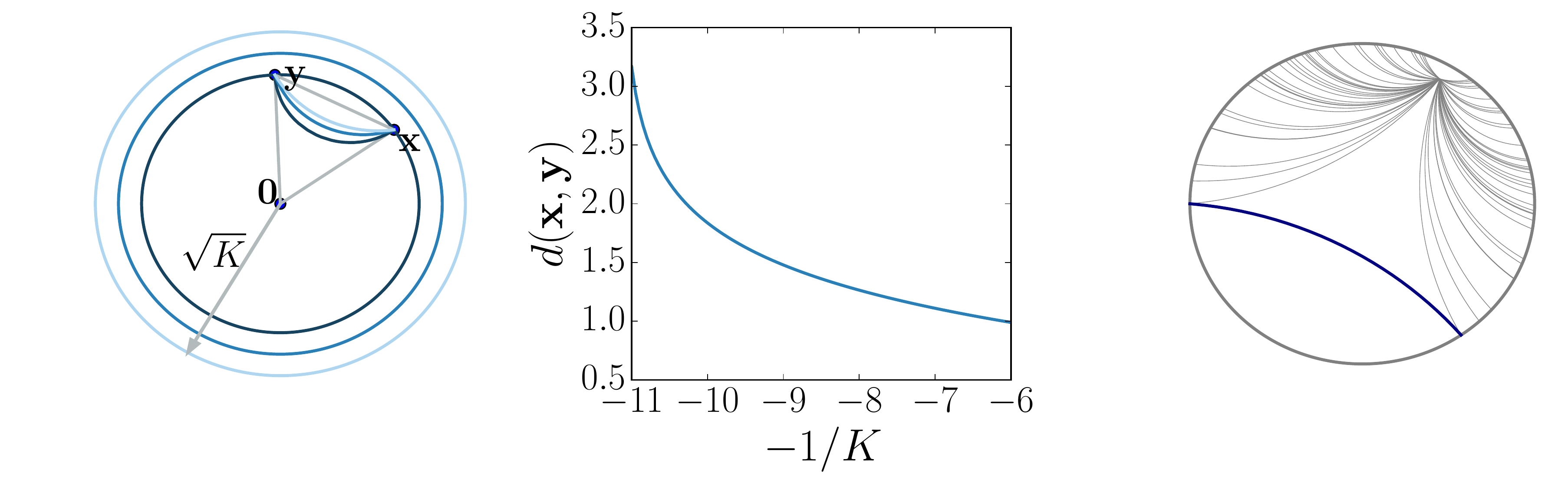}
    \caption{
Left: Poincar\'{e} disk geodesics (shortest path) connecting $\mathbf{x}$ and $\mathbf{y}$ for different curvatures. 
As curvature ($-1/K$) decreases, the distance between $\mathbf{x}$ and $\mathbf{y}$ increases, and the geodesics lines get closer to the origin.
Center: Hyperbolic distance vs curvature. 
Right: Poincar\'e geodesic lines. 
x}
  \label{fig:poincare_geodesics}
\end{figure}
\cut{
\begin{figure}[t!]
    \centering
    \begin{subfigure}[t]{0.33\textwidth}
        \centering
        \includegraphics[width=\linewidth]{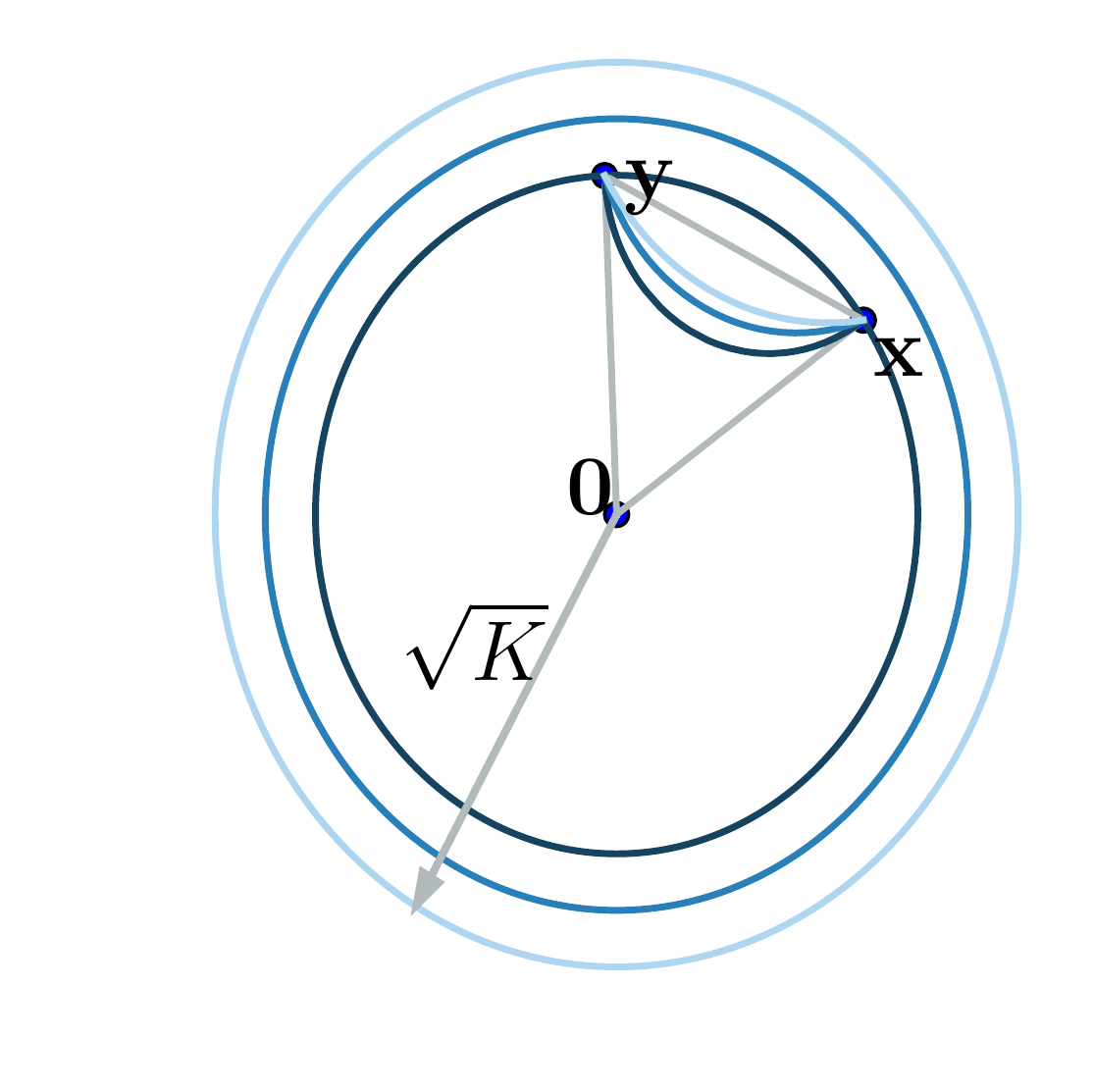}
        \caption{Poincar\'{e} disk geodesics for different curvatures.}
    \end{subfigure}%
    \begin{subfigure}[t]{0.33\textwidth}
        \centering
        \includegraphics[width=\linewidth]{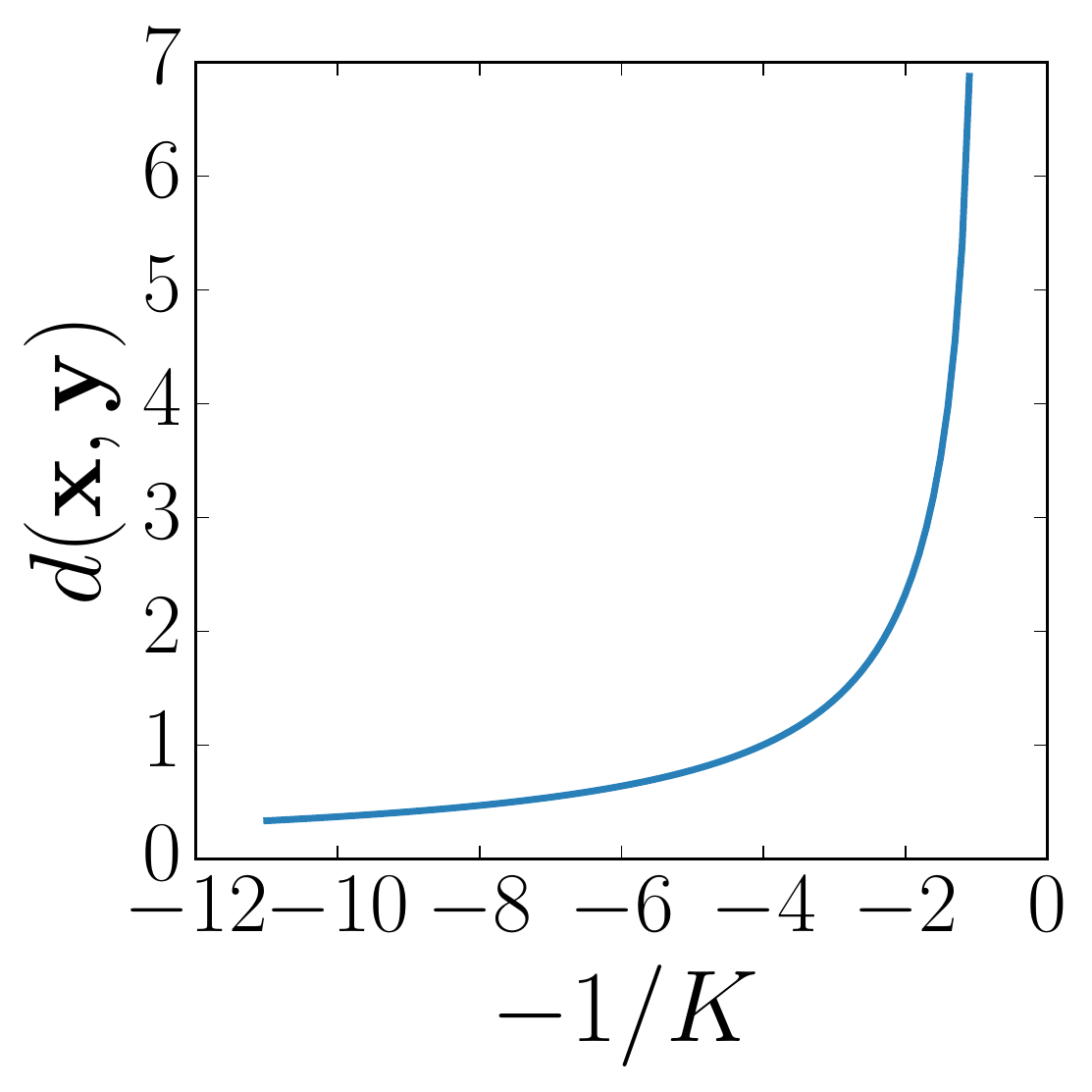}
        \caption{Hyperbolic distance vs curvature.}
    \end{subfigure}%
    \begin{subfigure}[t]{0.33\textwidth}
        \centering
        \includegraphics[width=\linewidth]{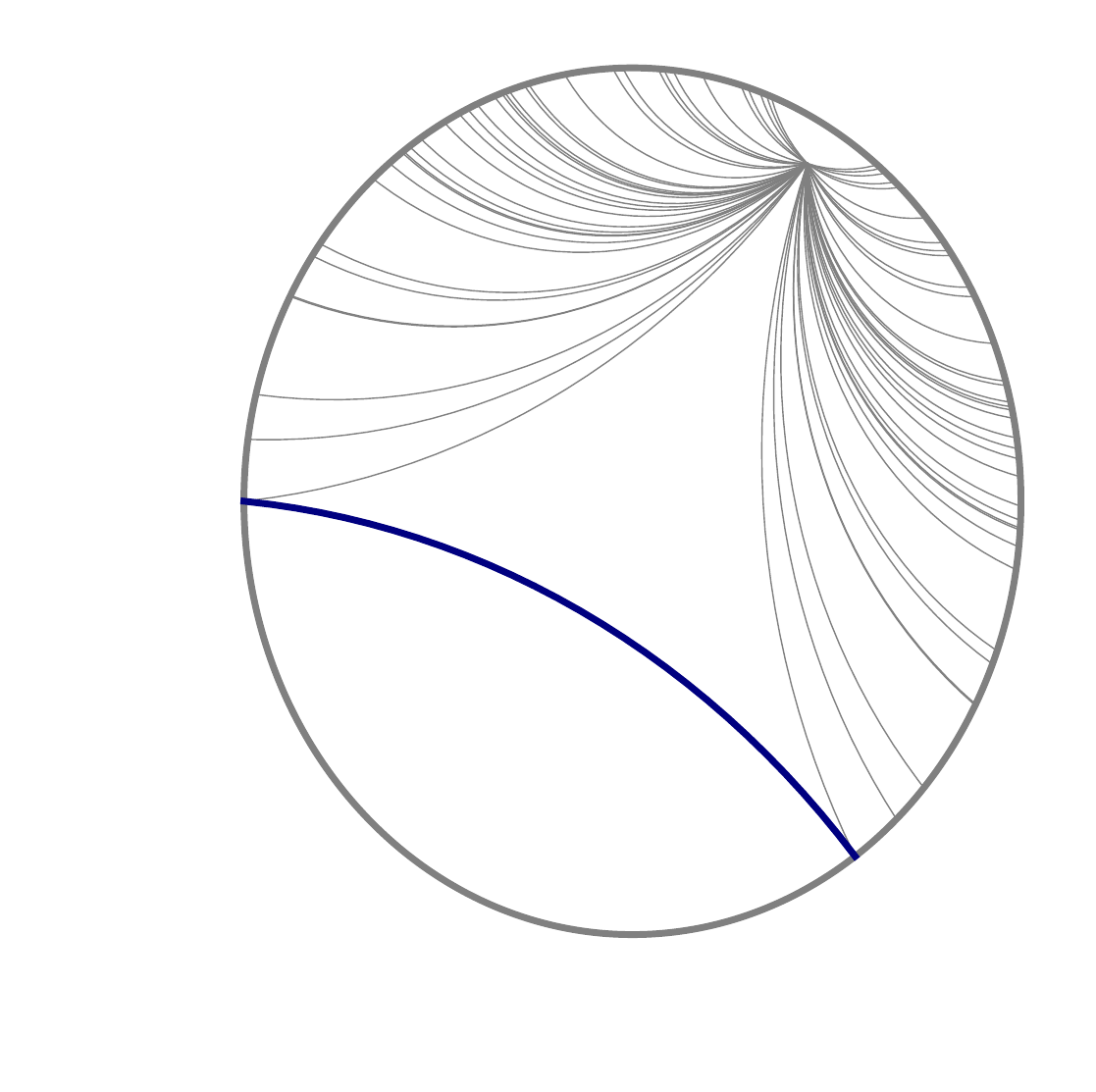}
        \caption{Poincar\'e geodesic lines.}
    \end{subfigure}
    \caption{(a): Poincar\'{e} disk geodesics (shortest path) connecting $\mathbf{x}$ and $\mathbf{y}$ for different curvatures. 
As curvature ($-1/K$) decreases, the distance between $\mathbf{x}$ and $\mathbf{y}$ increases, and the geodesics lines get closer to the origin.
}\label{fig:poincare_geodesics}
\end{figure}}
Compared to Euclidean GCNs, \name offers improved expressiveness for hierarchical graph data. 
We demonstrate the efficacy of \name in link prediction and node classification tasks on a wide range of open graph datasets which exhibit different extent of hierarchical structure. 
Experiments show that \name significantly outperforms Euclidean-based state-of-the-art graph neural networks on scale-free graphs and reduces error from 11.5\% up to 47.5\% on node classification tasks and from 28.2\% up to 63.1\% on link prediction tasks. 
Furthermore, \name achieves new state-of-the-art results on the standard \textsc{Pubmed} benchmark.
Finally, we analyze the notion of hierarchy learned by \name and show how the embedding geometry transforms from Euclidean features to hyperbolic embeddings.

\section{Related Work}
The problem of graph representation learning belongs to the field of geometric deep learning. There exist two major types of approaches: transductive shallow embeddings and inductive GCNs.

\xhdr{Transductive, shallow embeddings}
The first type of approach attempts to optimize node embeddings as parameters by minimizing a reconstruction error.
In other words, the mapping from nodes in a graph to embeddings is an embedding look-up. 
Examples include matrix factorization \cite{belkin2002laplacian,kruskal1964multidimensional} and random walk methods \cite{grover2016node2vec,perozzi2014deepwalk}.
Shallow embedding methods have also been developed in hyperbolic geometry \cite{nickel2017poincare,nickel2018learning} 
for reconstructing trees \cite{sarkar2011low} and graphs \cite{chamberlain2017neural,gu2018learning,kleinberg2007geographic}, or embedding text \cite{tifrea2018poincar}. 
However, shallow (Euclidean and hyperbolic) embedding methods have three major downsides:
(1) They fail to leverage rich node feature information, which can be crucial in tasks such as node classification. 
(2) These methods are transductive, and therefore cannot be used for inference on unseen graphs. 
And, (3) they scale poorly as the number of model parameters grows linearly with the number of nodes. 

\xhdr{(Euclidean) Graph Neural Networks}
Instead of learning shallow embeddings, an alternative approach is to learn a mapping from input graph structure as well as node features to embeddings, parameterized by neural networks~\cite{hamilton2017inductive,kipf2016semi,li2015gated,velickovic2017graph,xu2018powerful,ying2018graph}.
While various Graph Neural Network architectures resolve the disadvantages of shallow embeddings, they generally embed nodes into a Euclidean space, which leads to a large distortion when embedding real-world graphs with scale-free or hierarchical structure. Our work builds on GNNs and extends them to hyperbolic geometry.


\xhdr{Hyperbolic Neural Networks} Hyperbolic geometry has been applied to neural networks, to problems of computer vision or natural language processing \cite{dhingra2018embedding,gulcehre2018hyperbolic,khrulkov2019hyperbolic,tay2018hyperbolic}. 
More recently, hyperbolic neural networks \cite{ganea2018hyperbolicNN} were proposed, where core neural network operations are in hyperbolic space. 
In contrast to previous work, we derive the core neural network operations in a more stable model of hyperbolic space, and propose new operations for set aggregation, which enables \name to perform graph convolutions with attention in hyperbolic space with trainable curvature.
After NeurIPS 2019 announced accepted papers, we also became aware of the concurrently developed HGNN model~\cite{Liu2019hyperbolic} for learning GNNs in hyperbolic space. The main difference with our work is how our \name defines the architecture for neighborhood aggregation and uses a learnable curvature. Additionally, while \cite{Liu2019hyperbolic} demonstrates strong performance on graph classification tasks and provides an elegant extension to dynamic graph embeddings, we focus on link prediction and node classification.

\section{Background}
\xhdr{Problem setting}
Without loss of generality we describe graph representation learning on a single graph.
Let $\mathcal{G}=(\mathcal{V}, \mathcal{E})$ be a graph with vertex set $\mathcal{V}$ and edge set $\mathcal{E}$, and let $(\mathbf{x}_i^{0,E})_{i\in\mathcal{V}}$ be $d$-dimensional input node features where $^0$ indicates the first layer. 
We use the superscript $^E$ to indicate that node features lie in a Euclidean space and use $^H$ to denote hyperbolic features. 
The goal in graph representation learning is to learn a mapping $f$ which maps nodes to embedding vectors:
$$f: (\mathcal{V}, \mathcal{E}, (\mathbf{x}_i^{0,E})_{i\in\mathcal{V}}) \rightarrow Z\in\mathbb{R}^{|\mathcal{V}|\times d'},$$
where $d'\ll |\mathcal{V}|$. 
These embeddings should capture both structural and semantic information and can then be used as input for downstream tasks such as node classification and link prediction.

\xhdr{Graph Convolutional Neural Networks (GCNs)}
Let $\mathcal{N}(i)=\{j:(i,j)\in\mathcal{E}\}$ denote a set of neighbors of $i\in\mathcal{V}$, 
($W^{\ell}$,  $\mathbf{b}^{\ell}$) be weights and bias parameters for layer $\ell$, and $\sigma(\cdot)$ be a non-linear activation function.
General GCN message passing rule at layer $\ell$ for node $i$ then consists of:
\begin{align}
    \mathbf{h}^{\ell,E}_i &= W^{\ell}\mathbf{x}^{\ell-1,E}_i + \mathbf{b}^{\ell} & \text{(feature transform)} \label{eq:feat_transform}\\ 
    \mathbf{x}^{\ell,E}_i &= \sigma(\mathbf{h}^{\ell,E}_i + \sum\limits_{j\in\mathcal{N}(i)} w_{ij}\mathbf{h}^{\ell,E}_j) & \text{(neighborhood aggregation)}
    \label{eq:agg}
\end{align}
where aggregation weights $w_{ij}$ can be computed using different mechanisms \cite{hamilton2017inductive,kipf2016semi, velickovic2017graph}. 
Message passing is then performed for multiple layers to propagate messages over network neighborhoods.
Unlike shallow methods, GCNs leverage node features and can be applied to unseen nodes/graphs in inductive settings.

\xhdr{The hyperboloid model of hyperbolic space}
We review basic concepts of hyperbolic geometry that serve as building blocks for \name.
Hyperbolic geometry is a non-Euclidean geometry with a constant negative curvature, where curvature measures how a geometric object deviates from a flat plane (\emph{cf.} \cite{robbin2011introduction} for an introduction to differential geometry).
Here, we work with the hyperboloid model for its simplicity and its numerical stability \cite{nickel2018learning}. 
We review results for any constant negative curvature, as this allows us to learn curvature as a model parameter, leading to more stable optimization (\emph{cf.} Section \ref{subsec:curvature} for more details).

\xhdr{Hyperboloid manifold} 
We first introduce our notation for the hyperboloid model of hyperbolic space. 
Let $\langle.,.\rangle_{\mathcal{L}}:\mathbb{R}^{d+1}\times\mathbb{R}^{d+1}\rightarrow\mathbb{R}$ denote the Minkowski inner product,
$\langle\mathbf{x},\mathbf{y}\rangle_\mathcal{L}\coloneqq-x_0y_0+x_1y_1+\ldots+x_dy_d.$
We denote $\mathbb{H}^{d,K}$ as the hyperboloid manifold in $d$ dimensions with constant negative \textbf{curvature} $-1/K$ $(K>0)$, and $\mathcal{T}_\mathbf{x}\mathbb{H}^{d,K}$ the (Euclidean) \textbf{tangent space} centered at point $\mathbf{x}$
\begin{align}
    \mathbb{H}^{d,K}\coloneqq\{\mathbf{x}\in\mathbb{R}^{d+1}:\langle\mathbf{x},\mathbf{x}\rangle_\mathcal{L}=-K,x_0>0\} &&  & \mathcal{T}_\mathbf{x}\mathbb{H}^{d,K}\coloneqq\{\mathbf{v}\in\mathbb{R}^{d+1}:\langle\mathbf{v},\mathbf{x}\rangle_\mathcal{L}=0\}. 
    \label{eq:hyp_def}
\end{align}
Now for $\mathbf{v}$ and $\mathbf{w}$ in $\mathcal{T}_\mathbf{x}\mathbb{H}^{d,K}$, $g_\mathbf{x}^K(\mathbf{v},\mathbf{w})\coloneqq\langle\mathbf{v},\mathbf{w}\rangle_\mathcal{L}$ is a Riemannian metric tensor \cite{robbin2011introduction} and $(\mathbb{H}^{d,K}, g_\mathbf{x}^K)$ is a Riemannian manifold with negative curvature $-1/K$.
$\mathcal{T}_\mathbf{x}\mathbb{H}^{d,K}$ 
is a local, first-order approximation of the hyperbolic manifold at $\mathbf{x}$ and the restriction of the Minkowski inner product to $\mathcal{T}_\mathbf{x}\mathbb{H}^{d,K}$ is positive definite. 
$\mathcal{T}_\mathbf{x}\mathbb{H}^{d,K}$ is useful to perform Euclidean operations undefined in hyperbolic space and we denote $||\mathbf{v}||_\mathcal{L}=\sqrt{\langle\mathbf{v},\mathbf{v}\rangle_\mathcal{L}}$ as the norm of $\mathbf{v}\in\mathcal{T}_\mathbf{x}\mathbb{H}^{d,K}$. 

\xhdr{Geodesics and induced distances}
Next, we introduce the notion of geodesics and distances in manifolds, which are generalizations of shortest paths in graphs or straight lines in Euclidean geometry (Figure \ref{fig:poincare_geodesics}). 
Geodesics and distance functions are particularly important in graph embedding algorithms, as a common optimization objective is to minimize geodesic distances between connected nodes. 
Let $\mathbf{x}\in\mathbb{H}^{d,K}$ and $\mathbf{u}\in\mathcal{T}_\mathbf{x}\mathbb{H}^{d,K}$,
and assume that $\mathbf{u}$ is unit-speed, \emph{i.e.} $\langle\mathbf{u},\mathbf{u}\rangle_\mathcal{L}=1$, then we have the following result:
\begin{restatable}[]{corr}{geodesics}
\label{cor:geodesic}
Let $\mathbf{x}\in\mathbb{H}^{d,K}$, $\mathbf{u}\in\mathcal{T}_\mathbf{x}\mathbb{H}^{d,K}$ be unit-speed.
The unique unit-speed geodesic 
$\gamma_{\mathbf{x}\rightarrow\mathbf{u}}(\cdot)$ 
such that $\gamma_{\mathbf{x}\rightarrow\mathbf{u}}(0)=\mathbf{x}$, $\dot\gamma_{\mathbf{x}\rightarrow\mathbf{u}}(0)=\mathbf{u}$ is $\gamma^K_{\mathbf{x}\rightarrow\mathbf{u}}(t)=\mathrm{cosh}\left(\frac{t}{\sqrt{K}}\right)\mathbf{x}+\sqrt{K}\mathrm{sinh}\left(\frac{t}{\sqrt{K}}\right)\mathbf{u}$, and the intrinsic distance function between two points $\mathbf{x},\mathbf{y}$ in $\mathbb{H}^{d,K}$ is then:

\begin{equation}
d_\mathcal{L}^K(\mathbf{x},\mathbf{y})=\sqrt{K}\mathrm{arcosh}(-\langle\mathbf{x},\mathbf{y}\rangle_\mathcal{L}/K).\label{eq:lorentz_dist}
\end{equation}
\end{restatable}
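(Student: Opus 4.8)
The plan is to establish the corollary in two stages: first verify that the claimed curve $\gamma^K_{\mathbf{x}\rightarrow\mathbf{u}}$ is indeed \emph{the} unit-speed geodesic with the prescribed initial data, and then read off the distance formula from the fact that a unit-speed geodesic is parametrized by arc length. Throughout I would work extrinsically, since $\mathbb{H}^{d,K}$ is a non-degenerate submanifold of Minkowski space $\mathbb{R}^{d+1}$, so that every computation reduces to the ambient product $\langle\cdot,\cdot\rangle_\mathcal{L}$ together with the identity $\cosh^2 - \sinh^2 = 1$.

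For the geodesic, write $c(t) = \cosh(t/\sqrt{K})\,\mathbf{x} + \sqrt{K}\sinh(t/\sqrt{K})\,\mathbf{u}$ and check four things. (i) Using $\langle\mathbf{x},\mathbf{x}\rangle_\mathcal{L} = -K$, $\langle\mathbf{x},\mathbf{u}\rangle_\mathcal{L} = 0$ (as $\mathbf{u}\in\mathcal{T}_\mathbf{x}\mathbb{H}^{d,K}$), and $\langle\mathbf{u},\mathbf{u}\rangle_\mathcal{L} = 1$, a direct expansion gives $\langle c(t), c(t)\rangle_\mathcal{L} = -K$, so $c(t)$ stays on the hyperboloid (and $c_0(t)>0$ by continuity from $t=0$). (ii) Evidently $c(0)=\mathbf{x}$ and $\dot c(0)=\mathbf{u}$. (iii) Differentiating, $\dot c(t) = \tfrac{1}{\sqrt{K}}\sinh(t/\sqrt{K})\,\mathbf{x} + \cosh(t/\sqrt{K})\,\mathbf{u}$, and the same three inner-product values yield $\langle\dot c, \dot c\rangle_\mathcal{L}=1$, i.e.\ $c$ is unit-speed. (iv) For the geodesic equation I would invoke the Gauss formula: since the induced metric on $\mathbb{H}^{d,K}$ is Riemannian (the Minkowski product is positive definite on each tangent space, as noted in the excerpt), the Levi-Civita connection is the $\mathcal{T}_\mathbf{x}\mathbb{H}^{d,K}$-tangential projection of the flat ambient connection, and the Minkowski-normal space at a point $\mathbf{p}$ is exactly $\mathbb{R}\mathbf{p}$ (because $\langle\mathbf{p},\mathbf{p}\rangle_\mathcal{L}=-K\neq 0$). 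Hence $c$ is a geodesic iff its ambient acceleration is purely normal, i.e.\ $\ddot c \parallel c$; and indeed one computes $\ddot c(t) = \tfrac{1}{K}\,c(t)$. Uniqueness then follows from the standard existence-uniqueness theorem for the geodesic ODE with initial data $(\mathbf{x},\mathbf{u})$.

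For the distance, I would use that $\gamma^K_{\mathbf{x}\rightarrow\mathbf{u}}$ is unit-speed, so its arc length from $0$ to $t$ equals $t$; since $\mathbb{H}^{d,K}$ is a complete, simply-connected manifold of negative curvature, this geodesic is globally length-minimizing, so $d^K_\mathcal{L}(\mathbf{x},\mathbf{y})$ equals the unique parameter $t^\star$ with $\gamma^K_{\mathbf{x}\rightarrow\mathbf{u}}(t^\star)=\mathbf{y}$, for $\mathbf{u}$ the direction pointing from $\mathbf{x}$ to $\mathbf{y}$. To solve for $t^\star$, take the Minkowski product of $\mathbf{y} = \cosh(t^\star/\sqrt{K})\,\mathbf{x} + \sqrt{K}\sinh(t^\star/\sqrt{K})\,\mathbf{u}$ with $\mathbf{x}$; the cross term drops out ($\langle\mathbf{x},\mathbf{u}\rangle_\mathcal{L}=0$) and $\langle\mathbf{x},\mathbf{x}\rangle_\mathcal{L}=-K$ give $\langle\mathbf{x},\mathbf{y}\rangle_\mathcal{L} = -K\cosh(t^\star/\sqrt{K})$. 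Inverting yields $t^\star = \sqrt{K}\,\mathrm{arcosh}(-\langle\mathbf{x},\mathbf{y}\rangle_\mathcal{L}/K)$, which is exactly \eqref{eq:lorentz_dist}.

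I expect the only real subtleties to lie in the two geometric facts I am leaning on, rather than in the algebra. The first is the geodesic characterization $\ddot c\parallel c$, which rests on the Gauss formula for a pseudo-Riemannian ambient space and on the non-degeneracy of the normal direction; this is where care is needed to justify that the extrinsic computation is legitimate. The second is ensuring the distance formula is well-posed: one must check that $-\langle\mathbf{x},\mathbf{y}\rangle_\mathcal{L}/K \geq 1$ (with equality iff $\mathbf{x}=\mathbf{y}$), so that $\mathrm{arcosh}$ is defined, and that a unit-speed direction $\mathbf{u}$ realizing $\mathbf{y}$ exists. This is a reverse Cauchy--Schwarz inequality for future-directed timelike vectors, which I would record as a short lemma before computing $t^\star$. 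The remaining hyperbolic-trigonometric manipulations are entirely routine.
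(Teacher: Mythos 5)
Your proposal is correct, but it takes a genuinely different route from the paper. The paper proves this proposition by \emph{reduction to the unit-curvature case}: it invokes the known closed form of unit-speed geodesics on $\mathbb{H}^{d,1}$ (Theorem \ref{thm:unit_speed} in the appendix), defines the rescaled curve $\phi_{\mathbf{y}\rightarrow\mathbf{u}}(t)=\frac{1}{\sqrt{K}}\gamma^K_{\mathbf{x}\rightarrow\mathbf{u}}(\sqrt{K}t)$ with $\mathbf{y}=\mathbf{x}/\sqrt{K}\in\mathbb{H}^{d,1}$, checks that $\phi$ is a unit-speed geodesic of $\mathbb{H}^{d,1}$ with the right initial data, and then undoes the rescaling; the distance formula is inherited from the $K=1$ case in the same way. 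You instead verify the candidate curve directly and extrinsically: membership in the hyperboloid, unit speed, and the geodesic equation via the Gauss formula ($\ddot c\parallel c$, using that the Minkowski-normal space at $\mathbf{p}$ is $\mathbb{R}\mathbf{p}$), followed by ODE uniqueness, and you then solve $\langle\mathbf{x},\mathbf{y}\rangle_\mathcal{L}=-K\cosh(t^\star/\sqrt{K})$ for the distance. The paper's route is shorter and defers the real geometric content to the cited $K=1$ result; yours is self-contained and makes explicit two points the paper leaves implicit, namely the justification that the extrinsic acceleration criterion characterizes geodesics on a non-degenerate pseudo-Riemannian submanifold, and the well-posedness of the distance formula via the reverse Cauchy--Schwarz inequality $-\langle\mathbf{x},\mathbf{y}\rangle_\mathcal{L}/K\geq 1$. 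Both arguments are sound; to turn yours into a complete proof you would only need to actually write out the short lemma you flag at the end.
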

\normalsize
\xhdr{Exponential and logarithmic maps}
Mapping between tangent space and hyperbolic space is done by exponential and logarithmic maps.
Given $\mathbf{x}\in\mathbb{H}^{d,K}$ and a tangent vector $\mathbf{v}\in\mathcal{T}_\mathbf{x}\mathbb{H}^{d,K}$, the exponential map $\mathrm{exp}^K_\mathbf{x}:\mathcal{T}_\mathbf{x}\mathbb{H}^{d,K}\rightarrow\mathbb{H}^{d,K}$ assigns to $\mathbf{v}$ the point $\mathrm{exp}^K_\mathbf{x}(\mathbf{v})\coloneqq\gamma(1)$, where $\gamma$ is the unique geodesic satisfying $\gamma(0)=\mathbf{x}$ and $\dot\gamma(0)=\mathbf{v}$.
The logarithmic map is the reverse map that maps 
back to the tangent space at $\mathbf{x}$ such that $\mathrm{log}^K_\mathbf{x}(\mathrm{exp}^K_\mathbf{x}(\mathbf{v}))=\mathbf{v}$.
In general Riemannian manifolds, these operations are only defined locally but in the hyperbolic space, they form a bijection between the hyperbolic space and the tangent space at a point. 
We have the following direct expressions of the exponential and the logarithmic maps, which allow us to perform operations on points on the hyperboloid manifold by mapping them to tangent spaces and vice-versa:
\begin{restatable}[]{corr}{logexp}
\label{cor:logexp}
For $\mathbf{x}\in\mathbb{H}^{d,K}$, $\mathbf{v}\in\mathcal{T}_\mathbf{x}\mathbb{H}^{d,K}$ and $\mathbf{y}\in\mathbb{H}^{d,K}$ such that $\mathbf{v}\neq \mathbf{0}$ and $\mathbf{y}\neq \mathbf{x}$, the exponential and logarithmic maps of the hyperboloid model are given by:
\begin{small}
\begin{align*}
    &\mathrm{exp}^K_\mathbf{x}(\mathbf{v})=\mathrm{cosh}\bigg(\frac{\ \ ||\mathbf{v}||_\mathcal{L}}{\sqrt{K}}\bigg)\mathbf{x}+\sqrt{K}\mathrm{sinh}\bigg(\frac{\ \ ||\mathbf{v}||_\mathcal{L}}{\sqrt{K}}\bigg)\frac{\mathbf{v}}{\ \ ||\mathbf{v}||_\mathcal{L}}, \ \ \mathrm{log}^K_\mathbf{x}(\mathbf{y})=d_\mathcal{L}^K(\mathbf{x},\mathbf{y})\frac{\mathbf{y}+\frac{1}{K}\langle\mathbf{x},\mathbf{y}\rangle_\mathcal{L}\mathbf{x}}{||\mathbf{y}+\frac{1}{K}\langle\mathbf{x},\mathbf{y}\rangle_\mathcal{L}\mathbf{x}||_\mathcal{L}}.\label{eq:exp_logmap} \\
\end{align*}
\end{small}
\end{restatable}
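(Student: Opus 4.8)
The plan is to obtain both maps directly from the geodesic formula of Corollary~\ref{cor:geodesic}, first deriving $\mathrm{exp}^K_\mathbf{x}$ by a reparametrization argument, and then obtaining $\mathrm{log}^K_\mathbf{x}$ as its inverse through an algebraic back-solve.

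For the exponential map, recall that $\mathrm{exp}^K_\mathbf{x}(\mathbf{v})$ is defined as $\gamma(1)$ for the geodesic $\gamma$ with $\gamma(0)=\mathbf{x}$ and $\dot\gamma(0)=\mathbf{v}$, whereas Corollary~\ref{cor:geodesic} only supplies the unit-speed geodesic $\gamma^K_{\mathbf{x}\rightarrow\mathbf{u}}$ (with $\langle\mathbf{u},\mathbf{u}\rangle_\mathcal{L}=1$). The bridge is the standard homogeneity of geodesics: because the geodesic equation is invariant under the simultaneous rescaling $(\mathbf{v},t)\mapsto(c\mathbf{v},t/c)$, the geodesic with initial velocity $\mathbf{v}$ evaluated at $t=1$ equals the unit-speed geodesic in direction $\mathbf{u}=\mathbf{v}/||\mathbf{v}||_\mathcal{L}$ evaluated at $t=||\mathbf{v}||_\mathcal{L}$. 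So I would set $\mathrm{exp}^K_\mathbf{x}(\mathbf{v})=\gamma^K_{\mathbf{x}\rightarrow\mathbf{u}}(||\mathbf{v}||_\mathcal{L})$ and substitute $t=||\mathbf{v}||_\mathcal{L}$ together with $\mathbf{u}=\mathbf{v}/||\mathbf{v}||_\mathcal{L}$ into the closed form of Corollary~\ref{cor:geodesic}, which immediately yields the stated expression.

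For the logarithmic map I would invert the exponential map. Writing $\mathbf{y}=\mathrm{exp}^K_\mathbf{x}(\mathbf{v})$ and setting $\lambda=||\mathbf{v}||_\mathcal{L}$, $\mathbf{u}=\mathbf{v}/\lambda$, the exponential formula reads $\mathbf{y}=\mathrm{cosh}(\lambda/\sqrt{K})\mathbf{x}+\sqrt{K}\,\mathrm{sinh}(\lambda/\sqrt{K})\mathbf{u}$. To recover $\lambda$, I would pair both sides with $\mathbf{x}$ under the Minkowski product and use $\langle\mathbf{x},\mathbf{x}\rangle_\mathcal{L}=-K$ (since $\mathbf{x}\in\mathbb{H}^{d,K}$) and $\langle\mathbf{x},\mathbf{u}\rangle_\mathcal{L}=0$ (since $\mathbf{u}\in\mathcal{T}_\mathbf{x}\mathbb{H}^{d,K}$); the sinh term drops out and leaves $\langle\mathbf{x},\mathbf{y}\rangle_\mathcal{L}=-K\,\mathrm{cosh}(\lambda/\sqrt{K})$, i.e. $\lambda=\sqrt{K}\,\mathrm{arcosh}(-\langle\mathbf{x},\mathbf{y}\rangle_\mathcal{L}/K)=d_\mathcal{L}^K(\mathbf{x},\mathbf{y})$, reproducing the distance of Corollary~\ref{cor:geodesic}. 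For the direction, I would solve the same identity for the sinh term to get $\sqrt{K}\,\mathrm{sinh}(\lambda/\sqrt{K})\mathbf{u}=\mathbf{y}+\tfrac{1}{K}\langle\mathbf{x},\mathbf{y}\rangle_\mathcal{L}\mathbf{x}\eqqcolon\mathbf{p}$; a short check gives $\langle\mathbf{p},\mathbf{x}\rangle_\mathcal{L}=0$, so $\mathbf{p}\in\mathcal{T}_\mathbf{x}\mathbb{H}^{d,K}$, and since $\mathbf{u}$ is unit-speed and positively proportional to $\mathbf{p}$, it must equal $\mathbf{p}/||\mathbf{p}||_\mathcal{L}$. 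Then $\mathrm{log}^K_\mathbf{x}(\mathbf{y})=\mathbf{v}=\lambda\mathbf{u}=d_\mathcal{L}^K(\mathbf{x},\mathbf{y})\,\mathbf{p}/||\mathbf{p}||_\mathcal{L}$, as claimed.

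The main obstacle is conceptual rather than computational: cleanly justifying the reparametrization step for the exponential map (the homogeneity of geodesics under velocity rescaling), since Corollary~\ref{cor:geodesic} is stated only for unit-speed initial vectors. Once that is in place the remaining work is routine Minkowski-product algebra, where the only care needed is confirming that $\mathbf{p}$ lies in the tangent space and that the proportionality constant $\sqrt{K}\,\mathrm{sinh}(\lambda/\sqrt{K})$ is positive (which holds for $\mathbf{y}\neq\mathbf{x}$, hence $\lambda>0$) so that normalizing $\mathbf{p}$ recovers $\mathbf{u}$ with the correct sign.
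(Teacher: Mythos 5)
Your proposal is correct and follows essentially the same route as the paper: the exponential map is obtained by reparametrizing the unit-speed geodesic of Proposition \ref{cor:geodesic} (the paper's $\phi(t)=\gamma(t/||\mathbf{v}||_\mathcal{L})$ is exactly your homogeneity rescaling), and the logarithm is obtained by inverting it with Minkowski-product algebra using $\langle\mathbf{x},\mathbf{x}\rangle_\mathcal{L}=-K$ and $\langle\mathbf{x},\mathbf{v}\rangle_\mathcal{L}=0$. The only (harmless) difference is that you read the magnitude $\lambda=d_\mathcal{L}^K(\mathbf{x},\mathbf{y})$ directly off the $\cosh$ pairing, whereas the paper first writes $\mathbf{v}$ via $\mathrm{arsinh}$ of $||\mathbf{y}+\tfrac{1}{K}\langle\mathbf{x},\mathbf{y}\rangle_\mathcal{L}\mathbf{x}||_\mathcal{L}$ and then converts to $\mathrm{arcosh}$; your shortcut avoids that identity.
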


\normalsize

\section{Hyperbolic Graph Convolutional Networks}
\begin{figure}[t!]
    \centering
    \includegraphics[width=0.6\linewidth]{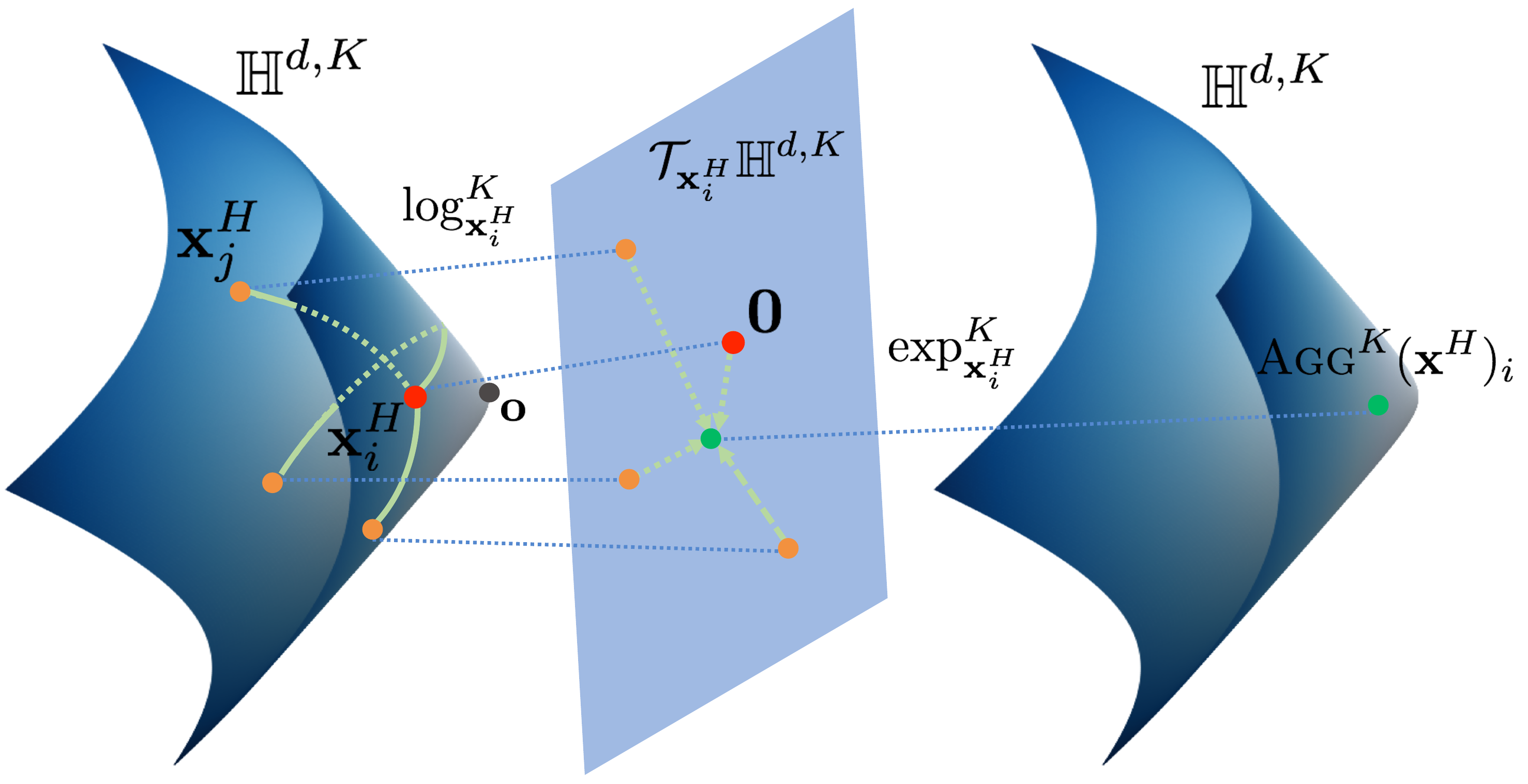}
    \caption{\name neighborhood aggregation (Eq. \ref{eq:hyp_agg}) first maps messages/embeddings to the tangent space,
    performs the aggregation in the tangent space, and then maps back to the hyperbolic space.}\label{fig:model}
  \label{fig:hgcn_layer}
\end{figure}
Here we introduce \HGCN, a generalization of inductive GCNs in hyperbolic geometry that benefits from the expressiveness of both graph neural networks and hyperbolic embeddings. 
First, since input features are often Euclidean, we derive a mapping from Euclidean features to hyperbolic space. 
Next, we derive two components of graph convolution: The analogs of Euclidean feature transformation and feature aggregation (Equations \ref{eq:feat_transform}, \ref{eq:agg}) in the hyperboloid model.
Finally, we introduce the \name algorithm with trainable curvature.
\subsection{Mapping from Euclidean to hyperbolic spaces}
\label{sec:euc2hyp}
\HGCN first maps input features to the hyperboloid manifold via the $\mathrm{exp}$ map.
Let $\mathbf{x}^{0,E}\in\mathbb{R}^{d}$ denote input Euclidean features.
For instance, these features could be produced by pre-trained Euclidean neural networks. 
Let $\mathbf{o}:=\{\sqrt{K},0,\ldots,0\}\in\mathbb{H}^{d,K}$ denote the north pole (origin) in $\mathbb{H}^{d,K}$, which we use as a reference point to perform tangent space operations. 
We have $\langle(0, \mathbf{x}^{0,E}),\mathbf{o}\rangle=0$. 
Therefore, we interpret $(0, \mathbf{x}^{0,E})$ as a point in $\mathcal{T}_\mathbf{o}\mathbb{H}^{d,K}$ and use Proposition \ref{cor:logexp} to map it to $\mathbb{H}^{d,K}$ with:
\begin{equation}
    \mathbf{x}^{0,H}=\mathrm{exp}^K_\mathbf{o}((0,\mathbf{x}^{0,E}))=\bigg(\sqrt{K}\mathrm{cosh}\bigg(\frac{||\mathbf{x}^{0,E}||_2}{\sqrt{K}}\bigg),\sqrt{K}\mathrm{sinh}\bigg(\frac{||\mathbf{x}^{0,E}||_2}{\sqrt{K}}\bigg)\frac{\mathbf{x}^{0,E}}{\ \ ||\mathbf{x}^{0,E}||_2}\bigg).\label{eq:euc2hyp}
\end{equation}
\subsection{Feature transform in hyperbolic space}
The feature transform in Equation \ref{eq:feat_transform} is used in GCN to map the embedding space of one layer to the next layer embedding space and capture large neighborhood structures.
We now want to learn transformations of points on the hyperboloid manifold.
However, there is no notion of vector space structure in hyperbolic space.
We build upon Hyperbolic Neural Network (HNN) \cite{ganea2018hyperbolicNN} and derive transformations in the hyperboloid model. The main idea is to leverage the $\exp$ and $\log$ maps in Proposition \ref{cor:logexp} so that we can use the tangent space $\mathcal{T}_\mathbf{o}\mathbb{H}^{d,K}$ to perform Euclidean transformations.

\textbf{Hyperboloid linear transform}. 
Linear transformation requires multiplication of the embedding vector by a weight matrix, followed by bias translation.
To compute matrix vector multiplication, we first use the logarithmic map to project hyperbolic points $\mathbf{x}^{H}$ to $\mathcal{T}_\mathbf{o}\mathbb{H}^{d,K}$. 
Thus the matrix representing the transform is defined on the tangent space, which is Euclidean and isomorphic to $\mathbb{R}^{d}$. 
We then project the vector in the tangent space back to the manifold using the exponential map. Let $W$ be a $d'\times d$ weight matrix. We define the hyperboloid matrix multiplication as:
\begin{equation}
    W\otimes^K\mathbf{x}^{H}\coloneqq\mathrm{exp}^K_\mathbf{o}( W\mathrm{log}^K_\mathbf{o}(\mathbf{x}^{H})),\label{eq:linear}
\end{equation}
where $\mathrm{log}^K_\mathbf{o}(\cdot)$ is on $\mathbb{H}^{d,K}$ and $\mathrm{exp}^K_\mathbf{o}(\cdot)$ maps to $\mathbb{H}^{d',K}$.
In order to perform bias addition,
we use a result from the HNN model and define $\mathbf{b}$ as an Euclidean vector located at $\mathcal{T}_\mathbf{o}\mathbb{H}^{d,K}$.
We then parallel transport $\mathbf{b}$ to the tangent space of the hyperbolic point of interest and map it to the manifold.
If $P^K_{\mathbf{o}\rightarrow \mathbf{x}^{H}}(\cdot)$ is the parallel transport from $\mathcal{T}_{\mathbf{o}}\mathbb{H}^{d',K}$ to $\mathcal{T}_{\mathbf{x}^H}\mathbb{H}^{d',K}$ 
(\emph{c.f.} Appendix \ref{appendix:diff_geom} for details), the hyperboloid bias addition is then defined as:
\begin{equation}
    \mathbf{x}^{H}\oplus^K\mathbf{b}\coloneqq\mathrm{exp}^K_\mathbf{\mathbf{x}^{H}}(P^K_{\mathbf{o}\rightarrow\mathbf{x}^{H}}(\mathbf{b})).\label{eq:bias}
\end{equation}

\subsection{Neighborhood aggregation on the hyperboloid manifold}
Aggregation (Equation \ref{eq:agg}) is a crucial step in GCNs as it captures neighborhood structures and features.
Suppose that $\mathbf{x}_i$ aggregates information from its neighbors $(\mathbf{x}_{j})_{j\in\mathcal{N}(i)}$ with weights $(w_{j})_{j\in\mathcal{N}(i)}$. 
Mean aggregation in Euclidean GCN computes the weighted average $\sum_{j\in\mathcal{N}(i)} w_{j}\mathbf{x}_{j}$. 
An analog of mean aggregation in hyperbolic space is the Fr\'echet mean \cite{frechet1948elements}, which, however, has no closed form solution. 
Instead, we propose to perform aggregation in tangent spaces using hyperbolic attention. 
\cut{\begin{align}\label{eq:frechet}
    \mathbf{y}_i=\underset{\mathbf{x}\in\mathbb{H}^{d,K}}{\mathrm{inf}}\sum_j w_j d_\mathcal{L}(\mathbf{x}_j, \mathbf{x})^2.
\end{align}}
\cut{
\xhdr{Einstein midpoint}
Another possibility is to exploit gyrovector space theory introduced by \cite{ungar2008gyrovector,albert2010barycentric} which draws parallels between hyperbolic geometry and Einstein's relativity theory. 
Gyrovector theory provides an algebraic structure in hyperbolic space and computing hyperbolic mean can be done through Einstein midpoint computation as in \cite{gulcehre2018hyperbolic}.
In particular, computing the Einstein midpoint can be done with simple operations in the Klein-Beltrami model of hyperbolic geometry.
Let $\mathbf{x}'_j=\Pi_{\mathbb{H}^{d,K}\rightarrow\mathbb{K}^{d}}(\mathbf{x}_j)$ denote the Klein coordinates for $\{\mathbf{x}_1,\ldots,\mathbf{x}_J\}$. 
We can then compute the Einstein midpoint in the Hyperboloid model with
$$\Bar{\mathbf{x}}=\Pi_{\mathbb{K}^{d}\rightarrow\mathbb{H}^{d,K}}\bigg(\sum_j\bigg[\frac{w_j \gamma(\mathbf{x}'_j)}{\sum_j {w_j \gamma(\mathbf{x}'_j)}}\bigg]\mathbf{x}'_j\bigg),$$
where $\gamma(\mathbf{x}'_j)=\frac{1}{\sqrt{1-||\mathbf{x}'_j||^2}}$.
\xhdr{Lorentzian distance midpoint}
We can also approximate mean aggregation by defining the center of mass using the Lorentzian distance  \cite{ratcliffe2006foundations} $d_{\mathcal{L}}^2(\mathbf{x}, \mathbf{y})$.
This function satisfies the non-negativity and symmetry axioms but it does not satisfy the triangle inequality.
However, this choice of distance function has the particular advantage that the Frechet mean 
has a closed form solution
\begin{equation}
\label{eq:mean_agg}
\mathbf{y}_i=\sqrt{K}\frac{\sum_j w_j \mathbf{x}_j}{|{||\sum_j w_j \mathbf{x}_j||_\mathcal{L}|}}.
\end{equation}
\begin{proof}
\end{proof}}

\xhdr{Attention based aggregation}
Attention in GCNs learns a notion of neighbors' importance and aggregates neighbors' messages according to their importance to the center node.
However, attention on Euclidean embeddings does not take into account the hierarchical nature of many real-world networks. 
Thus, we further propose hyperbolic attention-based aggregation.
Given hyperbolic embeddings $(\mathbf{x}_i^H, \mathbf{x}_j^H)$, we first map $\mathbf{x}_i^H$ and $\mathbf{x}_j^H$ to the tangent space of the origin to compute attention weights $w_{ij}$ with concatenation and Euclidean Multi-layer Percerptron (MLP). 
We then propose a hyperbolic aggregation to average nodes' representations:
\begin{align}
    w_{ij} &= \textsc{Softmax}_{j\in\mathcal{N}(i)}(\textsc{MLP}(\mathrm{log}^K_\mathbf{o}(\mathbf{x}_i^H) || \mathrm{log}^K_\mathbf{o}(\mathbf{x}_j^H))) \label{eq:att}\\
    \textsc{Agg}^K(\mathbf{x}^H)_i &= \mathrm{exp}^{K}_{\mathbf{x}^{H}_i }\bigg(\sum_{j \in \mathcal{N}(i)} w_{ij} \mathrm{log}^{K}_{\mathbf{x}^{H}_i}(\mathbf{x}^{H}_j)\bigg).\label{eq:hyp_agg}
    \vspace{-10pt}
\end{align}
Note that our proposed aggregation is directly performed in the tangent space of each center point $\mathbf{x}_i^H$, as this is where the Euclidean approximation is best (\emph{cf.} Figure \ref{fig:hgcn_layer}).
We show in our ablation experiments (\emph{cf.} Table \ref{tab:ablations}) that this local aggregation outperforms aggregation in tangent space at the origin ($\textsc{AGG}_\mathbf{o}$), due to the fact that relative distances have lower distortion in our approach. 

\xhdr{Non-linear activation with different curvatures}
Analogous to Euclidean aggregation (Equation \ref{eq:agg}), \name uses a non-linear activation function, $\sigma(\cdot)$ such that $\sigma(0)=0$, to learn non-linear transformations. 
Given hyperbolic curvatures  $-1/K_{\ell-1}, -1/K_{\ell}$ at layer $\ell-1$ and $\ell$ respectively, we introduce a hyperbolic non-linear activation $\sigma^{\otimes^{K_{\ell-1},K_{\ell}}}$ with different curvatures.
This step is crucial as it allows us to smoothly vary curvature at each layer.
More concretely, \HGCN applies the Euclidean non-linear activation in $\mathcal{T}_\mathbf{o}\mathbb{H}^{d,K_{\ell-1}}$ and then maps back to $\mathbb{H}^{d,K_\ell}$:
\begin{equation}\label{eq:activation}
  \sigma^{\otimes^{K_{\ell-1},K_{\ell}}}(\mathbf{x}^{H})=\mathrm{exp}^{K_{\ell}}_\mathbf{o}(\sigma(\mathrm{log}^{K_{\ell-1}}_\mathbf{o}(\mathbf{x}^{H}))).  
\end{equation}
Note that in order to apply the exponential map, points must be located in the tangent space at the north pole. 
Fortunately, tangent spaces of the north pole are shared across hyperboloid manifolds of the same dimension that have different curvatures, making Equation \ref{eq:activation} mathematically correct. 
\begin{figure}[t!]
    \centering
    \begin{subfigure}[t]{0.35\textwidth}
        \centering
        \includegraphics[width=\linewidth]{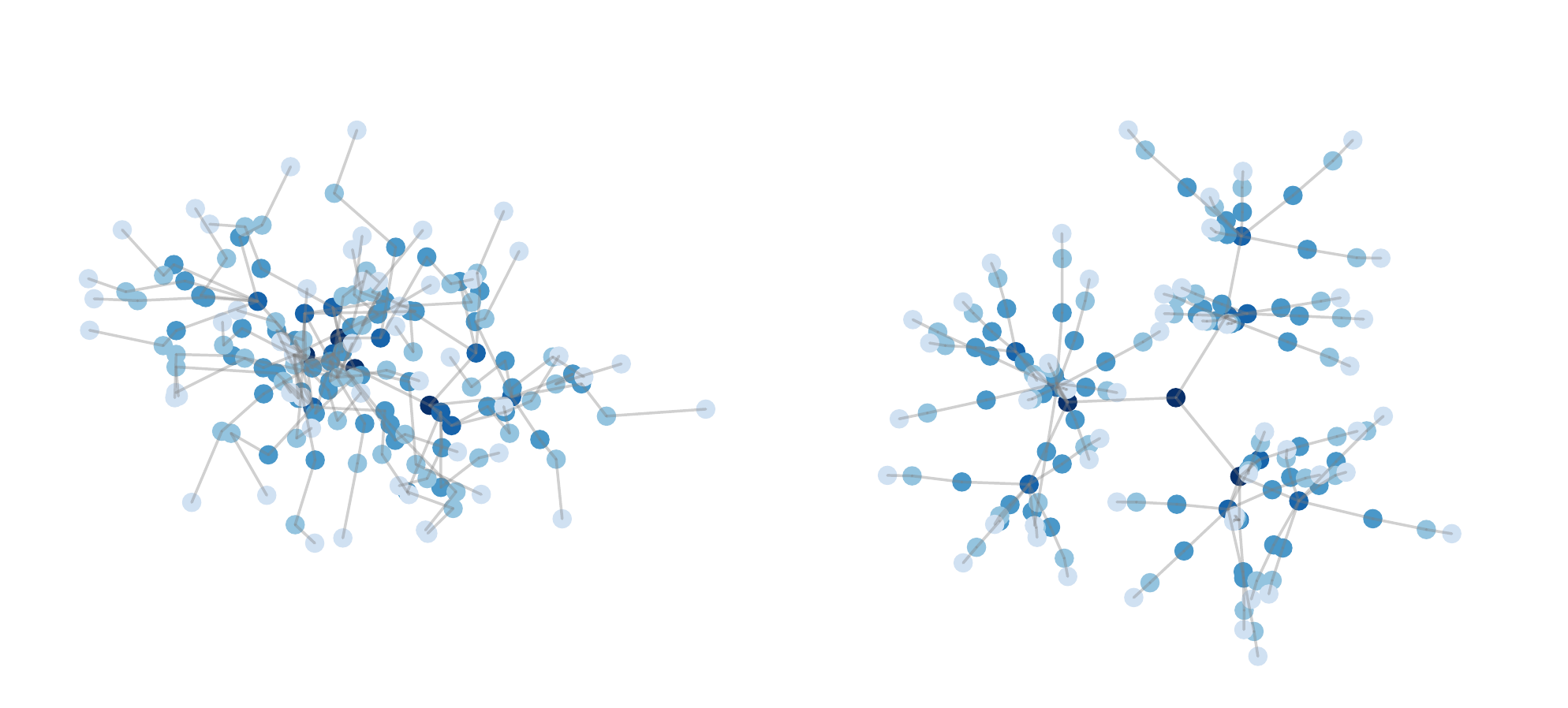}
        \caption{\textsc{GCN} layers.}\label{fig:gcn_tree}
    \end{subfigure}%
    \begin{subfigure}[t]{0.30\textwidth}
        \centering
        \includegraphics[width=\linewidth]{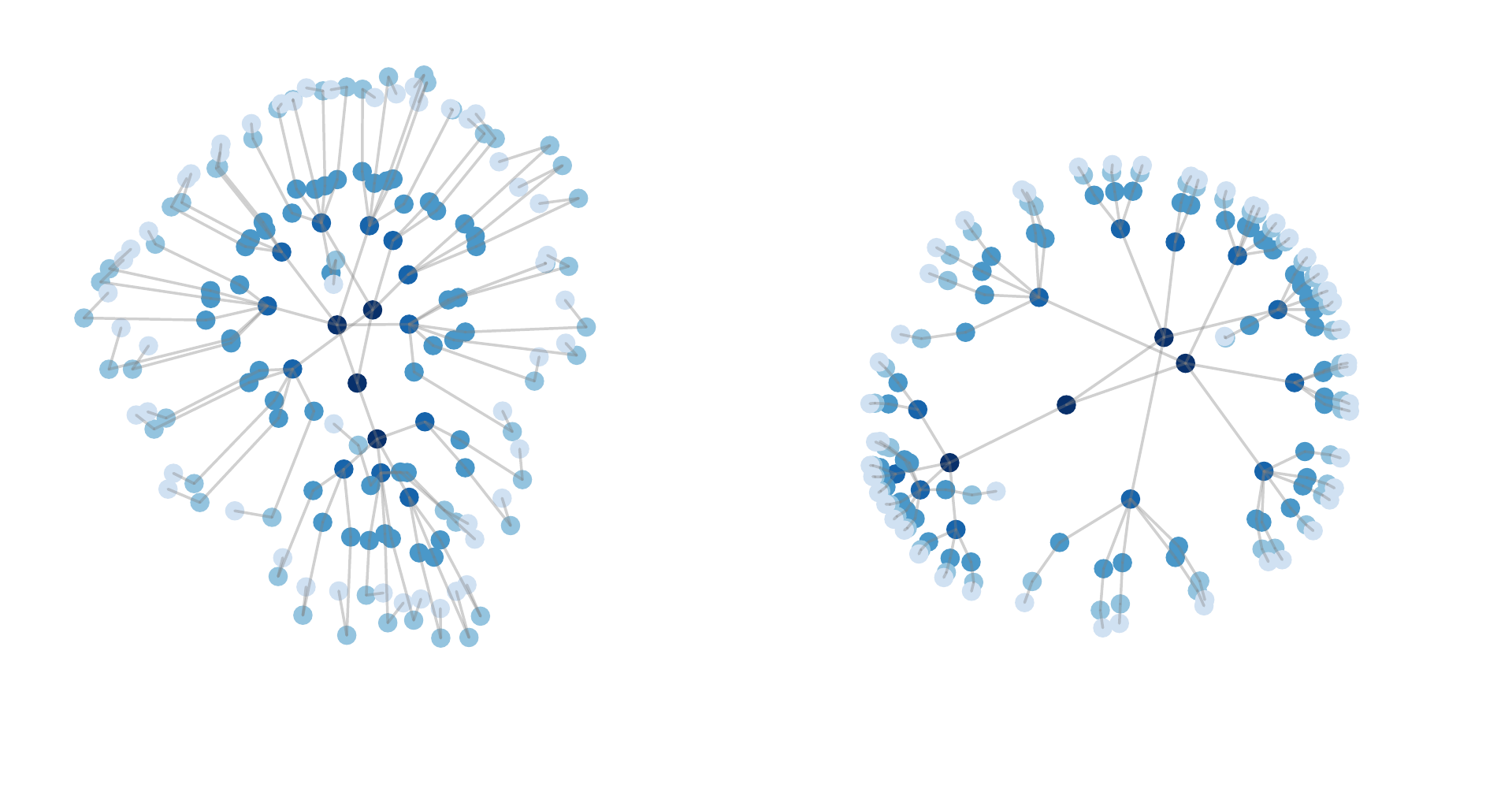}
        \caption{\name layers.}\label{fig:hyper_tree}
    \end{subfigure}%
    \begin{subfigure}[t]{0.35\textwidth}
        \centering
        \includegraphics[width=\linewidth]{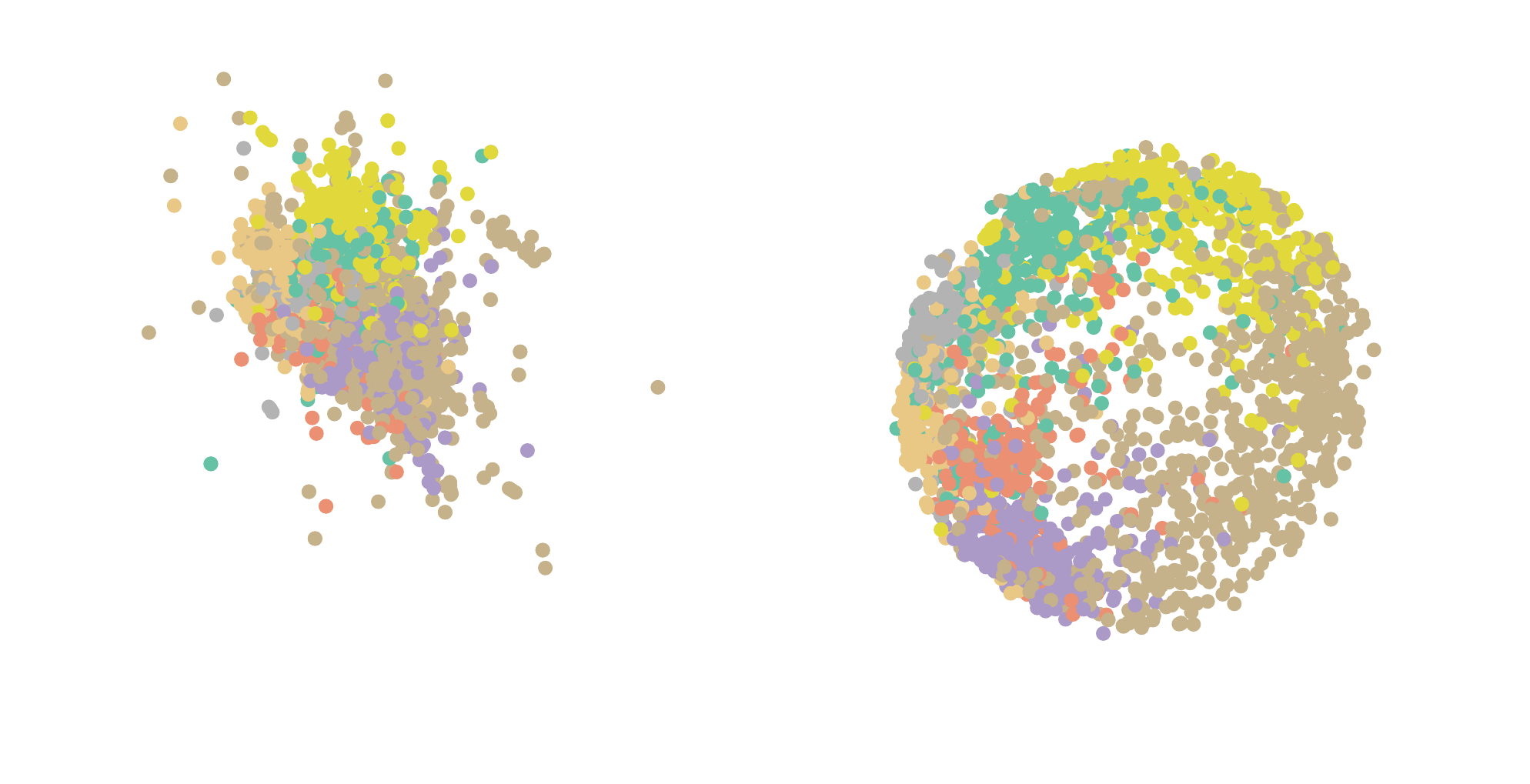}
        \caption{\textsc{GCN} (left), \HGCN (right).}\label{fig:cora}
    \end{subfigure}
    \caption{Visualization of embeddings for LP on \textsc{Disease} and NC on \textsc{Cora} (visualization on the Poincar\'e disk for \name).
     (a) GCN embeddings in first and last layers for \textsc{Disease} LP hardly capture hierarchy (depth indicated by color).
     (b) In contrast, \name preserves node hierarchies.
     (c) On \textsc{Cora} NC, \HGCN 
     leads to better class separation (indicated by different colors).}
    \label{fig:viz}
\end{figure}
\subsection{\HGCN architecture}
Having introduced all the building blocks of \name, we now summarize the model architecture.
Given a graph $\mathcal{G}=(\mathcal{V},\mathcal{E})$ and input Euclidean features $(\mathbf{x}^{0,E})_{i\in\mathcal{V}}$, the first layer of \HGCN maps from Euclidean to hyperbolic space as detailed in Section \ref{sec:euc2hyp}. 
\HGCN then stacks multiple hyperbolic graph convolution layers.
At each layer \HGCN transforms and aggregates neighbour's embeddings in the tangent space of the center node and projects the result to a hyperbolic space with different curvature. 
Hence the message passing in a \name layer is:
\begin{align}
    \mathbf{h}^{\ell,H}_i &= (W^{\ell}\otimes^{K_{\ell-1}}\mathbf{x}_i^{\ell-1,H})\oplus^{K_{\ell-1}}\mathbf{b}^\ell & \text{(hyperbolic feature transform)}\\
    \mathbf{y}^{\ell,H}_i &=  \textsc{Agg}^{K_{\ell-1}}(\mathbf{h}^{\ell,H})_i& \text{(attention-based neighborhood aggregation)}\\
    \mathbf{x}^{\ell,H}_i &=  \sigma^{\otimes^{K_{\ell-1},K_\ell}}(\mathbf{y}_i^{\ell,H}) & \text{(non-linear activation with different curvatures)}
\end{align}
where $-1/K_{\ell-1}$ and $-1/K_{\ell}$ are the hyperbolic curvatures at layer $\ell-1$ and $\ell$ respectively. 
Hyperbolic embeddings $(\mathbf{x}^{L,H})_{i\in\mathcal{V}}$ at the last layer can then be used to predict node attributes or links.  

For link prediction, we use the Fermi-Dirac decoder \cite{krioukov2010hyperbolic,nickel2017poincare}, a generalization of sigmoid, to compute probability scores for edges:
\begin{equation}
    p((i,j)\in\mathcal{E}|\mathbf{x}^{L,H}_i,\mathbf{x}_j^{L,H}) = \left[ e^{(d_\mathcal{L}^{K_L}(\mathbf{x}^{L,H}_i, \mathbf{x}^{L,H}_j)^2-r)/t } + 1  \right]^{-1},\label{eq:fermi_dirac}
\end{equation}
where $d_\mathcal{L}^{K_L}(\cdot, \cdot)$ is the hyperbolic distance and $r$ and $t$ are hyper-parameters. 
We then train \name by minimizing the cross-entropy loss using negative sampling.

For node classification, we map the output of the last \name{} layer to the tangent space of the origin with the logarithmic map $\mathrm{log}^{K_L}_{\mathbf{o}}(\cdot)$ and then perform Euclidean multinomial logistic regression. 
Note that another possibility is to directly classify points on the hyperboloid manifold using the hyperbolic multinomial logistic loss \cite{ganea2018hyperbolicNN}. 
This method performs similarly to Euclidean classification (\emph{cf.} \cite{ganea2018hyperbolicNN} for an empirical comparison).
Finally, we also add a link prediction regularization objective in node classification tasks, to encourage embeddings at the last layer to preserve the graph structure.

\subsection{Trainable curvature}
\label{subsec:curvature}
We further analyze the effect of trainable curvatures in \name.
%
Theorem \ref{thm:loss} (proof in Appendix B) shows that assuming infinite precision, for the link prediction task, we can achieve the same performance for varying curvatures with an affine invariant decoder by scaling embeddings.
\begin{restatable}[]{thm}{loss}
\label{thm:loss}
For any hyperbolic curvatures $-1/K, -1/K' <0$, for any node embeddings $H=\{\mathbf{h}_i\} \subset \mathbb{H}^{d,K}$ of a graph $G$, we can find $H' \subset \mathbb{H}^{d,K'}$, $H' = \{\mathbf{h}'_i | \mathbf{h}'_i=\sqrt{\frac{K'}{K}} \mathbf{h}_i\} $, such that the reconstructed graph from $H'$ via the Fermi-Dirac decoder is the same as the reconstructed graph from $H$, with different decoder parameters $(r, t)$ and $(r', t')$.
\end{restatable}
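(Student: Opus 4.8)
The plan is to verify that the proposed rescaling map $\mathbf{h}_i \mapsto \mathbf{h}'_i = \sqrt{K'/K}\,\mathbf{h}_i$ actually lands in the target manifold, to track exactly how it distorts hyperbolic distances, and then to absorb that distortion into the two free parameters $(r,t)$ of the Fermi-Dirac decoder.

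First I would check that $H' \subset \mathbb{H}^{d,K'}$. Since each $\mathbf{h}_i \in \mathbb{H}^{d,K}$ satisfies $\langle \mathbf{h}_i, \mathbf{h}_i\rangle_\mathcal{L} = -K$, bilinearity of the Minkowski product gives $\langle \mathbf{h}'_i, \mathbf{h}'_i\rangle_\mathcal{L} = (K'/K)\langle \mathbf{h}_i, \mathbf{h}_i\rangle_\mathcal{L} = -K'$, and the sign of the zeroth coordinate is preserved because $\sqrt{K'/K}>0$. Hence $\mathbf{h}'_i$ is a legitimate point of $\mathbb{H}^{d,K'}$, so the construction is well defined.

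Next I would compute the effect on distances using the closed form $d_\mathcal{L}^K(\mathbf{x},\mathbf{y}) = \sqrt{K}\,\mathrm{arcosh}(-\langle \mathbf{x},\mathbf{y}\rangle_\mathcal{L}/K)$ from Proposition~\ref{cor:geodesic}. The crucial observation is that the argument of $\mathrm{arcosh}$ is scale-invariant: $-\langle \mathbf{h}'_i, \mathbf{h}'_j\rangle_\mathcal{L}/K' = -(K'/K)\langle \mathbf{h}_i,\mathbf{h}_j\rangle_\mathcal{L}/K' = -\langle \mathbf{h}_i,\mathbf{h}_j\rangle_\mathcal{L}/K$, so the $\mathrm{arcosh}$ factor is identical for $H$ and $H'$ and only the $\sqrt{K}$ prefactor changes. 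This yields the clean identity $d_\mathcal{L}^{K'}(\mathbf{h}'_i,\mathbf{h}'_j) = \sqrt{K'/K}\, d_\mathcal{L}^K(\mathbf{h}_i,\mathbf{h}_j)$, i.e. every pairwise distance is rescaled by the single constant $\sqrt{K'/K}$.

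Finally I would feed this into the decoder. Because the squared distance satisfies $d_\mathcal{L}^{K'}(\mathbf{h}'_i,\mathbf{h}'_j)^2 = (K'/K)\,d_\mathcal{L}^K(\mathbf{h}_i,\mathbf{h}_j)^2$, choosing $r' = (K'/K)\,r$ and $t' = (K'/K)\,t$ makes the exponent invariant, since $(d'^2 - r')/t' = (d^2 - r)/t$. The Fermi-Dirac probability $p$ in Equation~\ref{eq:fermi_dirac} is then literally identical for every pair $(i,j)$, which is even stronger than the required equality of reconstructed (thresholded) graphs. I do not anticipate a genuine obstacle: the whole argument is a one-line scaling computation, and the only care needed is the manifold-membership bookkeeping in the first step and the exact affine matching of $(r',t')$ in the last. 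I would also remark that the construction works for any pair of negative curvatures, since only the ratio $K'/K$ ever enters, which is precisely the affine-invariance statement motivating trainable curvature.
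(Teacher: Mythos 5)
Your proposal is correct and follows essentially the same route as the paper: verify manifold membership via the Minkowski inner product, observe that the $\mathrm{arcosh}$ argument is scale-invariant so all pairwise distances rescale by the single factor $\sqrt{K'/K}$, and absorb that factor into the decoder parameters $(r,t)$. The only (minor) difference is that you work with the squared distance actually appearing in Equation~\ref{eq:fermi_dirac} and choose $(r',t')=\left(\tfrac{K'}{K}r,\tfrac{K'}{K}t\right)$ so that the edge probabilities themselves are preserved exactly, whereas the paper's proof uses an unsquared distance and only matches the decision threshold $r+t\log\left(\tfrac{1-b}{b}\right)$ --- your version is, if anything, slightly stronger and more faithful to the stated decoder.
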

However, despite the same expressive power, adjusting curvature at every layer is important for good performance in practice due to factors of limited machine precision and normalization.
First, with very low or very high curvatures, the scaling factor $\frac{K'}{K}$ in Theorem \ref{thm:loss} becomes close to 0 or very large,
and limited machine precision results in large error due to rounding. This is supported by Figure \ref{fig:sweep_c} and Table \ref{tab:ablations} where adjusting and training curvature lead to significant performance gain.
Second, the norms of hidden layers that achieve the same local minimum in training also vary by a factor of $\sqrt{K}$. 
In practice, however, optimization is much more stable when the values are normalized \cite{ioffe2015batch}. 
In the context of \name, trainable curvature provides a natural way to learn embeddings of the right scale at each layer, improving optimization.
Figure \ref{fig:sweep_c} shows the effect of decreasing curvature ($K=+\infty$ is the Euclidean case) on link prediction performance.

\begin{table}[t]	
\resizebox{\textwidth}{!}{ \renewcommand{\arraystretch}{1.0}
\centering
\begin{tabular}{@{}clcccccccccccc@{}}\cmidrule[\heavyrulewidth]{2-14}
& 
\textbf{Dataset} &\multicolumn{2}{c}{\textsc{Disease}} & \multicolumn{2}{c}{\textsc{Disease-M}} & \multicolumn{2}{c}{\textsc{\textsc{Human PPI}}} & \multicolumn{2}{c}{\textsc{Airport}}  &  \multicolumn{2}{c}{\textsc{PubMed}} & \multicolumn{2}{c}{\textsc{Cora}} \\
& \textbf{Hyperbolicity $\delta$} & \multicolumn{2}{c}{$\delta=0$} & \multicolumn{2}{c}{$\delta=0$} &\multicolumn{2}{c}{$\delta=1$} &\multicolumn{2}{c}{$\delta=1$} & \multicolumn{2}{c}{$\delta=3.5$} & \multicolumn{2}{c}{$\delta=11$}
\\\cmidrule{3-14}
& \textbf{Method} & {\textsc{LP}} & {\textsc{NC}} & {\textsc{LP}} & {\text{NC}} & {\textsc{LP}} & {\text{NC}} & {\textsc{LP}} & {\text{NC}} & {\textsc{LP}} & {\text{NC}} & {\textsc{LP}} & {\text{NC}}
\\ \cmidrule{2-14}
\multirow{4}{*}{\rotatebox{90}{\hspace*{-6pt} Shallow}} 
& \textsc{Euc} & 59.8 $\pm$ 2.0 & 32.5 $\pm$ 1.1 & -  & -  & -  & - & 92.0 $\pm$ 0.0& 60.9 $\pm$ 3.4 & 83.3 $\pm$ 0.1 & 48.2 $\pm$ 0.7 & 82.5 $\pm$ 0.3 & 23.8 $\pm$ 0.7\\ 
& \textsc{Hyp} \cite{nickel2017poincare} & 63.5 $\pm$ 0.6 &  45.5 $\pm$ 3.3 & -  &  -  & - & - & 94.5 $\pm$ 0.0& 70.2 $\pm$ 0.1 & 87.5 $\pm$ 0.1 & 68.5 $\pm$ 0.3 & 87.6 $\pm$ 0.2 & 22.0 $\pm$ 1.5  \\     
& \textsc{Euc-Mixed} &  49.6 $\pm$ 1.1 & 35.2 $\pm$ 3.4 & -  &  - &  - & - & 91.5 $\pm$ 0.1 &68.3 $\pm$ 2.3 & 86.0 $\pm$ 1.3 & 63.0 $\pm$ 0.3& 84.4 $\pm$ 0.2 & 46.1 $\pm$ 0.4   \\  
& \textsc{Hyp-Mixed} & 55.1 $\pm$ 1.3 &  56.9 $\pm$ 1.5 & -  &  -  & - & - & 93.3 $\pm$ 0.0& 69.6 $\pm$ 0.1 & 83.8 $\pm$ 0.3 & 73.9 $\pm$ 0.2& 85.6 $\pm$ 0.5 & 45.9 $\pm$ 0.3 \\  
\cmidrule{2-14}
\multirow{2}{*}{\rotatebox{90}{NN}}        
& \textsc{MLP} & 72.6 $\pm$ 0.6 & 28.8 $\pm$ 2.5 & 55.3 $\pm$ 0.5  &  55.9 $\pm$ 0.3 & 67.8 $\pm$ 0.2 & 55.3$ \pm $0.4 & 89.8 $\pm$ 0.5 & 68.6 $\pm$ 0.6 & 84.1 $\pm$ 0.9 & 72.4 $\pm$ 0.2 & 83.1 $\pm$ 0.5 & 51.5 $\pm$ 1.0 \\ 
& \textsc{HNN\cite{ganea2018hyperbolicNN}} & 75.1 $\pm$ 0.3 & 41.0 $\pm$ 1.8  & 60.9 $\pm$ 0.4 & 56.2 $\pm$ 0.3 & 72.9 $\pm$ 0.3 & 59.3 $\pm$ 0.4 & 90.8 $\pm$ 0.2 & 80.5 $\pm$ 0.5 & 94.9 $\pm$ 0.1 & 69.8 $\pm$ 0.4 & 89.0 $\pm$ 0.1 & 54.6 $\pm$ 0.4\\
\cmidrule{2-14}
\multirow{4}{*}{\rotatebox{90}{GNN}}
& \textsc{GCN}\cite{kipf2016semi} & 64.7 $\pm 0.5$ &  69.7 $\pm$ 0.4 &  66.0 $\pm$ 0.8  &  59.4 $\pm$ 3.4 & 77.0 $\pm$ 0.5  & 69.7 $\pm$ 0.3 & 89.3 $\pm$ 0.4 & 81.4 $\pm$ 0.6 & 91.1 $\pm$ 0.5 & 78.1 $\pm$ 0.2 & 90.4 $\pm$ 0.2 & 81.3 $\pm$ 0.3 \\ 
& \textsc{GAT} \cite{velickovic2017graph} & 69.8 $\pm 0.3$  & 70.4 $\pm$ 0.4 & 69.5 $\pm$ 0.4  &  62.5 $\pm$ 0.7 & 76.8 $\pm$ 0.4 &70.5 $\pm$ 0.4 & 90.5 $\pm$ 0.3 & 81.5 $\pm$ 0.3 & 91.2 $\pm$ 0.1 & 79.0 $\pm$ 0.3 & \textbf{93.7} $\pm$ 0.1 & \ \textbf{83.0} $\pm$ 0.7\\ 
& \textsc{SAGE} \cite{hamilton2017inductive} & 65.9 $\pm$ 0.3  &  69.1 $\pm$ 0.6 &  67.4 $\pm$ 0.5 & 61.3 $\pm$ 0.4 & 78.1 $\pm$ 0.6 & 69.1 $\pm$ 0.3 & 90.4 $\pm$ 0.5 & 82.1 $\pm$ 0.5 & 86.2 $\pm$ 1.0 & 77.4 $\pm$ 2.2 & 85.5 $\pm$ 0.6 & 77.9 $\pm$ 2.4  \\ 
& \textsc{SGC} \cite{wu2019simplifying} & 65.1 $\pm$ 0.2  &  69.5 $\pm$ 0.2	 & 66.2 $\pm$ 0.2 &  60.5 $\pm$ 0.3 & 76.1 $\pm$ 0.2 & 71.3 $\pm$ 0.1 & 89.8 $\pm$ 0.3 & 80.6 $\pm$ 0.1 & 94.1 $\pm$ 0.0 &  78.9 $\pm$ 0.0 & 91.5 $\pm$ 0.1 & 81.0 $\pm$ $0.1$ \\ 
\cmidrule{2-14}
\multirow{3}{*}{\rotatebox{90}{Ours}} 
& \textsc{\name} & \textbf{90.8} $\pm$ 0.3  & \textbf{74.5} $\pm$ 0.9 &  \textbf{78.1} $\pm$ 0.4 & \textbf{72.2} $\pm$ 0.5 & $\textbf{84.5}$ $\pm$ 0.4 & \textbf{74.6} $\pm$ 0.3 & \textbf{96.4} $\pm$ 0.1 & \textbf{90.6} $\pm$ 0.2 & \textbf{96.3} $\pm$ 0.0 & \textbf{80.3} $\pm$ 0.3 & 92.9 $\pm$ 0.1 & 79.9 $\pm$ 0.2 \\	 
\cmidrule{2-14}
& \textsc{($\%$) Err Red}  & -63.1\%  & -13.8\% & -28.2\%  & -25.9\% & -29.2\% & -11.5\% & -60.9\% & -47.5\% & -27.5\% & -6.2\% & +12.7\% & +18.2\% \\	  
\cmidrule[\heavyrulewidth]{2-14}
\end{tabular}}
\caption{ROC AUC for Link Prediction (LP) and F1 score for Node Classification (NC) tasks. For inductive datasets, we only evaluate inductive methods since shallow methods cannot generalize to unseen nodes/graphs. We report graph hyperbolicity values $\delta$ (lower is more hyperbolic).}
\label{tab:results}
\vspace{-2mm}
\end{table}
\section{Experiments}
We comprehensively evaluate our method on a variety of networks, on both node classification (NC) and link prediction (LP) tasks, in transductive and inductive settings. We compare performance of \name against a variety of shallow and GNN-based baselines. We further use visualizations to investigate the expressiveness of \name in link prediction tasks, and also demonstrate its ability to learn embeddings that capture the hierarchical structure of many real-world networks.

\subsection{Experimental setup}
\xhdr{Datasets}
We use a variety of open transductive and inductive datasets that we detail below (more details in Appendix).
We compute Gromov’s $\delta-$hyperbolicity \cite{adcock2013tree,narayan2011large,jonckheere2008scaled}, a notion from group theory that measures how tree-like a graph is. 
The lower $\delta$, the more hyperbolic is the graph dataset, and $\delta=0$ for trees.
We conjecture that \name works better on graphs with small $\delta$-hyperbolicity. 
\begin{enumerate}[leftmargin=*]
    \item \xhdr{Citation networks} \textsc{Cora} \cite{sen2008collective} and \textsc{PubMed} \cite{namata2012query} are standard benchmarks describing citation networks where nodes represent scientific papers, edges are citations between them, and node labels are academic (sub)areas. 
    \textsc{Cora} contains 2,708 machine learning papers divided into 7 classes while \textsc{PubMed} has 19,717 publications in the area of medicine grouped in 3 classes. 
    \item \xhdr{Disease propagation tree} 
    We simulate the SIR disease spreading model \cite{anderson1992infectious}, 
    where the label of a node is whether the node was infected or not.
    Based on the model, we build tree networks, where node features indicate the susceptibility to the disease.
    We build transductive and inductive variants of this dataset, namely \textsc{Disease} and \textsc{Disease-M} (which contains multiple tree components).
    \item \xhdr{Protein-protein interactions (PPI) networks} \textsc{PPI} is a dataset of human PPI networks \cite{szklarczyk2016string}. Each human tissue has a PPI network, and the dataset is a union of PPI networks for human tissues. 
    Each protein has a label indicating the stem cell growth rate after 19 days \cite{van2014cortecon}, which we use for the node classification task. The 16-dimensional feature for each node represents the RNA expression levels of the corresponding proteins, and we perform log transform on the features. 
    \item \xhdr{Flight networks} 
    \textsc{Airport} is a transductive dataset where nodes represent airports and edges represent the airline routes as from \href{https://openflights.org/}{OpenFlights.org}. 
    Compared to previous compilations \cite{zhang2018link}, our dataset has larger size (2,236 nodes). 
    We also augment the graph with geographic information (longitude, latitude and altitude), and GDP of the country where the airport belongs to. 
    We use the population of the country where the airport belongs to as the label for node classification.
\end{enumerate}

\xhdr{Baselines}
For shallow methods, we consider Euclidean embeddings (\textsc{Euc}) and Poincar\'e embeddings (\textsc{Hyp}) \cite{nickel2017poincare}. 
We conjecture that \textsc{Hyp} will outperform \textsc{Euc} on hierarchical graphs.
For a fair comparison with \name which leverages node features, we also consider \textsc{Euc-Mixed} and \textsc{Hyp-Mixed} baselines, where we concatenate the corresponding shallow embeddings with node features, followed by a MLP to predict node labels or links.
For state-of-the-art Euclidean GNN models, we consider \textsc{GCN} \cite{kipf2016semi}, GraphSAGE (\textsc{SAGE}) \cite{hamilton2017inductive}, Graph Attention Networks (\textsc{GAT}) \cite{velickovic2017graph} and Simplified Graph Convolution (\textsc{SGC})  \cite{wu2019simplifying}\footnote{The equivalent of \textsc{GCN} in link prediction is \textsc{GAE} \cite{kipf2016variational}. We did not compare link prediction GNNs based on shallow embeddings such as \cite{zhang2018link} since they are not inductive.}.
We also consider feature-based approaches: MLP and its hyperbolic variant (\textsc{HNN}) \cite{ganea2018hyperbolicNN}, which does not utilize the graph structure.

\xhdr{Training}
For all methods, we perform a hyper-parameter search on a validation set over initial learning rate, weight decay, dropout\footnote{\name uses DropConnect \cite{wan2013regularization}, as described in Appendix C.}, number of layers, and activation functions.
We measure performance on the final test set over 10 random parameter initializations.
For fairness, we also control the number of dimensions to be the same (16) for all methods.
We optimize all models with Adam \cite{kingma2014adam}, except Poincar\'e embeddings which are optimized with RiemannianSGD \cite{bonnabel2013stochastic,zhang2016riemannian}. 
Further details can be found in Appendix. 
We open source our implementation\footnote{Code available at \href{http://snap.stanford.edu/hgcn}{http://snap.stanford.edu/hgcn}. We provide \name implementations for hyperboloid and Poincar\'e models. Empirically, both models give similar performance but hyperboloid model offers more stable optimization, because Poincar\'e distance is numerically unstable \cite{nickel2018learning}.
} of \name and baselines.
\normalsize
\begin{figure}[t]
\begin{minipage}{\textwidth}
\begin{minipage}[b]{0.49\textwidth}
    \centering
    \includegraphics[height=2.9cm,width=6cm]{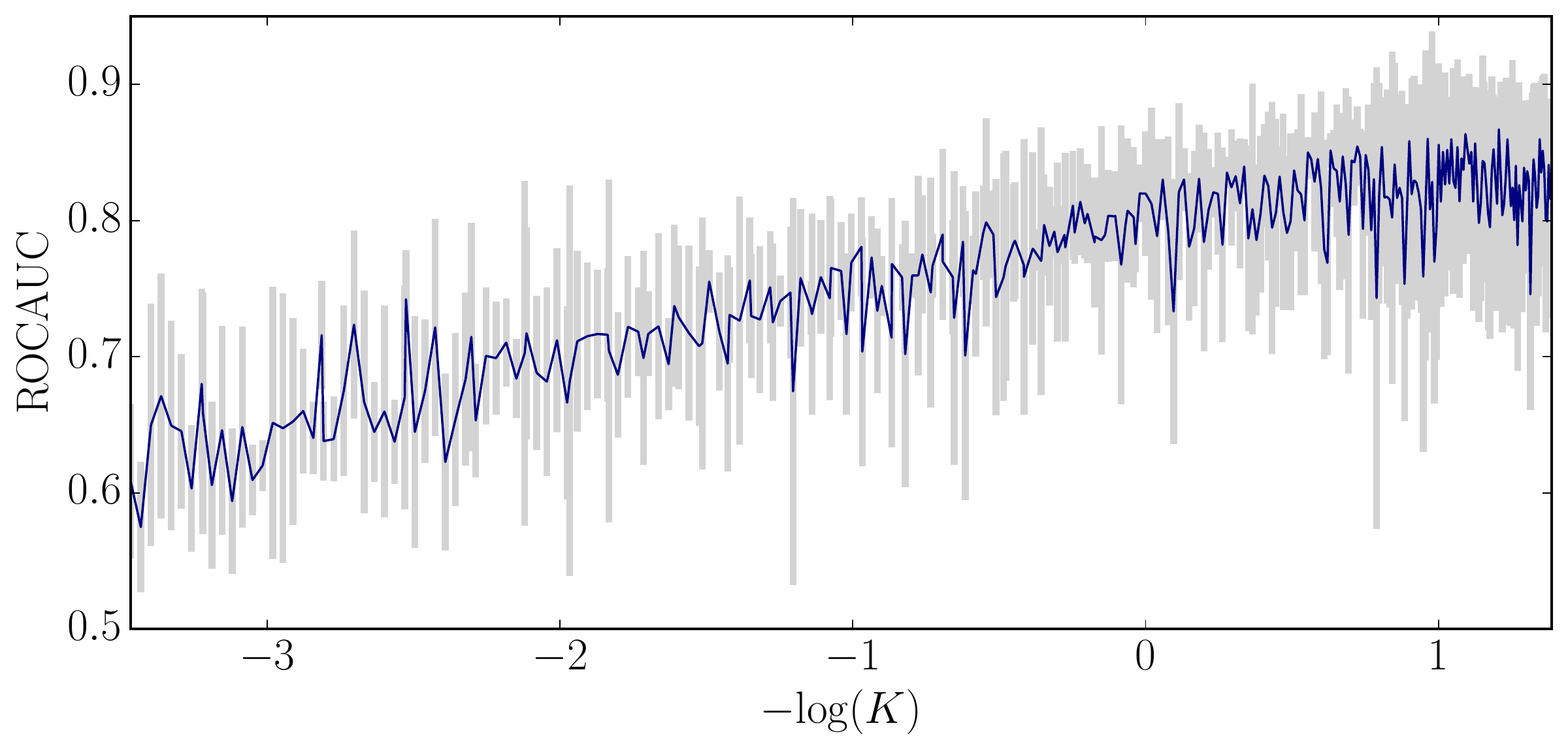}
    \vspace{-8pt}
    \captionof{figure}{Decreasing curvature ($-1/K$) improves link prediction performance on \textsc{Disease}.}\label{fig:sweep_c}
  \end{minipage}
  \hfill
  \begin{minipage}[b]{0.49\textwidth}
    \centering
    \small
    \begin{tabular}{@{}clccc@{}}
    \hline
    \multirow{1}{*}{\vspace*{8pt}\textbf{Method}}&\multicolumn{1}{c}{\textsc{Disease}} & \multicolumn{1}{c}{\textsc{Airport}}
    \\ 
    \hline
    \textsc{\HGCN} & 78.4 $\pm$ 0.3 & 91.8 $\pm$ 0.3 \\
    \textsc{\HGCN-Att}$_\mathbf{o}$ & 80.9 $\pm$ 0.4 & 92.3 $\pm$ 0.3 \\
    \textsc{\HGCN-Att} & 82.0 $\pm$ 0.2 & 92.5 $\pm$ 0.2 \\
    \textsc{\HGCN-C} & 89.1 $\pm$ 0.2 & 94.9 $\pm$ 0.3 \\
    \textsc{\HGCN-Att-C} & \textbf{90.8} $\pm$ 0.3 & \textbf{96.4} $\pm$ 0.1 \\
    \hline
    \end{tabular}
    \vspace{5pt}
    \captionof{table}{ROC AUC for link prediction on \textsc{Airport} and \textsc{Disease} datasets.}\label{tab:ablations}
    \end{minipage}
\end{minipage}
\end{figure}

\normalsize
\xhdr{Evaluation metric}
In transductive LP tasks, we randomly split edges into $85/5/10 \%$ for training, validation and test sets.  
For transductive NC, we use $70/15/15 \%$ splits for \textsc{Airport}, $30/10/60 \%$ splits for \textsc{Disease}, and we use standard splits \cite{kipf2016semi,yang2016revisiting} with 20 train examples per class for \textsc{Cora} and \textsc{Pubmed}. 
One of the main advantages of \name over related hyperbolic graph embedding is its inductive capability. 
For inductive tasks, the split is performed across graphs. 
All nodes/edges in training graphs are considered the training set, and the model is asked to predict node class or unseen links for test graphs.
Following previous works, we evaluate link prediction by measuring area under the ROC curve on the test set and 
evaluate node classification by measuring F1 score, except for \textsc{Cora} and \textsc{Pubmed}, where we report accuracy as is standard in the literature.

\normalsize
\subsection{Results}
\cut{
\begin{wrapfigure}{r}{0.3\textwidth}
\vspace{-15pt}
  \begin{center}
  \includegraphics[width=0.85\linewidth]{figs/curvature.pdf}
  \caption{Effect of curvature on LP performance.}
  \label{fig:sweep_c}
  \end{center}
\end{wrapfigure}}
Table \ref{tab:results} reports the performance of \name in comparison to baseline methods.
\name works best in inductive scenarios where both node features and network topology play an important role. 
The performance gain of \HGCN with respect to Euclidean GNN models is correlated with graph hyperbolicity.
\name achieves an average of 45.4\% (LP) and 12.3\% (NC) error reduction compared to the best deep baselines for graphs with high hyperbolicity (low $\delta$), suggesting that GNNs can significantly benefit from hyperbolic geometry, especially in link prediction tasks.
Furthermore, the performance gap between \name and \textsc{HNN} suggests that neighborhood aggregation has been effective in learning node representations in graphs. 
For example, in disease spread datasets, both Euclidean attention and hyperbolic geometry lead to significant improvement of \name over other baselines. 
This can be explained by the fact that in disease spread trees, parent nodes contaminate their children. 
\name can successfully model these asymmetric and hierarchical relationships with hyperbolic attention and improves performance over all baselines.  

On the \textsc{Cora} dataset with low hyperbolicity, \name does not outperform Euclidean GNNs, suggesting that Euclidean geometry is better for its underlying graph structure. 
However, for small dimensions, \name is still significantly more effective than GCN even with \textsc{Cora}. Figure \ref{fig:cora} shows 2-dimensional \name and \textsc{GCN} embeddings trained with LP objective, where colors denote the label class. 
\name achieves much better label class separation.

\subsection{Analysis}
\xhdr{Ablations} We further analyze the effect of proposed components in \name, namely hyperbolic attention (\textsc{ATT}) and trainable curvature (\textsc{C}) on \textsc{Airport} and \textsc{Disease} datasets in Table \ref{tab:ablations}.
We observe that both attention and trainable curvature lead to performance gains over \name with fixed curvature and no attention. 
Furthermore, our attention model \textsc{ATT} outperforms \textsc{ATT}$_\mathbf{o}$ (aggregation in tangent space at $\mathbf{o}$), and we conjecture that this is because the local Euclidean average is a better approximation near the center point rather than near $\mathbf{o}$.
Finally, the addition of both \textsc{ATT} and \textsc{C} improves performance even further, suggesting that both components are important in \name. 

\xhdr{Visualizations}
We first visualize the GCN and \name embeddings at the first and last layers in Figure \ref{fig:viz}.
We train \name with 3-dimensional hyperbolic embeddings and map them to the Poincar\'e disk which is better for visualization.
In contrast to GCN, tree structure is preserved in \name, where nodes close to the center are higher in the hierarchy of the tree. 
This way \name smoothly transforms Euclidean features to Hyperbolic embeddings that preserve node hierarchy.

Figure \ref{fig:att_weights} shows the attention weights in the 2-hop neighborhood of a center node (red) for the \textsc{Disease} dataset.
The red node is the node where we compute attention. 
The darkness of the color for other nodes denotes their hierarchy. 
The attention weights for nodes in the neighborhood are visualized by the intensity of edges.
We observe that in \name the center node pays more attention to its (grand)parent.
In contrast to Euclidean \textsc{GAT}, our aggregation with attention in hyperbolic space allows us to pay more attention to nodes with high hierarchy.
Such attention is crucial to good performance in \textsc{Disease}, because only sick parents will propagate the disease to their children.
\cut{
\begin{wrapfigure}{l}{0.2\textwidth}
\vspace{-22pt}
    \begin{subfigure}[b]{0.24\textwidth}
        \includegraphics[width=\textwidth]{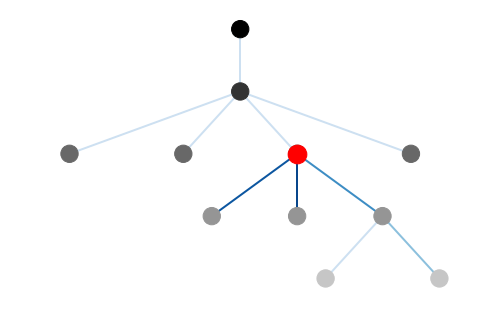}
        \caption{GCN}
        \label{fig:gcn_att_2}
    \end{subfigure}
    \begin{subfigure}[b]{0.24\textwidth}
        \includegraphics[width=\textwidth]{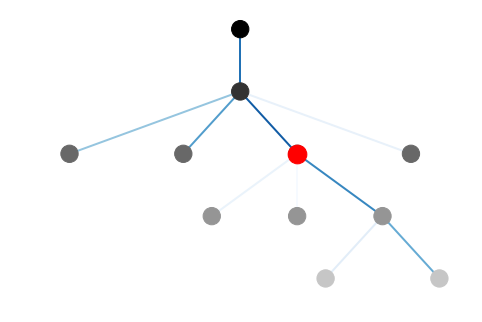}
        \caption{\name}
        \label{fig:hgcn_att_1}
    \end{subfigure}
    \caption{ 
    Attention weights. The red node is the node where we compute attention for. The darkness of the color for other nodes denote their hierarchy. The attention weights for nodes in neighborhood with respect to the red node are visualized by the intensity of edges.
    }
    \label{fig:att_weights}
\vspace{-17pt}
\end{wrapfigure}}

\begin{figure}[t!]
    \centering\includegraphics[width=0.42\textwidth]{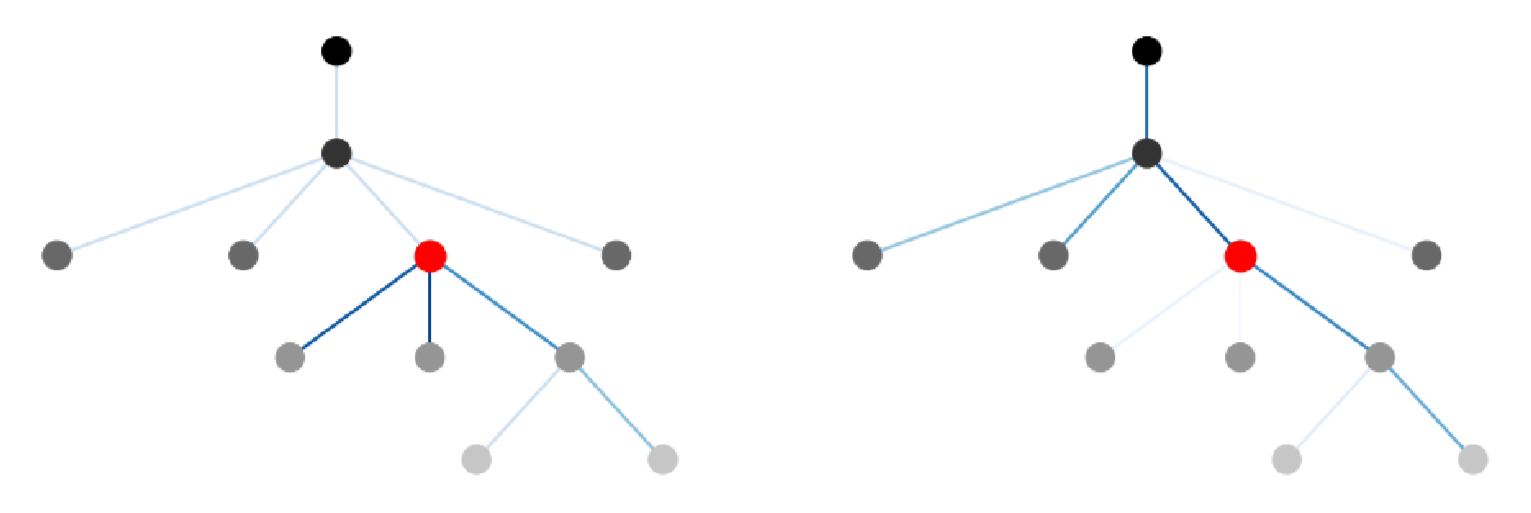}
    \caption{ 
    Attention: Euclidean GAT (left), \name (right). Each graph represents a 2-hop neighborhood of the {\textsc{Disease-M}} dataset. 
    }
    \label{fig:att_weights}
\end{figure}
\section{Conclusion}
We introduced \name, a novel architecture that learns hyperbolic embeddings using graph convolutional networks. In \name, the Euclidean input features are successively mapped to embeddings in hyperbolic spaces with trainable curvatures at every layer.
\name achieves new state-of-the-art in learning embeddings for real-world hierarchical and scale-free graphs.

\subsubsection*{Acknowledgments}
Jure Leskovec is a Chan Zuckerberg Biohub investigator.
This research has been supported in part by 
DARPA under FA865018C7880 (ASED), (MSC); 
NIH under No. U54EB020405 (Mobilize); 
ARO under MURI; 
IARPA under No. 2017-17071900005 (HFC),
NSF under No. OAC-1835598 (CINES); 
Stanford Data Science Initiative, 
Chan Zuckerberg Biohub, 
JD.com, Amazon, Boeing, Docomo, Huawei, Hitachi, Observe, Siemens, and UST Global.
We gratefully acknowledge the support of DARPA under Nos. FA87501720095 (D3M), FA86501827865 (SDH), and FA86501827882 (ASED); NIH under No. U54EB020405 (Mobilize), NSF under Nos. CCF1763315 (Beyond Sparsity), CCF1563078 (Volume to Velocity), and 1937301 (RTML);  ONR under No. N000141712266 (Unifying Weak Supervision); the Moore Foundation, NXP, Xilinx, LETI-CEA, Intel, IBM, Microsoft, NEC, Toshiba, TSMC, ARM, Hitachi, BASF, Accenture, Ericsson, Qualcomm, Analog Devices, the Okawa Foundation, American Family Insurance, Google Cloud, Swiss Re, TOTAL, and members of the Stanford DAWN project: Teradata, Facebook, Google, Ant Financial, NEC, VMWare, and Infosys. The U.S. Government is authorized to reproduce and distribute reprints for Governmental purposes notwithstanding any copyright notation thereon. Any opinions, findings, and conclusions or recommendations expressed in this material are those of the authors and do not necessarily reflect the views, policies, or endorsements, either expressed or implied, of DARPA, NIH, ONR, or the U.S. Government.

\bibliographystyle{plain}
\bibliography{ref}

\begin{thebibliography}{10}

\bibitem{adcock2013tree}
Aaron~B Adcock, Blair~D Sullivan, and Michael~W Mahoney.
\newblock Tree-like structure in large social and information networks.
\newblock In {\em 2013 IEEE 13th International Conference on Data Mining},
  pages 1--10. IEEE, 2013.

\bibitem{anderson1992infectious}
Roy~M Anderson and Robert~M May.
\newblock {\em Infectious diseases of humans: dynamics and control}.
\newblock Oxford university press, 1992.

\bibitem{belkin2002laplacian}
Mikhail Belkin and Partha Niyogi.
\newblock Laplacian eigenmaps and spectral techniques for embedding and
  clustering.
\newblock In {\em Advances in neural information processing systems}, pages
  585--591, 2002.

\bibitem{bonnabel2013stochastic}
Silvere Bonnabel.
\newblock Stochastic gradient descent on riemannian manifolds.
\newblock {\em IEEE Transactions on Automatic Control}, 2013.

\bibitem{chamberlain2017neural}
Benjamin~Paul Chamberlain, James Clough, and Marc~Peter Deisenroth.
\newblock Neural embeddings of graphs in hyperbolic space.
\newblock {\em arXiv preprint arXiv:1705.10359}, 2017.

\bibitem{chen2013hyperbolicity}
Wei Chen, Wenjie Fang, Guangda Hu, and Michael~W Mahoney.
\newblock On the hyperbolicity of small-world and treelike random graphs.
\newblock {\em Internet Mathematics}, 9(4):434--491, 2013.

\bibitem{clauset2008hierarchical}
Aaron Clauset, Cristopher Moore, and Mark~EJ Newman.
\newblock Hierarchical structure and the prediction of missing links in
  networks.
\newblock {\em Nature}, 453(7191):98, 2008.

\bibitem{dhingra2018embedding}
Bhuwan Dhingra, Christopher~J Shallue, Mohammad Norouzi, Andrew~M Dai, and
  George~E Dahl.
\newblock Embedding text in hyperbolic spaces.
\newblock {\em NAACL HLT}, 2018.

\bibitem{frechet1948elements}
Maurice Fr{\'e}chet.
\newblock Les {\'e}l{\'e}ments al{\'e}atoires de nature quelconque dans un
  espace distanci{\'e}.
\newblock In {\em Annales de l'institut Henri Poincar{\'e}}, 1948.

\bibitem{ganea2018hyperbolicNN}
Octavian Ganea, Gary B{\'e}cigneul, and Thomas Hofmann.
\newblock Hyperbolic neural networks.
\newblock In {\em Advances in neural information processing systems}, pages
  5345--5355, 2018.

\bibitem{ganea2018hyperbolic}
Octavian-Eugen Ganea, Gary Becigneul, and Thomas Hofmann.
\newblock Hyperbolic entailment cones for learning hierarchical embeddings.
\newblock In {\em International Conference on Machine Learning}, pages
  1632--1641, 2018.

\bibitem{grover2016node2vec}
Aditya Grover and Jure Leskovec.
\newblock node2vec: Scalable feature learning for networks.
\newblock In {\em Proceedings of the 22nd ACM SIGKDD international conference
  on Knowledge discovery and data mining}, pages 855--864. ACM, 2016.

\bibitem{gu2018learning}
Albert Gu, Frederic Sala, Beliz Gunel, and Christopher R\'e.
\newblock Learning mixed-curvature representations in product spaces.
\newblock In {\em International Conference on Learning Representations}, 2019.

\bibitem{gulcehre2018hyperbolic}
Caglar Gulcehre, Misha Denil, Mateusz Malinowski, Ali Razavi, Razvan Pascanu,
  Karl~Moritz Hermann, Peter Battaglia, Victor Bapst, David Raposo, Adam
  Santoro, et~al.
\newblock Hyperbolic attention networks.
\newblock In {\em International Conference on Learning Representations}, 2019.

\bibitem{hamilton2017inductive}
Will Hamilton, Zhitao Ying, and Jure Leskovec.
\newblock Inductive representation learning on large graphs.
\newblock In {\em Advances in Neural Information Processing Systems}, pages
  1024--1034, 2017.

\bibitem{ioffe2015batch}
Sergey Ioffe and Christian Szegedy.
\newblock Batch normalization: Accelerating deep network training by reducing
  internal covariate shift.
\newblock In {\em International Conference on Machine Learning}, pages
  448--456, 2015.

\bibitem{jonckheere2008scaled}
Edmond Jonckheere, Poonsuk Lohsoonthorn, and Francis Bonahon.
\newblock Scaled gromov hyperbolic graphs.
\newblock {\em Journal of Graph Theory}, 2008.

\bibitem{khrulkov2019hyperbolic}
Valentin Khrulkov, Leyla Mirvakhabova, Evgeniya Ustinova, Ivan Oseledets, and
  Victor Lempitsky.
\newblock Hyperbolic image embeddings.
\newblock {\em arXiv preprint arXiv:1904.02239}, 2019.

\bibitem{kingma2014adam}
Diederik~P Kingma and Jimmy Ba.
\newblock Adam: A method for stochastic optimization.
\newblock In {\em International Conference on Learning Representations}, 2015.

\bibitem{kipf2016variational}
Thomas~N Kipf and Max Welling.
\newblock Variational graph auto-encoders.
\newblock {\em NeurIPS Workshop on Bayesian Deep Learning}, 2016.

\bibitem{kipf2016semi}
Thomas~N Kipf and Max Welling.
\newblock Semi-supervised classification with graph convolutional networks.
\newblock In {\em International Conference on Learning Representations}, 2017.

\bibitem{kleinberg2007geographic}
Robert Kleinberg.
\newblock Geographic routing using hyperbolic space.
\newblock In {\em IEEE International Conference on Computer Communications},
  2007.

\bibitem{krioukov2010hyperbolic}
Dmitri Krioukov, Fragkiskos Papadopoulos, Maksim Kitsak, Amin Vahdat, and
  Mari{\'a}n Bogun{\'a}.
\newblock Hyperbolic geometry of complex networks.
\newblock {\em Physical Review E}, 2010.

\bibitem{kruskal1964multidimensional}
Joseph~B Kruskal.
\newblock Multidimensional scaling by optimizing goodness of fit to a nonmetric
  hypothesis.
\newblock {\em Psychometrika}, 1964.

\bibitem{li2015gated}
Yujia Li, Daniel Tarlow, Marc Brockschmidt, and Richard Zemel.
\newblock Gated graph sequence neural networks.
\newblock In {\em International Conference on Learning Representations}, 2016.

\bibitem{Liu2019hyperbolic}
Qi~Liu, Maximilian Nickel, and Douwe Kiela.
\newblock Hyperbolic graph neural networks.
\newblock In {\em Advances in Neural Information Processing Systems}, 2019.

\bibitem{namata2012query}
Galileo Namata, Ben London, Lise Getoor, Bert Huang, and UMD EDU.
\newblock Query-driven active surveying for collective classification.
\newblock 2012.

\bibitem{narayan2011large}
Onuttom Narayan and Iraj Saniee.
\newblock Large-scale curvature of networks.
\newblock {\em Physical Review E}, 84(6):066108, 2011.

\bibitem{nickel2017poincare}
Maximillian Nickel and Douwe Kiela.
\newblock Poincar{\'e} embeddings for learning hierarchical representations.
\newblock In {\em Advances in neural information processing systems}, pages
  6338--6347, 2017.

\bibitem{nickel2018learning}
Maximillian Nickel and Douwe Kiela.
\newblock Learning continuous hierarchies in the lorentz model of hyperbolic
  geometry.
\newblock In {\em International Conference on Machine Learning}, pages
  3776--3785, 2018.

\bibitem{perozzi2014deepwalk}
Bryan Perozzi, Rami Al-Rfou, and Steven Skiena.
\newblock Deepwalk: Online learning of social representations.
\newblock In {\em Proceedings of the 20th ACM SIGKDD international conference
  on Knowledge discovery and data mining}, pages 701--710. ACM, 2014.

\bibitem{ravasz2003hierarchical}
Erzs{\'e}bet Ravasz and Albert-L{\'a}szl{\'o} Barab{\'a}si.
\newblock Hierarchical organization in complex networks.
\newblock {\em Physical review E}, 2003.

\bibitem{robbin2011introduction}
Joel~W Robbin and Dietmar~A Salamon.
\newblock Introduction to differential geometry.
\newblock {\em ETH, Lecture Notes, preliminary version}, 2011.

\bibitem{de2018representation}
Frederic Sala, Chris De~Sa, Albert Gu, and Christopher R\'e.
\newblock Representation tradeoffs for hyperbolic embeddings.
\newblock In {\em International Conference on Machine Learning}, pages
  4457--4466, 2018.

\bibitem{sarkar2011low}
Rik Sarkar.
\newblock Low distortion delaunay embedding of trees in hyperbolic plane.
\newblock In {\em International Symposium on Graph Drawing}, 2011.

\bibitem{sen2008collective}
Prithviraj Sen, Galileo Namata, Mustafa Bilgic, Lise Getoor, Brian Galligher,
  and Tina Eliassi-Rad.
\newblock Collective classification in network data.
\newblock {\em AI magazine}, 2008.

\bibitem{szklarczyk2016string}
Damian Szklarczyk, John~H Morris, Helen Cook, Michael Kuhn, Stefan Wyder, Milan
  Simonovic, Alberto Santos, Nadezhda~T Doncheva, Alexander Roth, Peer Bork,
  et~al.
\newblock The string database in 2017: quality-controlled protein--protein
  association networks, made broadly accessible.
\newblock {\em Nucleic acids research}, 2016.

\bibitem{tay2018hyperbolic}
Yi~Tay, Luu~Anh Tuan, and Siu~Cheung Hui.
\newblock Hyperbolic representation learning for fast and efficient neural
  question answering.
\newblock In {\em Proceedings of the Eleventh ACM International Conference on
  Web Search and Data Mining}, pages 583--591. ACM, 2018.

\bibitem{tifrea2018poincar}
Alexandru Tifrea, Gary Becigneul, and Octavian-Eugen Ganea.
\newblock Poincar{\'e} glove: Hyperbolic word embeddings.
\newblock In {\em International Conference on Learning Representations}, 2019.

\bibitem{van2014cortecon}
Joyce van~de Leemput, Nathan~C Boles, Thomas~R Kiehl, Barbara Corneo, Patty
  Lederman, Vilas Menon, Changkyu Lee, Refugio~A Martinez, Boaz~P Levi, Carol~L
  Thompson, et~al.
\newblock Cortecon: a temporal transcriptome analysis of in vitro human
  cerebral cortex development from human embryonic stem cells.
\newblock {\em Neuron}, 2014.

\bibitem{velickovic2017graph}
Petar Veli{\v{c}}kovi{\'c}, Guillem Cucurull, Arantxa Casanova, Adriana Romero,
  Pietro Lio, and Yoshua Bengio.
\newblock Graph attention networks.
\newblock In {\em International Conference on Learning Representations}, 2018.

\bibitem{wan2013regularization}
Li~Wan, Matthew Zeiler, Sixin Zhang, Yann Le~Cun, and Rob Fergus.
\newblock Regularization of neural networks using dropconnect.
\newblock In {\em International conference on machine learning}, pages
  1058--1066, 2013.

\bibitem{wilson2014spherical}
Richard~C Wilson, Edwin~R Hancock, El{\.z}bieta Pekalska, and Robert~PW Duin.
\newblock Spherical and hyperbolic embeddings of data.
\newblock {\em IEEE transactions on pattern analysis and machine intelligence},
  36(11):2255--2269, 2014.

\bibitem{wu2019simplifying}
Felix Wu, Amauri Souza, Tianyi Zhang, Christopher Fifty, Tao Yu, and Kilian
  Weinberger.
\newblock Simplifying graph convolutional networks.
\newblock In {\em International Conference on Machine Learning}, pages
  6861--6871, 2019.

\bibitem{xu2018powerful}
Keyulu Xu, Weihua Hu, Jure Leskovec, and Stefanie Jegelka.
\newblock How powerful are graph neural networks?
\newblock In {\em International Conference on Learning Representations}, 2019.

\bibitem{yang2016revisiting}
Zhilin Yang, William Cohen, and Ruslan Salakhudinov.
\newblock Revisiting semi-supervised learning with graph embeddings.
\newblock In {\em International Conference on Machine Learning}, pages 40--48,
  2016.

\bibitem{ying2018graph}
Rex Ying, Ruining He, Kaifeng Chen, Pong Eksombatchai, William~L Hamilton, and
  Jure Leskovec.
\newblock Graph convolutional neural networks for web-scale recommender
  systems.
\newblock In {\em Proceedings of the 24th ACM SIGKDD International Conference
  on Knowledge Discovery \& Data Mining}, pages 974--983. ACM, 2018.

\bibitem{zhang2016riemannian}
Hongyi Zhang, Sashank~J Reddi, and Suvrit Sra.
\newblock Riemannian svrg: Fast stochastic optimization on riemannian
  manifolds.
\newblock In {\em Advances in Neural Information Processing Systems}, pages
  4592--4600, 2016.

\bibitem{zhang2018link}
Muhan Zhang and Yixin Chen.
\newblock Link prediction based on graph neural networks.
\newblock In {\em Advances in Neural Information Processing Systems}, pages
  5165--5175, 2018.

\bibitem{Zitnik2019}
Marinka Zitnik, Marcus~W Feldman, Jure Leskovec, et~al.
\newblock Evolution of resilience in protein interactomes across the tree of
  life.
\newblock {\em Proceedings of the National Academy of Sciences},
  116(10):4426--4433, 2019.

\end{thebibliography}

\newpage
\appendix
\section{Review of Differential Geometry}\label{appendix:diff_geom}
We first recall some definitions of differential and hyperbolic geometry.

\subsection{Differential geometry}
\xhdr{Manifold} An $d-$dimensional \textit{manifold} $\mathcal{M}$ is a topological space that locally resembles the topological space $\mathbb{R}^d$ near each point. 
More concretely, for each point $\mathbf{x}$ on $\mathcal{M}$, we can find a \textit{homeomorphism} (continuous bijection with continuous inverse) between a neighbourhood of $\mathbf{x}$ and $\mathbb{R}^d$.
The notion of manifold is a generalization of surfaces in high dimensions.

\xhdr{Tangent space}
Intuitively, if we think of $\mathcal{M}$ as a $d-$dimensional manifold embedded in $\mathbb{R}^{d+1}$, the \textit{tangent space} $\mathcal{T}_\mathbf{x}\mathcal{M}$ at point $\mathbf{x}$ on $\mathcal{M}$ is a $d-$dimensional hyperplane in $\mathbb{R}^{d+1}$ that best approximates $\mathcal{M}$ around $\mathbf{x}$. Another possible interpretation for $\mathcal{T}_\mathbf{x}\mathcal{M}$ is that it contains all the possible directions of curves on $\mathcal{M}$ passing through $\mathbf{x}$.
The elements of $\mathcal{T}_\mathbf{x}\mathcal{M}$ are called \textit{tangent vectors} and the union of all tangent spaces is called the \textit{tangent bundle} $\mathcal{T}\mathcal{M}=\cup_ {\mathbf{x}\in\mathcal{M}}\mathcal{T}_\mathbf{x}\mathcal{M}$. 

\xhdr{Riemannian manifold} A \textit{Riemannian manifold} is a pair $(\mathcal{M}, \mathbf{g})$, where $\mathcal{M}$ is a smooth manifold and $\mathbf{g}=(g_\mathbf{x})_{\mathbf{x}\in\mathcal{M}}$ is a \textit{Riemannian metric}, that is a family of smoothly varying inner products on tangent spaces, $g_\mathbf{x}:\mathcal{T}_\mathbf{x}\mathcal{M}\times\mathcal{T}_\mathbf{x}\mathcal{M}\rightarrow\mathbb{R}$.
Riemannian metrics can be used to measure distances on manifolds. 

\xhdr{Distances and geodesics} Let $(\mathcal{M}, \mathbf{g})$ be a Riemannian manifold. For $\mathbf{v}\in\mathcal{T}_\mathbf{x}\mathcal{M}$, define the norm of $\mathbf{v}$ by $||\mathbf{v}||_\mathbf{g} \coloneqq \sqrt{g_\mathbf{x}(\mathbf{v}, \mathbf{v})}$.
Suppose $\gamma: [a, b] \rightarrow \mathcal{M}$ is a smooth curve on $\mathcal{M}$. Define the length of $\gamma$ by: 
$$L(\gamma) \coloneqq \int_{a}^{b} ||\gamma'(t)||_\mathbf{g}dt.$$
Now with this definition of length, every connected Riemannian manifold  becomes a metric space and the \textit{distance} $d:\mathcal{M}\times\mathcal{M}\rightarrow[0,\infty)$ is defined as:
$$d(\mathbf{x}, \mathbf{y})\coloneqq\mathrm{inf}_\gamma\{L(\gamma): \gamma \text{ is a continuously differentiable curve joining }\mathbf{x}\text{ and }\mathbf{y}\}.$$
\textit{Geodesic} distances are a generalization of straight lines (or shortest paths) to non-Euclidean geometry. 
A curve $\gamma:[a,b]\rightarrow\mathcal{M}$ is \textit{geodesic} if $d(\gamma(t), \gamma(s))=L(\gamma|_{[t,s]})\forall(t,s)\in[a, b] (t<s)$.

\xhdr{Parallel transport} \textit{Parallel transport} is a generalization of translation to non-Euclidean geometry.
Given a smooth manifold $\mathcal{M}$, parallel transport $P_{\mathbf{x}\rightarrow\mathbf{y}}(\cdot)$ maps a vector $\mathbf{v}\in\mathcal{T}_\mathbf{x}\mathcal{M}$ to $P_{\mathbf{x}\rightarrow\mathbf{y}}(\mathbf{v})\in\mathcal{T}_\mathbf{y}\mathcal{M}$. 
In Riemannian geometry, parallel transport preserves the Riemannian metric tensor (norm, inner products...).

\xhdr{Curvature}
At a high level, curvature measures how much a geometric object such as surfaces deviate from a flat plane. 
For instance, the Euclidean space has zero curvature while spheres have positive curvature. 
We illustrate the concept of curvature in Figure \ref{fig:manifold_curvature}.

\subsection{Hyperbolic geometry}
\xhdr{Hyperbolic space} The hyperbolic space in $d$ dimensions is the unique complete, simply connected $d-$dimensional Riemannian manifold with constant negative sectional curvature.
There exist several models of hyperbolic space such as the Poincar\'e model or the hyperboloid model (also known as the Minkowski model or the Lorentz model).
In what follows, we review the Poincar\'e and the hyperboloid models of hyperbolic space as well as connections between these two models. 

\subsubsection{Poincar\'e ball model}
Let $||.||_2$ be the Euclidean norm. 
The Poincar\'e ball model with unit radius and constant negative curvature $-1$ in $d$ dimensions is the Riemannian manifold $(\mathbb{D}^{d,1},(g_\mathbf{x})_\mathbf{x})$ where
$$\mathbb{D}^{d,1}\coloneqq\{\mathbf{x}\in\mathbb{R}^d:||\mathbf{x}||^2<1\},$$
and
$$g_\mathbf{x}=\lambda_\mathbf{x}^2I_d,$$
where $\lambda_\mathbf{x}\coloneqq\frac{2}{1-||\mathbf{x}||_2^2}$ and $I_d$ is the identity matrix.  
The induced distance between two points $(\mathbf{x}, \mathbf{y})$ in $\mathbb{D}^{d,1}$ can be computed as:
$$d_\mathbb{D}^1(\mathbf{x}, \mathbf{y})=\mathrm{arcosh}\bigg(1 + 2\frac{||\mathbf{x}-\mathbf{y}||_2^2}{(1-||\mathbf{x}||_2^2)(1-||\mathbf{y}||_2^2)}\bigg).$$

\begin{figure}[t]
  \begin{center}
    \includegraphics[width=\textwidth]{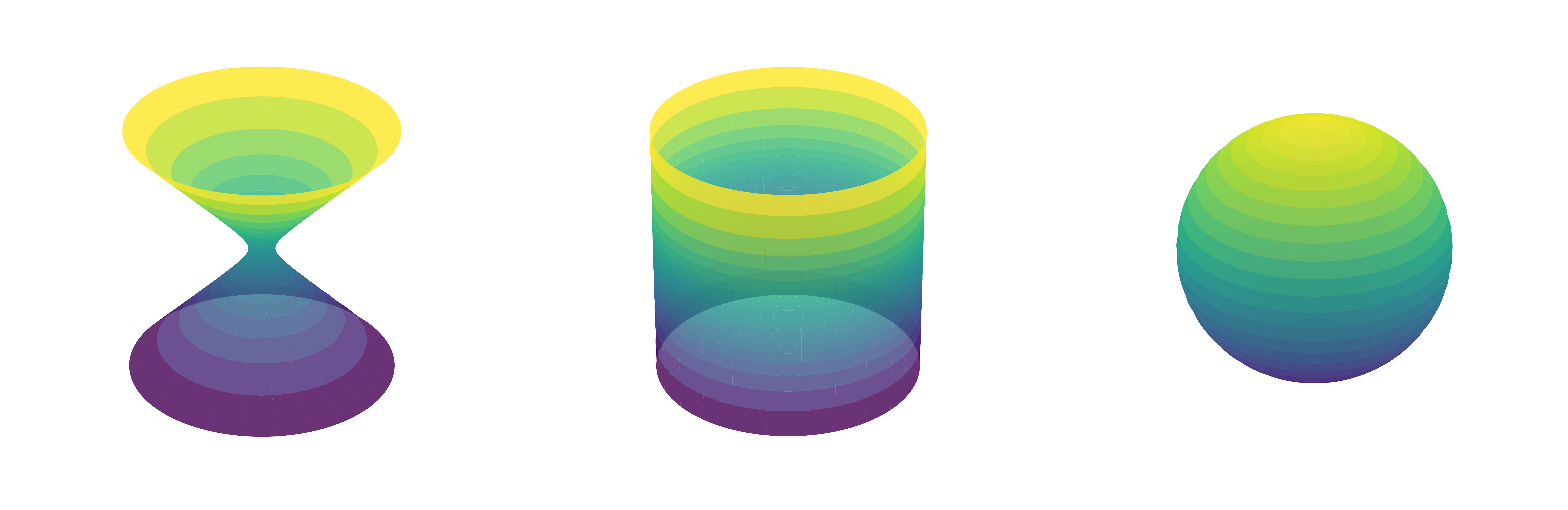}
  \end{center}
  \caption{From left to right: a surface of negative curvature, a surface of zero curvature, and a surface of positive curvature.}
  \label{fig:manifold_curvature}
\end{figure}
\subsubsection{Hyperboloid model}
\xhdr{Hyperboloid model}
Let $\langle.,.\rangle_{\mathcal{L}}:\mathbb{R}^{d+1}\times\mathbb{R}^{d+1}\rightarrow\mathbb{R}$ denote the Minkowski inner product,
$$\langle\mathbf{x},\mathbf{y}\rangle_\mathcal{L}\coloneqq-x_0y_0+x_1y_1+\ldots+x_dy_d.$$
The hyperboloid model with unit imaginary radius and constant negative curvature $-1$ in $d$ dimensions is defined as the Riemannian manifold $(\mathbb{H}^{d,1},(g_\mathbf{x})_\mathbf{x})$ where
$$\mathbb{H}^{d,1}\coloneqq\{\mathbf{x}\in\mathbb{R}^{d+1}:\langle\mathbf{x},\mathbf{x}\rangle_\mathcal{L}=-1,x_0>0\},$$
and
\begin{small}
$$g_\mathbf{x}\coloneqq\begin{bmatrix}
-1 & & & \\
& 1 & & \\
& & \ddots & \\
& & & 1 
\end{bmatrix}.$$
\end{small}
The induced distance between two points $(\mathbf{x}, \mathbf{y})$ in $\mathbb{H}^{d,1}$ can be computed as:
$$d^1_\mathcal{L}(\mathbf{x},\mathbf{y})=\mathrm{arcosh}(-\langle\mathbf{x},\mathbf{y}\rangle_\mathcal{L}).$$

\xhdr{Geodesics} 
We recall a result that gives the unit speed geodesics in the hyperboloid model with curvature $-1$ \cite{robbin2011introduction}.
This result can be used to show Propositions \ref{cor:geodesic} and \ref{cor:logexp} for the hyperboloid manifold with negative curvature $-1/K$, and then learn $K$ as a model parameter in \name. 
\begin{restatable}[]{thm}{unitspeed}
\label{thm:unit_speed}
Let $\mathbf{x}\in\mathbb{H}^{d,1}$ and $\mathbf{u}\in\mathcal{T}_\mathbf{x}\mathbb{H}^{d,1}$ unit-speed (i.e. $\langle\mathbf{u},\mathbf{u}\rangle_\mathcal{L}=1$).
The unique unit-speed geodesic $\gamma_{\mathbf{x}\rightarrow\mathbf{u}}:[0,1]\rightarrow\mathbb{H}^{d,1}$ such that $\gamma_{\mathbf{x}\rightarrow\mathbf{u}}(0)=\mathbf{x}$ and $\dot\gamma_{\mathbf{x}\rightarrow\mathbf{u}}(0)=\mathbf{u}$ is given by:
$$\gamma_{\mathbf{x}\rightarrow\mathbf{u}}(t)=\mathrm{cosh}(t)\mathbf{x}+\mathrm{sinh}(t)\mathbf{u}.$$
\end{restatable}

\xhdr{Parallel Transport}
If two points $\mathbf{x}$ and $\mathbf{y}$ on the hyperboloid $\mathbb{H}^{d,1}$ are connected by a geodesic, then the parallel transport of a tangent vector $\mathbf{v} \in \mathcal{T}_{\mathbf{x}}\mathbb{H}^{d,1}$ to the tangent space $\mathcal{T}_{\mathbf{y}}\mathbb{H}^{d,1}$ is:
\begin{equation}
    P_{\mathbf{x}\rightarrow \mathbf{y}}(\mathbf{v})=\mathbf{v}-\frac{\langle\mathrm{log}_\mathbf{x}(\mathbf{y}),\mathbf{v}\rangle_\mathcal{L}}{d^1_\mathcal{L}(\mathbf{x}, \mathbf{y})^2}(\mathrm{log}_\mathbf{x}(\mathbf{y})+\mathrm{log}_\mathbf{y}(\mathbf{x})).
\end{equation}

\xhdr{Projections}
Finally, we recall projections to the hyperboloid manifold and its corresponding tangent spaces. 
A point $\mathbf{x}=(x_0,\mathbf{x}_{1:d})\in\mathbb{R}^{d+1}$ can be projected on the hyperboloid manifold $\mathbb{H}^{d,1}$ with:
\begin{equation}
    \Pi_{\mathbb{R}^{d+1}\rightarrow\mathbb{H}^{d,1}}(\mathbf{x})\coloneqq(\sqrt{1+||\mathbf{x}_{1:d}||_2^2}, \mathbf{x}_{1:d}).\label{eq:proj_hyp}
\end{equation}
Similarly, a point $\mathbf{v}\in\mathbb{R}^{d+1}$ can be projected on $\mathcal{T}_\mathbf{x}\mathbb{H}^{d,1}$ with:
\begin{equation}
    \Pi_{\mathbb{R}^{d+1}\rightarrow\mathcal{T}_\mathbf{x}\mathbb{H}^{d,1}}(\mathbf{v})\coloneqq\mathbf{v}+\langle\mathbf{x},\mathbf{v}\rangle_\mathcal{L}\mathbf{x}.\label{eq:proj_tan}
\end{equation}
In practice, these projections are very useful for optimization purposes as they constrain embeddings and tangent vectors to remain on the manifold and tangent spaces.

\subsubsection{Connection between the Poincar\'e ball model and the hyperboloid model}
\begin{figure}[t]
  \begin{center}
    \includegraphics[width=0.4\textwidth]{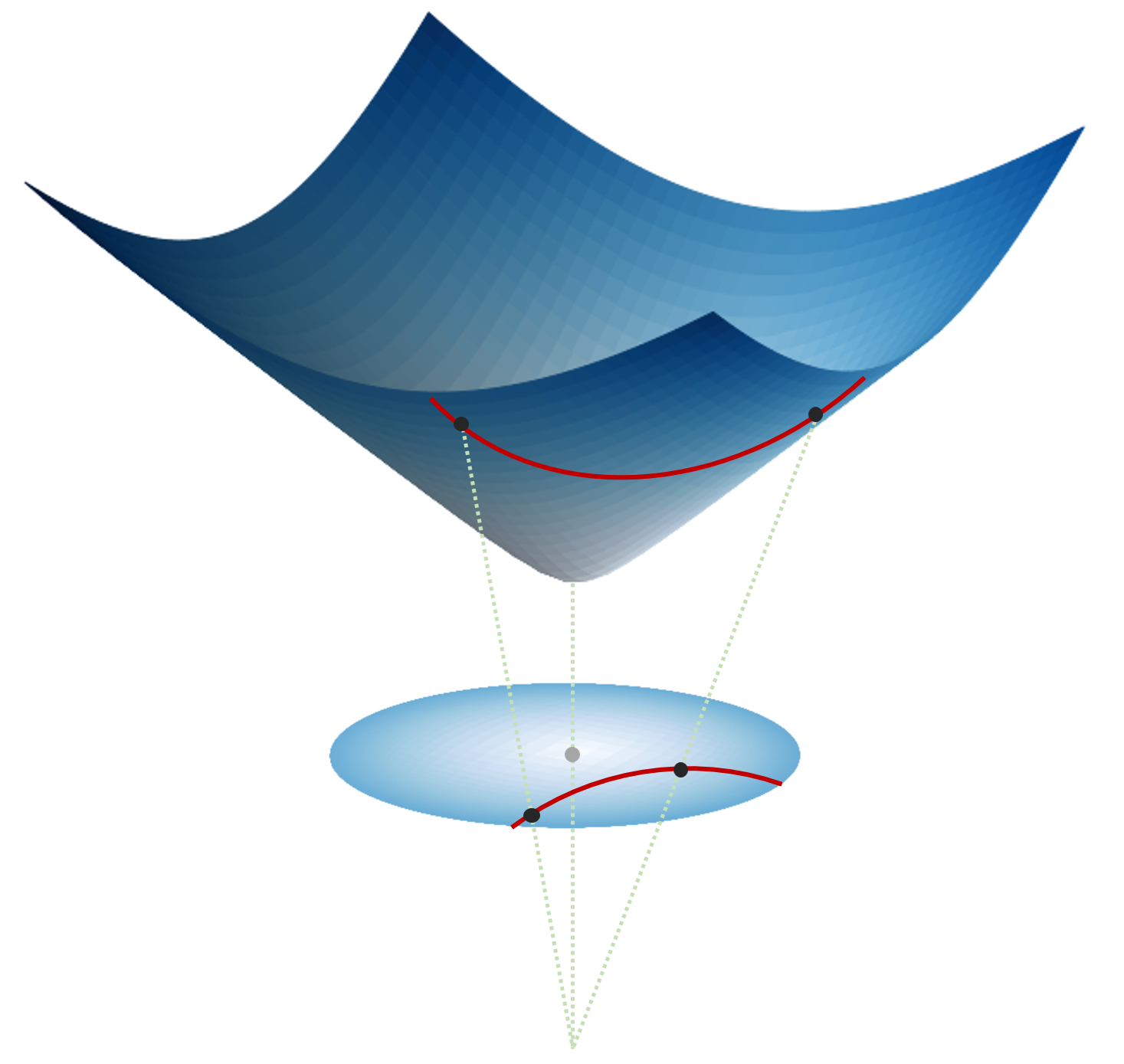}
  \end{center}
  \caption{Illustration of the hyperboloid model (top) in 3 dimensions and its connection to the Poincar\'e disk (bottom).}
  \label{fig:hyperboloid}
\end{figure}
While the hyperboloid model tends to be more stable for optimization than the Poincar\'e model \cite{nickel2018learning}, the Poincar\'e model is very interpretable and embeddings can be directly visualized on the Poincar\'e disk. 
Fortunately, these two models are isomorphic (\emph{cf.} Figure \ref{fig:hyperboloid}) and there exist a diffeomorphism $\Pi_{\mathbb{H}^{d,1}\rightarrow\mathbb{D}^{d,1}}(\cdot)$ mapping one space onto the other:
\begin{align}
    \Pi_{\mathbb{H}^{d,1}\rightarrow\mathbb{D}^{d,1}}(x_0,\ldots,x_d)&=\frac{(x_1,\ldots,x_d)}{x_0+1}\\
    \text{and }\ \  \Pi_{\mathbb{D}^{d,1}\rightarrow\mathbb{H}^{d,1}}(x_1,\ldots,x_d)&=\frac{(1+||\mathbf{x}||_2^2,2x_1,\ldots,2x_d)}{1-||\mathbf{x}||_2^2}.
\end{align}

\section{Proofs of Results}
\subsection{Hyperboloid model of hyperbolic space}
For completeness, we re-derive results of hyperbolic geometry for any arbitrary curvature. 
Similar derivations can be found in the literature \cite{wilson2014spherical}.
\geodesics*
\begin{proof}
Using theorem \ref{thm:unit_speed}, we know that the unique unit-speed geodesic $\gamma_{\mathbf{y}\rightarrow\mathbf{u}}(.)$ in $\mathbb{H}^{d,1}$ must satisfy
\begin{align*}
    \gamma_{\mathbf{y}\rightarrow\mathbf{u}}(0)=\mathbf{y} \ \text{ and }\ \dot{\gamma}_{\mathbf{y}\rightarrow\mathbf{u}}(0)=\mathbf{u} \ \text{and}\ 
    \frac{d}{dt}\langle\dot{\gamma}_{\mathbf{y}\rightarrow\mathbf{u}}(t),\dot{\gamma}_{\mathbf{y}\rightarrow\mathbf{u}}(t)\rangle_\mathcal{L}=0 \ \forall t,
\end{align*}
and is given by
$$\gamma_{\mathbf{y}\rightarrow\mathbf{u}}(t)=\mathrm{cosh}(t)\mathbf{y}+\mathrm{sinh}(t)\mathbf{u}.$$
Now let $\mathbf{x}\in\mathbb{H}^{d,K}$ and $\mathbf{u}\in\mathcal{T}_\mathbf{x}\mathbb{H}^{d,K}$ be unit-speed
and denote $\gamma^K_{\mathbf{x}\rightarrow\mathbf{u}}(.)$ the unique unit-speed geodesic in $\mathbb{H}^{d,K}$ such that 
$\gamma^K_{\mathbf{x}\rightarrow\mathbf{u}}(0)=\mathbf{x} \ \text{ and }\ \dot{\gamma}^K_{\mathbf{x}\rightarrow\mathbf{u}}(0)=\mathbf{u}$. 
Let us define $\mathbf{y}:=\frac{\mathbf{x}}{\sqrt{K}}\in\mathbb{H}^{d,1}$ and $\phi_{\mathbf{y}\rightarrow\mathbf{u}}(t)=\frac{1}{\sqrt{K}}\gamma^K_{\mathbf{x}\rightarrow\mathbf{u}}(\sqrt{K}t)$. We have,
\begin{align*}
\phi_{\mathbf{y}\rightarrow\mathbf{u}}(0)=\mathbf{y} \ \text{ and }\ \dot{\phi}_{\mathbf{y}\rightarrow\mathbf{u}}(0)=\mathbf{u}, \
\end{align*}
and since $\gamma^K_{\mathbf{x}\rightarrow\mathbf{u}}(.)$ is the unique unit-speed geodesic in $\mathbb{H}^{d,K}$, we also have
$$\frac{d}{dt}\langle\dot{\phi}_{\mathbf{y}\rightarrow\mathbf{u}}(t),\dot{\phi}_{\mathbf{y}\rightarrow\mathbf{u}}(t)\rangle_\mathcal{L}=0 \ \forall t.$$
Furthermore, we have $\mathbf{y}\in\mathbb{H}^{d,1}$, $\mathbf{u}\in\mathcal{T}_{\mathbf{y}}\mathbb{H}^{d,1}$ as $\langle\mathbf{u},\mathbf{y}\rangle_\mathcal{L}=\frac{1}{\sqrt{K}}\langle\mathbf{u},\mathbf{x}\rangle_\mathcal{L}=0$ and $\langle\phi_{\mathbf{y}\rightarrow\mathbf{u}}(t),\phi_{\mathbf{y}\rightarrow\mathbf{u}}(t)\rangle_\mathcal{L}=-1\forall t$.
Therefore $\phi_{\mathbf{y}\rightarrow\mathbf{u}}(.)$ is a unit-speed geodesic in $\mathbb{H}^{d,1}$ and we get
$$\phi_{\mathbf{y}\rightarrow\mathbf{u}}(t)=\mathrm{cosh}(t)\mathbf{y}+\mathrm{sinh}(t)\mathbf{u}.$$
Finally, this leads to 
$$\gamma^K_{\mathbf{x}\rightarrow\mathbf{u}}(t)=\mathrm{cosh}(\frac{t}{\sqrt{K}})\mathbf{x}+\sqrt{K}\mathrm{sinh}(\frac{t}{\sqrt{K}})\mathbf{u}.$$

\end{proof}

\logexp*
\begin{proof}
We use a similar reasoning to that in Corollary 1.1 in \cite{ganea2018hyperbolic}.
Let $\gamma^K_{\mathbf{x}\rightarrow\mathbf{v}}(.)$ be the unique geodesic such that $\gamma^K_{\mathbf{x}\rightarrow\mathbf{v}}(0)=\mathbf{x}$ and $\dot{\gamma}^K_{\mathbf{x}\rightarrow\mathbf{v}}(0)=\mathbf{v}$.
Let us define $\mathbf{u}:=\frac{\mathbf{v}}{\ \ ||\mathbf{v}||_\mathcal{L}}$ where $||\mathbf{v}||_\mathcal{L}=\sqrt{\langle\mathbf{v},\mathbf{v}\rangle_\mathcal{L}}$ is the Minkowski norm of $\mathbf{v}$
and $$\phi^K_{\mathbf{x}\rightarrow\mathbf{u}}(t):=\gamma^K_{\mathbf{x}\rightarrow\mathbf{v}}\left(\frac{t}{\ \ ||\mathbf{v}||_\mathcal{L}}\right).$$
$\phi_{\mathbf{x}\rightarrow\mathbf{u}}(t)$ satisfies,
\begin{align*}
\phi^K_{\mathbf{x}\rightarrow\mathbf{u}}(0)=\mathbf{x} \ \text{ and }\ \dot{\phi}^K_{\mathbf{x}\rightarrow\mathbf{u}}(0)=\mathbf{u} \ \text{and}\ 
\frac{d}{dt}\langle\dot{\phi}^K_{\mathbf{x}\rightarrow\mathbf{u}}(t),\dot{\phi}^K_{\mathbf{x}\rightarrow\mathbf{u}}(t)\rangle_\mathcal{L}=0 \ \forall t.
\end{align*}
Therefore $\phi^K_{\mathbf{x}\rightarrow\mathbf{u}}(.)$ is a unit-speed geodesic in $\mathbb{H}^{d,K}$ and we get
$$\phi^K_{\mathbf{x}\rightarrow\mathbf{u}}(t)=\mathrm{cosh}(\frac{t}{\sqrt{K}})\mathbf{x}+\sqrt{K}\mathrm{sinh}(\frac{t}{\sqrt{K}})\mathbf{u}.$$
By identification, this leads to 
$$\gamma^K_{\mathbf{x}\rightarrow\mathbf{v}}(t)=\mathrm{cosh}\bigg(\frac{||\mathbf{v}||_\mathcal{L}}{\sqrt{K}}t\bigg)\mathbf{x}+\sqrt{K}\mathrm{sinh}\bigg(\frac{||\mathbf{v}||_\mathcal{L}}{\sqrt{K}}t\bigg)\frac{\mathbf{v}}{\ \ ||\mathbf{v}||_\mathcal{L}}.$$
We can use this result to derive exponential and logarthimic maps on the hyperboloid model.
We know that $\mathrm{exp}^K_\mathbf{x}(\mathbf{v})=\gamma^K_{\mathbf{x}\rightarrow\mathbf{v}}(1)$. 
Therefore we get,
$$\mathrm{exp}^K_\mathbf{x}(\mathbf{v})=\mathrm{cosh}\bigg(\frac{||\mathbf{v}||_\mathcal{L}}{\sqrt{K}}\bigg)\mathbf{x}+\sqrt{K}\mathrm{sinh}\bigg(\frac{||\mathbf{v}||_\mathcal{L}}{\sqrt{K}}\bigg)\frac{\mathbf{v}}{\ \ ||\mathbf{v}||_\mathcal{L}}.$$
Now let $\mathbf{y}=\mathrm{exp}^K_\mathbf{x}(\mathbf{v})$.
We have $\langle\mathbf{x},\mathbf{y}\rangle_\mathcal{L}=-K\mathrm{cosh}\bigg(\frac{||\mathbf{v}||_\mathcal{L}}{\sqrt{K}}\bigg)$ as $\langle\mathbf{x},\mathbf{x}\rangle_\mathcal{L}=-K$ 
and $\langle\mathbf{x},\mathbf{v}\rangle_\mathcal{L}=0$.
Therefore $\mathbf{y}+\frac{1}{K}\langle\mathbf{x},\mathbf{y}\rangle_\mathcal{L}\mathbf{x}=\sqrt{K}\mathrm{sinh}\bigg(\frac{||\mathbf{v}||_\mathcal{L}}{\sqrt{K}}\bigg)\frac{\mathbf{v}}{\ \ ||\mathbf{v}||_\mathcal{L}}$ and we get
$$\mathbf{v}=\sqrt{K}\mathrm{arsinh}\bigg(\frac{||\mathbf{y}+\frac{1}{K}\langle\mathbf{x},\mathbf{y}\rangle_\mathcal{L}\mathbf{x}||_\mathcal{L}}{\sqrt{K}}\bigg)\frac{\mathbf{y}+\frac{1}{K}\langle\mathbf{x},\mathbf{y}\rangle_\mathcal{L}\mathbf{x}}{||\mathbf{y}+\frac{1}{K}\langle\mathbf{x},\mathbf{y}\rangle_\mathcal{L}\mathbf{x}||_\mathcal{L}},$$
where $||\mathbf{y}+\frac{1}{K}\langle\mathbf{x},\mathbf{y}\rangle_\mathcal{L}||_\mathcal{L}$ is well defined since $\mathbf{y}+\frac{1}{K}\langle\mathbf{x},\mathbf{y}\rangle_\mathcal{L}\mathbf{x}\in\mathcal{T}_{\mathbf{x}}\mathbb{H}^{d,K}$.
Note that,
\begin{align*}
	||\mathbf{y}+\frac{1}{K}\langle\mathbf{x},\mathbf{y}\rangle_\mathcal{L}\mathbf{x}||_\mathcal{L}&=\sqrt{\langle\mathbf{y},\mathbf{y}\rangle_\mathcal{L}+\frac{2}{K}\langle\mathbf{x},\mathbf{y}\rangle_\mathcal{L}^2+\frac{1}{K^2}\langle\mathbf{x},\mathbf{y}\rangle_\mathcal{L}^2\langle\mathbf{x},\mathbf{x}\rangle_\mathcal{L}}\\
	&=\sqrt{-K+\frac{1}{K}\langle\mathbf{x},\mathbf{y}\rangle_\mathcal{L}^2}\\
	&=\sqrt{K}\sqrt{\langle\frac{\mathbf{x}}{\sqrt{K}},\frac{\mathbf{y}}{\sqrt{K}}\rangle_\mathcal{L}^2-1}\\
	&=\sqrt{K}\mathrm{sinh}\ \mathrm{arcosh}\bigg(-\langle\frac{\mathbf{x}}{\sqrt{K}},\frac{\mathbf{y}}{\sqrt{K}}\rangle_\mathcal{L}\bigg)
\end{align*}
as $\langle\frac{\mathbf{x}}{\sqrt{K}},\frac{\mathbf{y}}{\sqrt{K}}\rangle_\mathcal{L}\le-1$.
Therefore, we finally have
$$\mathrm{log}_{\mathbf{x}}^K(\mathbf{y})=\sqrt{K}\mathrm{arcosh}\bigg(-\langle\frac{\mathbf{x}}{\sqrt{K}},\frac{\mathbf{y}}{\sqrt{K}}\rangle_\mathcal{L}\bigg)\frac{\mathbf{y}+\frac{1}{K}\langle\mathbf{x},\mathbf{y}\rangle_\mathcal{L}\mathbf{x}}{||\mathbf{y}+\frac{1}{K}\langle\mathbf{x},\mathbf{y}\rangle_\mathcal{L}\mathbf{x}||_\mathcal{L}}.$$
\end{proof}

\subsection{Curvature}\label{appendix:curvature}

\begin{restatable}[]{lemma}{curv}
\label{thm:curv}
For any hyperbolic spaces with constant curvatures $-1/K, -1/K'>0$, and any pair of hyperbolic points  $(\mathbf{u}, \mathbf{v})$ embedded in $\mathbb{H}^{d,K}$, there exists a mapping $\phi: \mathbb{H}^{d,K} \rightarrow \mathbb{H}^{d,K'}$ to another pair of corresponding hyperbolic points in $\mathbb{H}^{d,K'}$, $(\phi(\mathbf{u}), \phi(\mathbf{v}))$ such that the Minkowski inner product is scaled by a constant factor.
\end{restatable}
\begin{proof}
For any hyperbolic embedding $\mathbf{x} = (x_0, x_1, \ldots, x_d) \in \mathbb{H}^{d,K}$ we have the identity:
$\langle \mathbf{x}, \mathbf{x} \rangle_\mathcal{L} = -x_0^2 + \sum_{i=1}^d x_i^2 = -K$. 
For any hyperbolic curvature $-1/K<0$, consider the mapping $\phi(\mathbf{x}) = \sqrt{\frac{K'}{K}} \mathbf{x}$. Then we have the identity
$\langle \phi(\mathbf{x}), \phi(\mathbf{x}) \rangle_\mathcal{L} = -K'$ and therefore $\phi(\mathbf{x}) \in \mathbb{H}^{d,K'}$.
For any pair $(\mathbf{u}$, $\mathbf{v})$, $\langle \phi(\mathbf{u}), \phi(\mathbf{v}) \rangle_\mathcal{L}
= \frac{K'}{K} \left( -\mathbf{u}_0 \mathbf{v}_0 + \sum_{i=1}^d \mathbf{u}_i \mathbf{v}_i \right)
= \frac{K'}{K}\langle \mathbf{u}, \mathbf{v} \rangle_\mathcal{L}$. 
The factor $\frac{K'}{K}$ only depends on curvature, but not the specific  embeddings.
\end{proof}

Lemma \ref{thm:curv} implies that given a set of embeddings learned in hyperbolic space $\mathbb{H}^{d,K}$, we can find embeddings in another hyperbolic space with different curvature, $\mathbb{H}^{d,K'}$, such that the Minkowski inner products for all pairs of embeddings are scaled by the same factor $\frac{K'}{K}$.

For link prediction tasks, Theorem \ref{thm:loss} shows that with infinite precision, the expressive power of hyperbolic spaces with varying curvatures is the same.

\vspace{5pt}
\loss*
\begin{proof}
The Fermi-Dirac decoder predicts that there exists a link between node $i$ and $j$ \emph{iif}
$\left[ e^{(d_\mathcal{L}^K(\mathbf{h}_i, \mathbf{h}_j)-r)/t }+ 1 \right]^{-1} \ge b$, where $b \in (0,1)$ is the threshold for determining existence of links.
The criterion is equivalent to $d_\mathcal{L}^K(\mathbf{h}_i, \mathbf{h}_j) \le r + t \log (\frac{1-b}{b})$.

Given $H = \{\mathbf{h_1}, \ldots, \mathbf{h_n}\}$, the graph $G_H$ reconstructed with the Fermi-Dirac decoder has the edge set $E_H = \left\{ (i, j) | d_\mathcal{L}^K(\mathbf{h}_i, \mathbf{h}_j) \le r + t \log (\frac{1-b}{b}) \right\}$.
Consider the mapping to $\mathbb{H}^{d,K'}$, $\phi(\mathbf{x}):= \ \sqrt{\frac{K'}{K}} \mathbf{x}$.
Let $H' = \{\phi(\mathbf{h_1}), \ldots, \phi(\mathbf{h_n})\}$.
By Lemma \ref{thm:curv}, 
\begin{equation}
    d_\mathcal{L}^{K'}(\phi(\mathbf{h}_i), \phi(\mathbf{h}_j)) = \sqrt{K'} \mathrm{arcosh} \left( - \frac{K'}{K} \langle\mathbf{h}_i,\mathbf{h}_j\rangle_\mathcal{L} / K' \right)
    = \sqrt{\frac{K'}{K}} d_\mathcal{L}^{K}(\mathbf{h}_i, \mathbf{h}_j).
\end{equation}
Due to linearity, we can find decoder parameter, $r'$ and $t'$ that satisfy 
$ r' + t' \log (\frac{1-b}{b}) = \sqrt{\frac{K'}{K}} ( r + t \log (\frac{1-b}{b}))$.
With such $r'$, $t'$, the criterion $d_\mathcal{L}^K(\mathbf{h}_i, \mathbf{h}_j) \le r + t \log (\frac{1-b}{b})$
is equivalent to $d_\mathcal{L}^{K'}(\phi(\mathbf{h}_i), \phi(\mathbf{h}_j)) \le r' + t' \log (\frac{1-b}{b})$.
Therefore, the reconstructed graph $G_{H'}$ based on the set of embeddings $H'$ is identical to $G_H$.
\end{proof}

\cut{
\subsection{Lorentzian Centroid Appoximation}
Aggregation via the Lorentzian Centroid Approximation can be defined as
\begin{align}
w_{ij} &= \textsc{Att}(\mathbf{h}^{\ell+1,E}_i, \mathbf{h}^{\ell+1,E}_j) & \text{(attention)}\\ 
    \mathbf{x}^{\ell+1,H}_i &= \sqrt{K_{\ell+1}}\frac{\sum_j w_{ij} \mathbf{x}_j}{|{||\sum_j w_{ij} \mathbf{x}_j||_\mathcal{L}|}} & \text{(aggregation)}
\end{align}}

\section{Experimental Details}
\subsection{Dataset statistics}
We detail the dataset statistics in Table \ref{table:dataset}.

\begin{table}
    \centering
	\begin{tabular}{r c c c c} 
	\hline
	Name & Nodes & Edges & Classes & Node features  \\
	\hline
	\textsc{Cora} & 2708 & 5429 & 7 & 1433 \\
	{\textsc{Pubmed}} & 19717 & 88651 & 3 & 500 \\
	{\textsc{Human PPI}} & 17598 & 5429 & 4 & 17 \\
	{\textsc{Airport}} & 3188 & 18631 & 4 & 4 \\
	{\textsc{Disease}} & 1044 & 1043 & 2 & 1000 \\
	{\textsc{Disease-M}} & 43193 & 43102 & 2 & 1000 \\
	\hline
	\end{tabular}
    \caption{Benchmarks' statistics}
    \label{table:dataset}
\end{table}

\subsection{Training details}
Here we present details of \name's training pipeline, with optimization and incorporation of DropConnect \cite{wan2013regularization}. 

\xhdr{Parameter optimization}
Recall that linear transformations and attention are defined on the tangent space of points. 
Therefore the linear layer and attention parameters are Euclidean.
For bias, there are two options: one can either define parameters in hyperbolic space, and use hyperbolic addition operation \cite{ganea2018hyperbolicNN}, or define parameters in Euclidean space, and use Euclidean addition after transforming the points into the tangent space.
Through experiments we find that Euclidean optimization is much more stable, and gives slightly better test performance compared to Riemannian optimization, if we define parameters such as bias in hyperbolic space.
Hence different from shallow hyperbolic embeddings, although our model and embeddings are hyperbolic, the learnable graph convolution parameters can be optimized via Euclidean optimization (Adam Optimizer \cite{kingma2014adam}), thanks to exponential and logarithmic maps.
Note that to train shallow Poincar\'e embeddings, we use Riemannian Stochastic Gradient Descent \cite{bonnabel2013stochastic,zhang2016riemannian}, since its model parameters are hyperbolic.
We use early stopping based on validation set performance with a patience of 100 epochs.

\xhdr{Drop connection}
Since rescaling vectors in hyperbolic space requires exponential and logarithmic maps, and is conceptually not tied to the inverse dropout rate in terms of re-normalizing L1 norm, Dropout cannot be directly applied in \name.
However, as a result of using Euclidean parameters in \name, DropConnect \cite{wan2013regularization}, the generalization of Dropout, can be used as a regularization. DropConnect randomly zeros out the neural network connections, \emph{i.e.} elements of the Euclidean parameters during training time, improving the generalization of \name.


\xhdr{Projections}
Finally, we apply projections similar to Equations \ref{eq:proj_hyp} and \ref{eq:proj_tan} for the hyperboloid model $\mathbb{H}^{d,K}$ after each feature transform and $\mathrm{log}$ or $\mathrm{exp}$ map, to constrain embeddings and tangent vectors to remain on the manifold and tangent spaces.

\cut{Trainable curvature also provides a way to skip otherwise bad local minima in optimization.
In Figure \ref{fig:sweep_c} we observe that the curve of performance against curvature has fluctuations, despite theoretically equivalent local minima, suggesting that suboptimal local minima are reached for some curvatures. 
Training the curvature jointly with \name allows effective escape of local minima during optimization.
\subsection{Visualization}
\begin{figure}[t]
    \centering
    \begin{subfigure}[b]{0.24\textwidth}
        \includegraphics[width=\textwidth]{figs/HGCN_attention_1.png}
        \label{fig:hgcn_att_1}
    \end{subfigure}
    \begin{subfigure}[b]{0.24\textwidth}
        \includegraphics[width=\textwidth]{figs/GAT_attention_1.png}
        \label{fig:hgcn_att_2}
    \end{subfigure}
    \begin{subfigure}[b]{0.24\textwidth}
        \includegraphics[width=\textwidth]{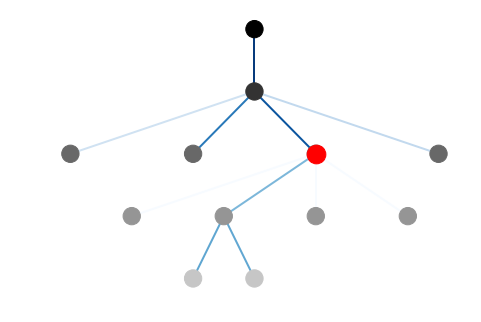}
        \label{fig:gat_att_1}
    \end{subfigure}
    \begin{subfigure}[b]{0.24\textwidth}
        \includegraphics[width=\textwidth]{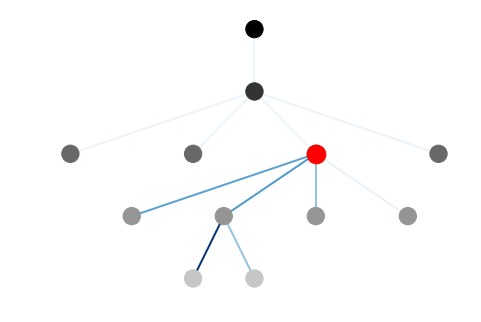}
    \label{fig:gat_att_2}
    \end{subfigure}
    \caption{Each graph represents a 2-hop neighborhood of the \textbf{\textsc{Disease-M.}} dataset. The red node is the node where we compute attention for. The darkness of the color for other nodes denote their hierarchy. The attention weights for nodes in neighborhood with respect to the red node are visualized by the intensity of edges, and the ligher the edge is, the less attention weight. .}
    \label{fig:att_weights_appendix}
\end{figure}
}
\end{document}


\maketitle

\appendix
\section{Review of Differential Geometry}\label{appendix:diff_geom}
We first recall some definitions of differential and hyperbolic geometry.

\subsection{Differential geometry}
\xhdr{Manifold} An $d-$dimensional \textit{manifold} $\mathcal{M}$ is a topological space that locally resembles the topological space $\mathbb{R}^d$ near each point. 
More concretely, for each point $\mathbf{x}$ on $\mathcal{M}$, we can find a \textit{homeomorphism} (continuous bijection with continuous inverse) between a neighbourhood of $\mathbf{x}$ and $\mathbb{R}^d$.
The notion of manifold is a generalization of surfaces in high dimensions.

\xhdr{Tangent space}
Intuitively, if we think of $\mathcal{M}$ as a $d-$dimensional manifold embedded in $\mathbb{R}^{d+1}$, the \textit{tangent space} $\mathcal{T}_\mathbf{x}\mathcal{M}$ at point $\mathbf{x}$ on $\mathcal{M}$ is a $d-$dimensional hyperplane in $\mathbb{R}^{d+1}$ that best approximates $\mathcal{M}$ around $\mathbf{x}$. Another possible interpretation for $\mathcal{T}_\mathbf{x}\mathcal{M}$ is that it contains all the possible directions of curves on $\mathcal{M}$ passing through $\mathbf{x}$.
The elements of $\mathcal{T}_\mathbf{x}\mathcal{M}$ are called \textit{tangent vectors} and the union of all tangent spaces is called the \textit{tangent bundle} $\mathcal{T}\mathcal{M}=\cup_ {\mathbf{x}\in\mathcal{M}}\mathcal{T}_\mathbf{x}\mathcal{M}$. 

\xhdr{Riemannian manifold} A \textit{Riemannian manifold} is a pair $(\mathcal{M}, \mathbf{g})$, where $\mathcal{M}$ is a smooth manifold and $\mathbf{g}=(g_\mathbf{x})_{\mathbf{x}\in\mathcal{M}}$ is a \textit{Riemannian metric}, that is a family of smoothly varying inner products on tangent spaces, $g_\mathbf{x}:\mathcal{T}_\mathbf{x}\mathcal{M}\times\mathcal{T}_\mathbf{x}\mathcal{M}\rightarrow\mathbb{R}$.
Riemannian metrics can be used to measure distances on manifolds. 

\xhdr{Distances and geodesics} Let $(\mathcal{M}, \mathbf{g})$ be a Riemannian manifold. For $\mathbf{v}\in\mathcal{T}_\mathbf{x}\mathcal{M}$, define the norm of $\mathbf{v}$ by $||\mathbf{v}||_\mathbf{g} \coloneqq \sqrt{g_\mathbf{x}(\mathbf{v}, \mathbf{v})}$.
Suppose $\gamma: [a, b] \rightarrow \mathcal{M}$ is a smooth curve on $\mathcal{M}$. Define the length of $\gamma$ by: 
$$L(\gamma) \coloneqq \int_{a}^{b} ||\gamma'(t)||_\mathbf{g}dt.$$
Now with this definition of length, every connected Riemannian manifold  becomes a metric space and the \textit{distance} $d:\mathcal{M}\times\mathcal{M}\rightarrow[0,\infty)$ is defined as:
$$d(\mathbf{x}, \mathbf{y})\coloneqq\mathrm{inf}_\gamma\{L(\gamma): \gamma \text{ is a continuously differentiable curve joining }\mathbf{x}\text{ and }\mathbf{y}\}.$$
\textit{Geodesic} distances are a generalization of straight lines (or shortest paths) to non-Euclidean geometry. 
A curve $\gamma:[a,b]\rightarrow\mathcal{M}$ is \textit{geodesic} if $d(\gamma(t), \gamma(s))=L(\gamma|_{[t,s]})\forall(t,s)\in[a, b] (t<s)$.

\xhdr{Parallel transport} \textit{Parallel transport} is a generalization of translation to non-Euclidean geometry.
Given a smooth manifold $\mathcal{M}$, parallel transport $P_{\mathbf{x}\rightarrow\mathbf{y}}(\cdot)$ maps a vector $\mathbf{v}\in\mathcal{T}_\mathbf{x}\mathcal{M}$ to $P_{\mathbf{x}\rightarrow\mathbf{y}}(\mathbf{v})\in\mathcal{T}_\mathbf{y}\mathcal{M}$. 
In Riemannian geometry, parallel transport preserves the Riemannian metric tensor (norm, inner products...).

\xhdr{Curvature}
At a high level, curvature measures how much a geometric object such as surfaces deviate from a flat plane. 
For instance, the Euclidean space has zero curvature while spheres have positive curvature. 
We illustrate the concept of curvature in Figure \ref{fig:manifold_curvature}.

\subsection{Hyperbolic geometry}
\xhdr{Hyperbolic space} The hyperbolic space in $d$ dimensions is the unique complete, simply connected $d-$dimensional Riemannian manifold with constant negative sectional curvature.
There exist several models of hyperbolic space such as the Poincar\'e model or the hyperboloid model (also known as the Minkowski model or the Lorentz model).
In what follows, we review the Poincar\'e and the hyperboloid models of hyperbolic space as well as connections between these two models. 

\subsubsection{Poincar\'e ball model}
Let $||.||_2$ be the Euclidean norm. 
The Poincar\'e ball model with unit radius and constant negative curvature $-1$ in $d$ dimensions is the Riemannian manifold $(\mathbb{D}^{d,1},(g_\mathbf{x})_\mathbf{x})$ where
$$\mathbb{D}^{d,1}\coloneqq\{\mathbf{x}\in\mathbb{R}^d:||\mathbf{x}||^2<1\},$$
and
$$g_\mathbf{x}=\lambda_\mathbf{x}^2I_d,$$
where $\lambda_\mathbf{x}\coloneqq\frac{2}{1-||\mathbf{x}||_2^2}$ and $I_d$ is the identity matrix.  
The induced distance between two points $(\mathbf{x}, \mathbf{y})$ in $\mathbb{D}^{d,1}$ can be computed as:
$$d_\mathbb{D}^1(\mathbf{x}, \mathbf{y})=\mathrm{arcosh}\bigg(1 + 2\frac{||\mathbf{x}-\mathbf{y}||_2^2}{(1-||\mathbf{x}||_2^2)(1-||\mathbf{y}||_2^2)}\bigg).$$

\begin{figure}[t]
  \begin{center}
    \includegraphics[width=\textwidth]{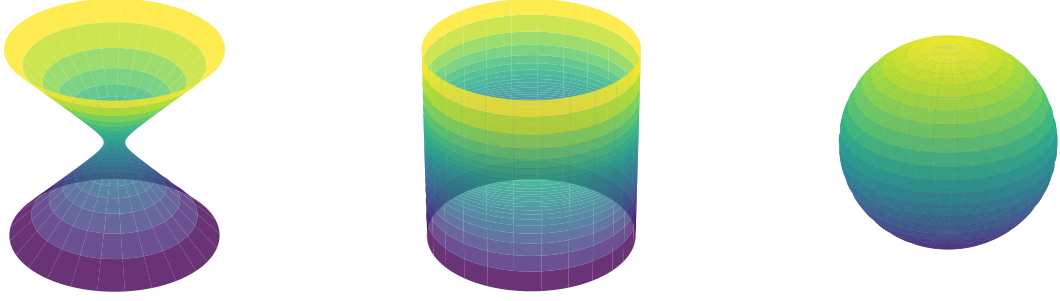}
  \end{center}
  \caption{From left to right: a surface of negative curvature, a surface of zero curvature, and a surface of positive curvature.}
  \label{fig:manifold_curvature}
\end{figure}
\subsubsection{Hyperboloid model}
\xhdr{Hyperboloid model}
Let $\langle.,.\rangle_{\mathcal{L}}:\mathbb{R}^{d+1}\times\mathbb{R}^{d+1}\rightarrow\mathbb{R}$ denote the Minkowski inner product,
$$\langle\mathbf{x},\mathbf{y}\rangle_\mathcal{L}\coloneqq-x_0y_0+x_1y_1+\ldots+x_dy_d.$$
The hyperboloid model with unit imaginary radius and constant negative curvature $-1$ in $d$ dimensions is defined as the Riemannian manifold $(\mathbb{H}^{d,1},(g_\mathbf{x})_\mathbf{x})$ where
$$\mathbb{H}^{d,1}\coloneqq\{\mathbf{x}\in\mathbb{R}^{d+1}:\langle\mathbf{x},\mathbf{x}\rangle_\mathcal{L}=-1,x_0>0\},$$
and
\begin{small}
$$g_\mathbf{x}\coloneqq\begin{bmatrix}
-1 & & & \\
& 1 & & \\
& & \ddots & \\
& & & 1 
\end{bmatrix}.$$
\end{small}
The induced distance between two points $(\mathbf{x}, \mathbf{y})$ in $\mathbb{H}^{d,1}$ can be computed as:
$$d^1_\mathcal{L}(\mathbf{x},\mathbf{y})=\mathrm{arcosh}(-\langle\mathbf{x},\mathbf{y}\rangle_\mathcal{L}).$$

\xhdr{Geodesics} 
We recall a result that gives the unit speed geodesics in the hyperboloid model with curvature $-1$ \cite{robbin2011introduction}.
This result can be used to show Propositions \ref{cor:geodesic} and \ref{cor:logexp} for the hyperboloid manifold with negative curvature $-1/K$, and then learn $K$ as a model parameter in \name. 
\begin{restatable}[]{thm}{unitspeed}
\label{thm:unit_speed}
Let $\mathbf{x}\in\mathbb{H}^{d,1}$ and $\mathbf{u}\in\mathcal{T}_\mathbf{x}\mathbb{H}^{d,1}$ unit-speed (i.e. $\langle\mathbf{u},\mathbf{u}\rangle_\mathcal{L}=1$).
The unique unit-speed geodesic $\gamma_{\mathbf{x}\rightarrow\mathbf{u}}:[0,1]\rightarrow\mathbb{H}^{d,1}$ such that $\gamma_{\mathbf{x}\rightarrow\mathbf{u}}(0)=\mathbf{x}$ and $\dot\gamma_{\mathbf{x}\rightarrow\mathbf{u}}(0)=\mathbf{u}$ is given by:
$$\gamma_{\mathbf{x}\rightarrow\mathbf{u}}(t)=\mathrm{cosh}(t)\mathbf{x}+\mathrm{sinh}(t)\mathbf{u}.$$
\end{restatable}

\xhdr{Parallel Transport}
If two points $\mathbf{x}$ and $\mathbf{y}$ on the hyperboloid $\mathbb{H}^{d,1}$ are connected by a geodesic, then the parallel transport of a tangent vector $\mathbf{v} \in \mathcal{T}_{\mathbf{x}}\mathbb{H}^{d,1}$ to the tangent space $\mathcal{T}_{\mathbf{y}}\mathbb{H}^{d,1}$ is:
\begin{equation}
    P_{\mathbf{x}\rightarrow \mathbf{y}}(\mathbf{v})=\mathbf{v}-\frac{\langle\mathrm{log}_\mathbf{x}(\mathbf{y}),\mathbf{v}\rangle_\mathcal{L}}{d^1_\mathcal{L}(\mathbf{x}, \mathbf{y})^2}(\mathrm{log}_\mathbf{x}(\mathbf{y})+\mathrm{log}_\mathbf{y}(\mathbf{x})).
\end{equation}

\xhdr{Projections}
Finally, we recall projections to the hyperboloid manifold and its corresponding tangent spaces. 
A point $\mathbf{x}=(x_0,\mathbf{x}_{1:d})\in\mathbb{R}^{d+1}$ can be projected on the hyperboloid manifold $\mathbb{H}^{d,1}$ with:
\begin{equation}
    \Pi_{\mathbb{R}^{d+1}\rightarrow\mathbb{H}^{d,1}}(\mathbf{x})\coloneqq(\sqrt{1+||\mathbf{x}_{1:d}||_2^2}, \mathbf{x}_{1:d}).\label{eq:proj_hyp}
\end{equation}
Similarly, a point $\mathbf{v}\in\mathbb{R}^{d+1}$ can be projected on $\mathcal{T}_\mathbf{x}\mathbb{H}^{d,1}$ with:
\begin{equation}
    \Pi_{\mathbb{R}^{d+1}\rightarrow\mathcal{T}_\mathbf{x}\mathbb{H}^{d,1}}(\mathbf{v})\coloneqq\mathbf{v}+\langle\mathbf{x},\mathbf{v}\rangle_\mathcal{L}\mathbf{x}.\label{eq:proj_tan}
\end{equation}
In practice, these projections are very useful for optimization purposes as they constrain embeddings and tangent vectors to remain on the manifold and tangent spaces.

\subsubsection{Connection between the Poincar\'e ball model and the hyperboloid model}
\begin{figure}[t]
  \begin{center}
    \includegraphics[width=0.4\textwidth]{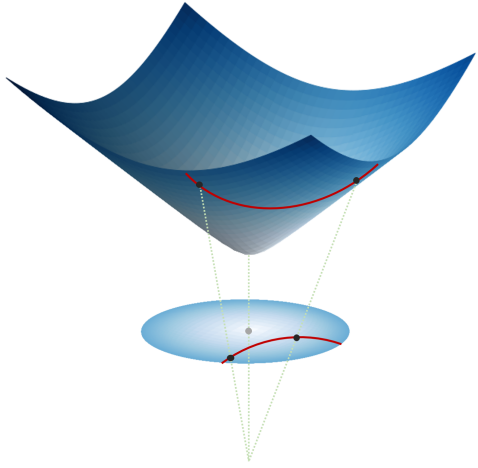}
  \end{center}
  \caption{Illustration of the hyperboloid model (top) in 3 dimensions and its connection to the Poincar\'e disk (bottom).}
  \label{fig:hyperboloid}
\end{figure}
While the hyperboloid model tends to be more stable for optimization than the Poincar\'e model \cite{nickel2018learning}, the Poincar\'e model is very interpretable and embeddings can be directly visualized on the Poincar\'e disk. 
Fortunately, these two models are isomorphic (\emph{cf.} Figure \ref{fig:hyperboloid}) and there exist a diffeomorphism $\Pi_{\mathbb{H}^{d,1}\rightarrow\mathbb{D}^{d,1}}(\cdot)$ mapping one space onto the other:
\begin{align}
    \Pi_{\mathbb{H}^{d,1}\rightarrow\mathbb{D}^{d,1}}(x_0,\ldots,x_d)&=\frac{(x_1,\ldots,x_d)}{x_0+1}\\
    \text{and }\ \  \Pi_{\mathbb{D}^{d,1}\rightarrow\mathbb{H}^{d,1}}(x_1,\ldots,x_d)&=\frac{(1+||\mathbf{x}||_2^2,2x_1,\ldots,2x_d)}{1-||\mathbf{x}||_2^2}.
\end{align}

\section{Proofs of Results}
\subsection{Hyperboloid model of hyperbolic space}
For completeness, we re-derive results of hyperbolic geometry for any arbitrary curvature. 
Similar derivations can be found in the literature \cite{wilson2014spherical}.
\geodesics*
\begin{proof}
Using theorem \ref{thm:unit_speed}, we know that the unique unit-speed geodesic $\gamma_{\mathbf{y}\rightarrow\mathbf{u}}(.)$ in $\mathbb{H}^{d,1}$ must satisfy
\begin{align*}
    \gamma_{\mathbf{y}\rightarrow\mathbf{u}}(0)=\mathbf{y} \ \text{ and }\ \dot{\gamma}_{\mathbf{y}\rightarrow\mathbf{u}}(0)=\mathbf{u} \ \text{and}\ 
    \frac{d}{dt}\langle\dot{\gamma}_{\mathbf{y}\rightarrow\mathbf{u}}(t),\dot{\gamma}_{\mathbf{y}\rightarrow\mathbf{u}}(t)\rangle_\mathcal{L}=0 \ \forall t,
\end{align*}
and is given by
$$\gamma_{\mathbf{y}\rightarrow\mathbf{u}}(t)=\mathrm{cosh}(t)\mathbf{y}+\mathrm{sinh}(t)\mathbf{u}.$$
%
Now let $\mathbf{x}\in\mathbb{H}^{d,K}$ and $\mathbf{u}\in\mathcal{T}_\mathbf{x}\mathbb{H}^{d,K}$ be unit-speed
%
and denote $\gamma^K_{\mathbf{x}\rightarrow\mathbf{u}}(.)$ the unique unit-speed geodesic in $\mathbb{H}^{d,K}$ such that 
%
$\gamma^K_{\mathbf{x}\rightarrow\mathbf{u}}(0)=\mathbf{x} \ \text{ and }\ \dot{\gamma}^K_{\mathbf{x}\rightarrow\mathbf{u}}(0)=\mathbf{u}$. 
Let us define $\mathbf{y}:=\frac{\mathbf{x}}{\sqrt{K}}\in\mathbb{H}^{d,1}$ and $\phi_{\mathbf{y}\rightarrow\mathbf{u}}(t)=\frac{1}{\sqrt{K}}\gamma^K_{\mathbf{x}\rightarrow\mathbf{u}}(\sqrt{K}t)$. We have,
\begin{align*}
\phi_{\mathbf{y}\rightarrow\mathbf{u}}(0)=\mathbf{y} \ \text{ and }\ \dot{\phi}_{\mathbf{y}\rightarrow\mathbf{u}}(0)=\mathbf{u}, \
\end{align*}
and since $\gamma^K_{\mathbf{x}\rightarrow\mathbf{u}}(.)$ is the unique unit-speed geodesic in $\mathbb{H}^{d,K}$, we also have
$$\frac{d}{dt}\langle\dot{\phi}_{\mathbf{y}\rightarrow\mathbf{u}}(t),\dot{\phi}_{\mathbf{y}\rightarrow\mathbf{u}}(t)\rangle_\mathcal{L}=0 \ \forall t.$$
Furthermore, we have $\mathbf{y}\in\mathbb{H}^{d,1}$, $\mathbf{u}\in\mathcal{T}_{\mathbf{y}}\mathbb{H}^{d,1}$ as $\langle\mathbf{u},\mathbf{y}\rangle_\mathcal{L}=\frac{1}{\sqrt{K}}\langle\mathbf{u},\mathbf{x}\rangle_\mathcal{L}=0$ and $\langle\phi_{\mathbf{y}\rightarrow\mathbf{u}}(t),\phi_{\mathbf{y}\rightarrow\mathbf{u}}(t)\rangle_\mathcal{L}=-1\forall t$.
Therefore $\phi_{\mathbf{y}\rightarrow\mathbf{u}}(.)$ is a unit-speed geodesic in $\mathbb{H}^{d,1}$ and we get
$$\phi_{\mathbf{y}\rightarrow\mathbf{u}}(t)=\mathrm{cosh}(t)\mathbf{y}+\mathrm{sinh}(t)\mathbf{u}.$$
Finally, this leads to 
$$\gamma^K_{\mathbf{x}\rightarrow\mathbf{u}}(t)=\mathrm{cosh}(\frac{t}{\sqrt{K}})\mathbf{x}+\sqrt{K}\mathrm{sinh}(\frac{t}{\sqrt{K}})\mathbf{u}.$$

\end{proof}

\logexp*
\begin{proof}
We use a similar reasoning to that in Corollary 1.1 in \cite{ganea2018hyperbolic}.
%
Let $\gamma^K_{\mathbf{x}\rightarrow\mathbf{v}}(.)$ be the unique geodesic such that $\gamma^K_{\mathbf{x}\rightarrow\mathbf{v}}(0)=\mathbf{x}$ and $\dot{\gamma}^K_{\mathbf{x}\rightarrow\mathbf{v}}(0)=\mathbf{v}$.
Let us define $\mathbf{u}:=\frac{\mathbf{v}}{\ \ ||\mathbf{v}||_\mathcal{L}}$ where $||\mathbf{v}||_\mathcal{L}=\sqrt{\langle\mathbf{v},\mathbf{v}\rangle_\mathcal{L}}$ is the Minkowski norm of $\mathbf{v}$
and $$\phi^K_{\mathbf{x}\rightarrow\mathbf{u}}(t):=\gamma^K_{\mathbf{x}\rightarrow\mathbf{v}}\left(\frac{t}{\ \ ||\mathbf{v}||_\mathcal{L}}\right).$$
%
%
$\phi_{\mathbf{x}\rightarrow\mathbf{u}}(t)$ satisfies,
\begin{align*}
\phi^K_{\mathbf{x}\rightarrow\mathbf{u}}(0)=\mathbf{x} \ \text{ and }\ \dot{\phi}^K_{\mathbf{x}\rightarrow\mathbf{u}}(0)=\mathbf{u} \ \text{and}\ 
\frac{d}{dt}\langle\dot{\phi}^K_{\mathbf{x}\rightarrow\mathbf{u}}(t),\dot{\phi}^K_{\mathbf{x}\rightarrow\mathbf{u}}(t)\rangle_\mathcal{L}=0 \ \forall t.
\end{align*}
Therefore $\phi^K_{\mathbf{x}\rightarrow\mathbf{u}}(.)$ is a unit-speed geodesic in $\mathbb{H}^{d,K}$ and we get
$$\phi^K_{\mathbf{x}\rightarrow\mathbf{u}}(t)=\mathrm{cosh}(\frac{t}{\sqrt{K}})\mathbf{x}+\sqrt{K}\mathrm{sinh}(\frac{t}{\sqrt{K}})\mathbf{u}.$$
By identification, this leads to 
$$\gamma^K_{\mathbf{x}\rightarrow\mathbf{v}}(t)=\mathrm{cosh}\bigg(\frac{||\mathbf{v}||_\mathcal{L}}{\sqrt{K}}t\bigg)\mathbf{x}+\sqrt{K}\mathrm{sinh}\bigg(\frac{||\mathbf{v}||_\mathcal{L}}{\sqrt{K}}t\bigg)\frac{\mathbf{v}}{\ \ ||\mathbf{v}||_\mathcal{L}}.$$
We can use this result to derive exponential and logarthimic maps on the hyperboloid model.
We know that $\mathrm{exp}^K_\mathbf{x}(\mathbf{v})=\gamma^K_{\mathbf{x}\rightarrow\mathbf{v}}(1)$. 
Therefore we get,
$$\mathrm{exp}^K_\mathbf{x}(\mathbf{v})=\mathrm{cosh}\bigg(\frac{||\mathbf{v}||_\mathcal{L}}{\sqrt{K}}\bigg)\mathbf{x}+\sqrt{K}\mathrm{sinh}\bigg(\frac{||\mathbf{v}||_\mathcal{L}}{\sqrt{K}}\bigg)\frac{\mathbf{v}}{\ \ ||\mathbf{v}||_\mathcal{L}}.$$
%
Now let $\mathbf{y}=\mathrm{exp}^K_\mathbf{x}(\mathbf{v})$.
We have $\langle\mathbf{x},\mathbf{y}\rangle_\mathcal{L}=-K\mathrm{cosh}\bigg(\frac{||\mathbf{v}||_\mathcal{L}}{\sqrt{K}}\bigg)$ as $\langle\mathbf{x},\mathbf{x}\rangle_\mathcal{L}=-K$ 
and $\langle\mathbf{x},\mathbf{v}\rangle_\mathcal{L}=0$.
%
Therefore $\mathbf{y}+\frac{1}{K}\langle\mathbf{x},\mathbf{y}\rangle_\mathcal{L}\mathbf{x}=\sqrt{K}\mathrm{sinh}\bigg(\frac{||\mathbf{v}||_\mathcal{L}}{\sqrt{K}}\bigg)\frac{\mathbf{v}}{\ \ ||\mathbf{v}||_\mathcal{L}}$ and we get
%
$$\mathbf{v}=\sqrt{K}\mathrm{arsinh}\bigg(\frac{||\mathbf{y}+\frac{1}{K}\langle\mathbf{x},\mathbf{y}\rangle_\mathcal{L}\mathbf{x}||_\mathcal{L}}{\sqrt{K}}\bigg)\frac{\mathbf{y}+\frac{1}{K}\langle\mathbf{x},\mathbf{y}\rangle_\mathcal{L}\mathbf{x}}{||\mathbf{y}+\frac{1}{K}\langle\mathbf{x},\mathbf{y}\rangle_\mathcal{L}\mathbf{x}||_\mathcal{L}},$$
where $||\mathbf{y}+\frac{1}{K}\langle\mathbf{x},\mathbf{y}\rangle_\mathcal{L}||_\mathcal{L}$ is well defined since $\mathbf{y}+\frac{1}{K}\langle\mathbf{x},\mathbf{y}\rangle_\mathcal{L}\mathbf{x}\in\mathcal{T}_{\mathbf{x}}\mathbb{H}^{d,K}$.
Note that,
\begin{align*}
	||\mathbf{y}+\frac{1}{K}\langle\mathbf{x},\mathbf{y}\rangle_\mathcal{L}\mathbf{x}||_\mathcal{L}&=\sqrt{\langle\mathbf{y},\mathbf{y}\rangle_\mathcal{L}+\frac{2}{K}\langle\mathbf{x},\mathbf{y}\rangle_\mathcal{L}^2+\frac{1}{K^2}\langle\mathbf{x},\mathbf{y}\rangle_\mathcal{L}^2\langle\mathbf{x},\mathbf{x}\rangle_\mathcal{L}}\\
	&=\sqrt{-K+\frac{1}{K}\langle\mathbf{x},\mathbf{y}\rangle_\mathcal{L}^2}\\
	&=\sqrt{K}\sqrt{\langle\frac{\mathbf{x}}{\sqrt{K}},\frac{\mathbf{y}}{\sqrt{K}}\rangle_\mathcal{L}^2-1}\\
	&=\sqrt{K}\mathrm{sinh}\ \mathrm{arcosh}\bigg(-\langle\frac{\mathbf{x}}{\sqrt{K}},\frac{\mathbf{y}}{\sqrt{K}}\rangle_\mathcal{L}\bigg)
\end{align*}
as $\langle\frac{\mathbf{x}}{\sqrt{K}},\frac{\mathbf{y}}{\sqrt{K}}\rangle_\mathcal{L}\le-1$.
Therefore, we finally have
$$\mathrm{log}_{\mathbf{x}}^K(\mathbf{y})=\sqrt{K}\mathrm{arcosh}\bigg(-\langle\frac{\mathbf{x}}{\sqrt{K}},\frac{\mathbf{y}}{\sqrt{K}}\rangle_\mathcal{L}\bigg)\frac{\mathbf{y}+\frac{1}{K}\langle\mathbf{x},\mathbf{y}\rangle_\mathcal{L}\mathbf{x}}{||\mathbf{y}+\frac{1}{K}\langle\mathbf{x},\mathbf{y}\rangle_\mathcal{L}\mathbf{x}||_\mathcal{L}}.$$
\end{proof}

\subsection{Curvature}\label{appendix:curvature}

\begin{restatable}[]{lemma}{curv}
\label{thm:curv}
For any hyperbolic spaces with constant curvatures $-1/K, -1/K'>0$, and any pair of hyperbolic points  $(\mathbf{u}, \mathbf{v})$ embedded in $\mathbb{H}^{d,K}$, there exists a mapping $\phi: \mathbb{H}^{d,K} \rightarrow \mathbb{H}^{d,K'}$ to another pair of corresponding hyperbolic points in $\mathbb{H}^{d,K'}$, $(\phi(\mathbf{u}), \phi(\mathbf{v}))$ such that the Minkowski inner product is scaled by a constant factor.
\end{restatable}
\begin{proof}
For any hyperbolic embedding $\mathbf{x} = (x_0, x_1, \ldots, x_d) \in \mathbb{H}^{d,K}$ we have the identity:
$\langle \mathbf{x}, \mathbf{x} \rangle_\mathcal{L} = -x_0^2 + \sum_{i=1}^d x_i^2 = -K$. 
For any hyperbolic curvature $-1/K<0$, consider the mapping $\phi(\mathbf{x}) = \sqrt{\frac{K'}{K}} \mathbf{x}$. Then we have the identity
$\langle \phi(\mathbf{x}), \phi(\mathbf{x}) \rangle_\mathcal{L} = -K'$ and therefore $\phi(\mathbf{x}) \in \mathbb{H}^{d,K'}$.
For any pair $(\mathbf{u}$, $\mathbf{v})$, $\langle \phi(\mathbf{u}), \phi(\mathbf{v}) \rangle_\mathcal{L}
= \frac{K'}{K} \left( -\mathbf{u}_0 \mathbf{v}_0 + \sum_{i=1}^d \mathbf{u}_i \mathbf{v}_i \right)
= \frac{K'}{K}\langle \mathbf{u}, \mathbf{v} \rangle_\mathcal{L}$. 
The factor $\frac{K'}{K}$ only depends on curvature, but not the specific  embeddings.
\end{proof}

Lemma \ref{thm:curv} implies that given a set of embeddings learned in hyperbolic space $\mathbb{H}^{d,K}$, we can find embeddings in another hyperbolic space with different curvature, $\mathbb{H}^{d,K'}$, such that the Minkowski inner products for all pairs of embeddings are scaled by the same factor $\frac{K'}{K}$.

For link prediction tasks, Theorem \ref{thm:loss} shows that with infinite precision, the expressive power of hyperbolic spaces with varying curvatures is the same.

\vspace{5pt}
\loss*
\begin{proof}
The Fermi-Dirac decoder predicts that there exists a link between node $i$ and $j$ \emph{iif}
$\left[ e^{(d_\mathcal{L}^K(\mathbf{h}_i, \mathbf{h}_j)-r)/t }+ 1 \right]^{-1} \ge b$, where $b \in (0,1)$ is the threshold for determining existence of links.
The criterion is equivalent to $d_\mathcal{L}^K(\mathbf{h}_i, \mathbf{h}_j) \le r + t \log (\frac{1-b}{b})$.

Given $H = \{\mathbf{h_1}, \ldots, \mathbf{h_n}\}$, the graph $G_H$ reconstructed with the Fermi-Dirac decoder has the edge set $E_H = \left\{ (i, j) | d_\mathcal{L}^K(\mathbf{h}_i, \mathbf{h}_j) \le r + t \log (\frac{1-b}{b}) \right\}$.
Consider the mapping to $\mathbb{H}^{d,K'}$, $\phi(\mathbf{x}):= \ \sqrt{\frac{K'}{K}} \mathbf{x}$.
Let $H' = \{\phi(\mathbf{h_1}), \ldots, \phi(\mathbf{h_n})\}$.
By Lemma \ref{thm:curv}, 
\begin{equation}
    d_\mathcal{L}^{K'}(\phi(\mathbf{h}_i), \phi(\mathbf{h}_j)) = \sqrt{K'} \mathrm{arcosh} \left( - \frac{K'}{K} \langle\mathbf{h}_i,\mathbf{h}_j\rangle_\mathcal{L} / K' \right)
    = \sqrt{\frac{K'}{K}} d_\mathcal{L}^{K}(\mathbf{h}_i, \mathbf{h}_j).
\end{equation}
Due to linearity, we can find decoder parameter, $r'$ and $t'$ that satisfy 
$ r' + t' \log (\frac{1-b}{b}) = \sqrt{\frac{K'}{K}} ( r + t \log (\frac{1-b}{b}))$.
With such $r'$, $t'$, the criterion $d_\mathcal{L}^K(\mathbf{h}_i, \mathbf{h}_j) \le r + t \log (\frac{1-b}{b})$
is equivalent to $d_\mathcal{L}^{K'}(\phi(\mathbf{h}_i), \phi(\mathbf{h}_j)) \le r' + t' \log (\frac{1-b}{b})$.
Therefore, the reconstructed graph $G_{H'}$ based on the set of embeddings $H'$ is identical to $G_H$.
\end{proof}

\cut{
\subsection{Lorentzian Centroid Appoximation}
Aggregation via the Lorentzian Centroid Approximation can be defined as
\begin{align}
w_{ij} &= \textsc{Att}(\mathbf{h}^{\ell+1,E}_i, \mathbf{h}^{\ell+1,E}_j) & \text{(attention)}\\ 
    \mathbf{x}^{\ell+1,H}_i &= \sqrt{K_{\ell+1}}\frac{\sum_j w_{ij} \mathbf{x}_j}{|{||\sum_j w_{ij} \mathbf{x}_j||_\mathcal{L}|}} & \text{(aggregation)}
\end{align}}

\section{Experimental Details}
\subsection{Dataset statistics}
We detail the dataset statistics in Table \ref{table:dataset}.

\begin{table}
    \centering
	\begin{tabular}{r c c c c} 
	\hline
	Name & Nodes & Edges & Classes & Node features  \\
	\hline
	\textsc{Cora} & 2708 & 5429 & 7 & 1433 \\
	{\textsc{Pubmed}} & 19717 & 88651 & 3 & 500 \\
	{\textsc{Human PPI}} & 17598 & 5429 & 4 & 17 \\
	{\textsc{Airport}} & 3188 & 18631 & 4 & 4 \\
	{\textsc{Disease}} & 1044 & 1043 & 2 & 1000 \\
	{\textsc{Disease-M}} & 43193 & 43102 & 2 & 1000 \\
	\hline
	\end{tabular}
    \caption{Benchmarks' statistics}
    \label{table:dataset}
\end{table}

\subsection{Training details}
Here we present details of \name's training pipeline, with optimization and incorporation of DropConnect \cite{wan2013regularization}. 

\xhdr{Parameter optimization}
Recall that linear transformations and attention are defined on the tangent space of points. 
Therefore the linear layer and attention parameters are Euclidean.
For bias, there are two options: one can either define parameters in hyperbolic space, and use hyperbolic addition operation \cite{ganea2018hyperbolicNN}, or define parameters in Euclidean space, and use Euclidean addition after transforming the points into the tangent space.
Through experiments we find that Euclidean optimization is much more stable, and gives slightly better test performance compared to Riemannian optimization, if we define parameters such as bias in hyperbolic space.
Hence different from shallow hyperbolic embeddings, although our model and embeddings are hyperbolic, the learnable graph convolution parameters can be optimized via Euclidean optimization (Adam Optimizer \cite{kingma2014adam}), thanks to exponential and logarithmic maps.
Note that to train shallow Poincar\'e embeddings, we use Riemannian Stochastic Gradient Descent \cite{bonnabel2013stochastic,zhang2016riemannian}, since its model parameters are hyperbolic.
We use early stopping based on validation set performance with a patience of 100 epochs.

\xhdr{Drop connection}
Since rescaling vectors in hyperbolic space requires exponential and logarithmic maps, and is conceptually not tied to the inverse dropout rate in terms of re-normalizing L1 norm, Dropout cannot be directly applied in \name.
However, as a result of using Euclidean parameters in \name, DropConnect \cite{wan2013regularization}, the generalization of Dropout, can be used as a regularization. DropConnect randomly zeros out the neural network connections, \emph{i.e.} elements of the Euclidean parameters during training time, improving the generalization of \name.


\xhdr{Projections}
Finally, we apply projections similar to Equations \ref{eq:proj_hyp} and \ref{eq:proj_tan} for the hyperboloid model $\mathbb{H}^{d,K}$ after each feature transform and $\mathrm{log}$ or $\mathrm{exp}$ map, to constrain embeddings and tangent vectors to remain on the manifold and tangent spaces.

\cut{Trainable curvature also provides a way to skip otherwise bad local minima in optimization.
In Figure \ref{fig:sweep_c} we observe that the curve of performance against curvature has fluctuations, despite theoretically equivalent local minima, suggesting that suboptimal local minima are reached for some curvatures. 
Training the curvature jointly with \name allows effective escape of local minima during optimization.
\subsection{Visualization}
\begin{figure}[t]
    \centering
    \begin{subfigure}[b]{0.24\textwidth}
        \includegraphics[width=\textwidth]{figs/HGCN_attention_1.png}
        \label{fig:hgcn_att_1}
    \end{subfigure}
    \begin{subfigure}[b]{0.24\textwidth}
        \includegraphics[width=\textwidth]{figs/GAT_attention_1.png}
        \label{fig:hgcn_att_2}
    \end{subfigure}
    \begin{subfigure}[b]{0.24\textwidth}
        \includegraphics[width=\textwidth]{figs/HGCN_attention_2.png}
        \label{fig:gat_att_1}
    \end{subfigure}
    \begin{subfigure}[b]{0.24\textwidth}
        \includegraphics[width=\textwidth]{figs/GAT_attention_2.png}
    \label{fig:gat_att_2}
    \end{subfigure}
    \caption{Each graph represents a 2-hop neighborhood of the \textbf{\textsc{Disease-M.}} dataset. The red node is the node where we compute attention for. The darkness of the color for other nodes denote their hierarchy. The attention weights for nodes in neighborhood with respect to the red node are visualized by the intensity of edges, and the ligher the edge is, the less attention weight. .}
    \label{fig:att_weights_appendix}
\end{figure}
}

\bibliographystyle{plain}
\bibliography{ref}
